%% file: main.tex
\documentclass{article} % For LaTeX2e
\usepackage{iclr2027_conference,times}

\input{math_commands.tex}

\usepackage{hyperref}
\usepackage{url}
\usepackage{graphicx}
\usepackage{algorithm}
\usepackage{algpseudocode}
\usepackage{enumitem}
\usepackage{subcaption}
\usepackage{wrapfig}

\title{Scalable Robust MCTS with Anytime Deterministic Optimality Guarantees}

\author{%
\parbox{\textwidth}{%
\centering
\normalfont
\textbf{Tomer Massas}$^{1}$ \quad
\textbf{Vadim Indelman}$^{2,3}$\\[4pt]
{\small
$^{1}$Faculty of Electrical and Computer Engineering, Technion -- Israel Institute of Technology\\
$^{2}$Stephen B. Klein Faculty of Aerospace Engineering, Technion -- Israel Institute of Technology\\
$^{3}$Faculty of Data and Decision Sciences, Technion -- Israel Institute of Technology\\[3pt]
\texttt{tomer.massas@campus.technion.ac.il, vadim.indelman@technion.ac.il}
}
}}

\iclrfinalcopy % Uncomment for camera-ready version, but NOT for submission.

\usepackage{amsthm}
\usepackage{amssymb}
\input{macros}

\begin{document}

\maketitle
\lhead{Preprint}

\begin{abstract}
MDP planners often rely on nominal transition models that may differ from the true dynamics. 
Robust MDP planning accounts for this uncertainty through a set of admissible models, 
but doing so within the limited computation available online is challenging: 
the search tree grows exponentially with the horizon and each robust backup requires an additional optimization.
Existing online robust methods remain limited in scale and offer at most probabilistic guarantees.
We introduce, to the best of our knowledge, the first deterministic anytime guarantees for robust online MDP planning. 
We obtain these guarantees by restricting computation to the sampled search tree 
while deterministically bounding the contribution of unsampled trajectories.
For any fixed policy, we derive lower and upper bounds on the gap between its projected and true robust values under both rectangular and non-rectangular uncertainty sets. 
Under rectangular uncertainty, we propagate these bounds through the robust Bellman operator to bound the optimal robust value.
These certificates are computable from the current sampled tree, hold at every node, tighten as the sampled support expands, and converge to the robust values as full support is recovered.
We develop CAR-MCTS, a bound-guided MCTS planner that uses these deterministic certificates to guide exploration, prune provably suboptimal branches, and terminate once the optimal root action is certified.
The fixed-policy bounds also yield an anytime policy-evaluation mechanism that certifies the robust value of any given policy, independent of how it was obtained.
Experiments verify the deterministic certificates, their tightening under support expansion, substantial branch pruning, root-action certification, and convergence to the optimal robust policy.
To our knowledge, this is the first scalable robust MCTS framework with anytime deterministic guarantees for both online planning and policy evaluation, without enumerating the full state space.
\end{abstract}

\input{sections/introduction}
\input{sections/related_work}
\input{sections/preliminaries}
\input{sections/Simplified_Robust_MDP}
\input{sections/anytime_deterministic_guarantees}

\input{sections/algorithm}

\input{sections/results}
\input{sections/Acknowledgments}

\bibliography{iclr2027_conference}
\bibliographystyle{iclr2027_conference}

\clearpage
\appendix
\input{sections/appendix}

\end{document}

%% file: math_commands.tex
\usepackage{amsmath,amsfonts,bm}

\def\1{\bm{1}}

\DeclareMathAlphabet{\mathsfit}{\encodingdefault}{\sfdefault}{m}{sl}
\SetMathAlphabet{\mathsfit}{bold}{\encodingdefault}{\sfdefault}{bx}{n}

\newcommand{\E}{\mathbb{E}}

%% file: macros.tex
\usepackage{xparse}

\newif\ifTZsuperscript
\TZsuperscriptfalse
\TZsuperscripttrue   % uncomment this if you want P^T_t instead of P_{T,t}

\NewDocumentCommand{\Model}{m o}{%
\ifTZsuperscript
P^{#1}\IfValueT{#2}{_{#2}}%
\else
P_{#1\IfValueT{#2}{,#2}}%
\fi
}

\NewDocumentCommand{\ProjModel}{m o}{%
\ifTZsuperscript
\widehat{P}^{#1}\IfValueT{#2}{_{#2}}%
\else
\widehat{P}_{#1\IfValueT{#2}{,#2}}%
\fi
}

\NewDocumentCommand{\UncSet}{m o}{%
\ifTZsuperscript
\mathcal{P}^{#1}\IfValueT{#2}{_{#2}}%
\else
\mathcal{P}_{#1\IfValueT{#2}{,#2}}%
\fi
}

\NewDocumentCommand{\ProjUncSet}{m o}{%
\ifTZsuperscript
\widehat{\mathcal{P}}^{#1}\IfValueT{#2}{_{#2}}%
\else
\widehat{\mathcal{P}}_{#1\IfValueT{#2}{,#2}}%
\fi
}

\NewDocumentCommand{\RestrictMap}{m o}{%
\ifTZsuperscript
\mathcal{R}^{#1}\IfValueT{#2}{_{#2}}%
\else
\mathcal{R}_{#1\IfValueT{#2}{,#2}}%
\fi
}

\NewDocumentCommand{\PT}{o}{\IfValueTF{#1}{\Model{T}[#1]}{\Model{T}}}
\NewDocumentCommand{\PZ}{o}{\IfValueTF{#1}{\Model{Z}[#1]}{\Model{Z}}}

\NewDocumentCommand{\PTh}{o}{\IfValueTF{#1}{\ProjModel{T}[#1]}{\ProjModel{T}}}
\NewDocumentCommand{\PZh}{o}{\IfValueTF{#1}{\ProjModel{Z}[#1]}{\ProjModel{Z}}}

\NewDocumentCommand{\PThseq}{O{t} O{H-1}}{\ifTZsuperscript\widehat{P}^{T}_{#1:#2}\else\widehat{P}_{T,#1:#2}\fi}
\NewDocumentCommand{\PZhseq}{O{t+1} O{H}}{\ifTZsuperscript\widehat{P}^{Z}_{#1:#2}\else\widehat{P}_{Z,#1:#2}\fi}
\NewDocumentCommand{\PTseq}{O{t} O{H-1}}{\ifTZsuperscript P^{T}_{#1:#2}\else P_{T,#1:#2}\fi}
\NewDocumentCommand{\PZseq}{O{t+1} O{H}}{\ifTZsuperscript P^{Z}_{#1:#2}\else P_{Z,#1:#2}\fi}

\NewDocumentCommand{\UT}{o}{\IfValueTF{#1}{\UncSet{T}[#1]}{\UncSet{T}}}
\NewDocumentCommand{\UZ}{o}{\IfValueTF{#1}{\UncSet{Z}[#1]}{\UncSet{Z}}}

\NewDocumentCommand{\UTh}{o}{\IfValueTF{#1}{\ProjUncSet{T}[#1]}{\ProjUncSet{T}}}
\NewDocumentCommand{\UZh}{o}{\IfValueTF{#1}{\ProjUncSet{Z}[#1]}{\ProjUncSet{Z}}}

\NewDocumentCommand{\UThseq}{O{t} O{H-1}}{\UTh[#1:#2]}
\NewDocumentCommand{\UZhseq}{O{t+1} O{H}}{\UZh[#1:#2]}

\NewDocumentCommand{\UTseq}{O{t} O{H-1}}{\UT[#1:#2]}
\NewDocumentCommand{\UZseq}{O{t+1} O{H}}{\UZ[#1:#2]}

\NewDocumentCommand{\RT}{o}{\IfValueTF{#1}{\RestrictMap{T}[#1]}{\RestrictMap{T}}}
\NewDocumentCommand{\RZ}{o}{\IfValueTF{#1}{\RestrictMap{Z}[#1]}{\RestrictMap{Z}}}

\NewDocumentCommand{\RTseq}{O{t} O{j-1}}{\RT[#1:#2]}
\NewDocumentCommand{\RZseq}{O{t+1} O{j}}{\RZ[#1:#2]}

\newcommand{\UB}{\mathcal{U}_t^{\pi}(h_t)}
\newcommand{\LB}{\mathcal{L}_t^{\pi}(h_t)}
\newcommand{\UBrec}{\mathcal{U}_{t,rect}^{\pi^U}(h_t)}
\newcommand{\LBrec}{\mathcal{L}_{t,rect}^{\pi^L}(h_t)}

\newcommand{\RUDBa}{\mathcal{URDB}_{t}^{\pi^U}(h_t,a_t)}
\newcommand{\RLDBa}{\mathcal{LRDB}_{t}^{\pi^L}(h_t,a_t)}
\newcommand{\RUDB}{\mathcal{URDB}_{t}^{\pi^U}(h_t)}
\newcommand{\RLDB}{\mathcal{LRDB}_{t}^{\pi^L}(h_t)}
\newcommand{\PTht}{\PTh[t]}

\newcommand{\PTt}{\PT[t]}

\NewDocumentCommand{\PThk}{O{k}}{\PTh[#1]}
\NewDocumentCommand{\PZhk}{O{k+1}}{\PZh[#1]}
\NewDocumentCommand{\PTk}{O{k}}{\PT[#1]}
\NewDocumentCommand{\PZk}{O{k+1}}{\PZ[#1]}

\newcommand{\UTht}{\UTh[t]}

\newcommand{\UTt}{\UT[t]}

\newcommand{\RTt}{\RT[t]}

\newcommand{\argmaxover}[1]{\mathop{\mathrm{arg\,max}}\limits_{#1}}
\newcommand{\arginfover}[1]{\mathop{\mathrm{arg\,min}}\limits_{#1}}

\newcommand{\Sin}{\widehat{\mathcal{S}}}

\newcommand{\Afull}{{\mathcal{A}}}
\newcommand{\Sfull}{{\mathcal{S}}}

\NewDocumentCommand{\Tseq}{O{t} O{j}}{\mathcal{T}_{#1:#2}}
\NewDocumentCommand{\Tinseq}{O{t} O{j}}{\widehat{\mathcal{T}}_{#1:#2}}
\NewDocumentCommand{\Toutseq}{O{t} O{j}}{\mathcal{T}^{\mathrm{out}}_{#1:#2}}

\NewDocumentCommand{\tauseq}{O{t} O{j}}{\mathcal{\tau}_{#1:#2}}
\NewDocumentCommand{\tauinseq}{O{t} O{j}}{\widehat{\mathcal{\tau}}^{\mathrm{in}}_{#1:#2}}
\NewDocumentCommand{\tauoutseq}{O{t} O{j}}{\mathcal{\tau}^{\mathrm{out}}_{#1:#2}}

\newtheorem{theorem}{Theorem}
\newtheorem{lemma}{Lemma}
\newtheorem{corollary}{Corollary}

\newcommand{\restatedtheoremname}{}
\newtheorem*{restatedtheorem}{\restatedtheoremname}

\newenvironment{theoremrestated}[1]
{
  \renewcommand{\restatedtheoremname}{Theorem~\ref{#1} (restated)}
  \begin{restatedtheorem}
}
{
  \end{restatedtheorem}
}

\newcommand{\restatedlemmaname}{}
\newtheorem*{restatedlemma}{\restatedlemmaname}

\newenvironment{lemmarestated}[1]
{
  \renewcommand{\restatedlemmaname}{Lemma~\ref{#1} (restated)}
  \begin{restatedlemma}
}
{
  \end{restatedlemma}
}

\newcommand{\restatedcorollaryname}{}
\newtheorem*{restatedcorollary}{\restatedcorollaryname}

\newenvironment{corollaryrestated}[1]
{
  \renewcommand{\restatedcorollaryname}{Corollary~\ref{#1} (restated)}
  \begin{restatedcorollary}
}
{
  \end{restatedcorollary}
}

%% file: sections/introduction.tex
\section{Introduction}
% \TM{[maybe add a teaser figure to illustrate the approach]}\\
Markov decision processes (MDPs) provide a standard framework for sequential
decision making under uncertainty, with the objective of finding a policy that
maximizes the expected cumulative reward. As the state space or planning
horizon grows, solving the MDP globally becomes computationally intractable.
Online planning instead focuses computation on the current decision through a
lookahead from the current state. Common online methods, such as Monte Carlo
Tree Search (MCTS), address this lookahead through sampling, incrementally
constructing and selectively expanding a partial search tree
\citep{kocsis2006bandit}. By allocating simulations to a subset of future
outcomes, MCTS reduces the cost of online planning 
% while refining the action-value estimates at the root
while refining root action-value.
 
Model-based online planning relies on a generative transition model that may
differ from the true dynamics. Consequently, nominal value estimates may be
inaccurate and nominally optimal policies may become suboptimal under model
perturbations \citep{wiesemann2013robust}. Robust Markov decision processes (RMDPs) address this sensitivity by
optimizing against the worst-case model within a set of admissible transition models
\citep{iyengar2005robust,nilim2005robust}, but each robust backup then requires
an additional optimization over transition models. Repeating these backups
throughout the online search tree under a limited decision-time budget further
restricts how much of the tree can be explored. 
 
To improve scalability, robust RL has explored approaches
such as function approximation and policy learning from sampled interactions
under model uncertainty
\citep{tamar2014scaling,wang2021online,panaganti2022sample}.
These methods generally aim to learn a robust policy over the state space,
rather than allocating computation to the specific decision faced online.
More recently, sample-based methods have targeted robust online planning
directly, including Robust Sparse Sampling (RSS)
\citep{shazman2026online} and robust MCTS
\citep{dam2026online}. However, RSS scales exponentially with the planning
horizon, 
% while robust MCTS proposed by  \citet{dam2026online} relies on the full state space,
while \citet{dam2026online} robust MCTS relies on the full state space,
hindering scalability in real world problems that often exhibit large state spaces.

Beyond scalability, online planning acts on a partially explored tree, leaving
the effect of unexplored outcomes on the current decision unknown. This is
particularly important in the robust setting, where such outcomes may alter
the worst-case value and preferred action. For MCTS under a known model,
\citet{shah22or} provide finite-time probabilistic guarantees, which
\citet{dam2026online} 
% leverage in the analysis of their proposed Robust-Power-UCT, establishing finite-sample guarantees for robust MCTS.
leverage to establish finite-sample guarantees for Robust-Power-UCT.
Similarly, RSS provides probabilistic finite-sample guarantees for robust
sample-based planning \citep{shazman2026online}.
In a complementary line of work,
\citet{barenboim2023online,Barenboim26aij} establish deterministic anytime
guarantees for sample-based anytime POMDP planning in the nominal setting. However,
such deterministic anytime guarantees are not available for robust online planning, 
where the effect of unexplored outcomes must be assessed jointly
with model uncertainty.

%\VID{[mention recent work that provides finite-time guarantees for MCTS given a model: finite-time probabilistic guarantees  \citep{shah22or}, that \citep{dam2026online} leverages;  Moran's anytime deterministic guarantees  \citep{barenboim2023online, Barenboim26aij} that we build upon.]}

To address these gaps, we formulate robust online planning directly over the
sampled portion of the search tree while explicitly accounting for the
probability mass associated with unexplored outcomes. Restricting computation
to the sampled tree preserves the scalability required for online planning.
The resulting bounds quantify how much unexplored outcomes may still affect the
current solution. This remaining uncertainty guides further search, focusing
the limited planning budget where it can most effectively tighten the bounds
and ultimately certify the decision.

\textbf{Contributions.}
To the best of our knowledge, this is the first work to provide deterministic anytime bounds for sample-based online planning in robust MDPs. Our main
contributions are:
\begin{itemize}[leftmargin=2em,itemsep=5pt, topsep=0pt, parsep=0pt, partopsep=0pt]

    \item We derive deterministic anytime two-sided bounds on the robust value of a fixed policy under general and non-rectangular uncertainty sets. The bounds tighten monotonically as the sampled support expands and, in the non-robust setting, improve upon prior deterministic sample-based bounds by providing tighter lower and upper guarantees.

    \item Under rectangular uncertainty, we extend the fixed-policy bounds to
    deterministic anytime lower and upper certificates on the \textbf{optimal} robust
    value through robust Bellman recursion.

    \item We introduce \textbf{CAR-MCTS} (\emph{Certified Anytime Robust MCTS}),
    an anytime robust planner with deterministic optimality guarantees computed
    solely from the constructed partial search tree. By avoiding enumeration of the full 
    state space, CAR-MCTS remains applicable to large state spaces while
    retaining online computation. It 
    uses the upper certificate to guide exploration and 
    prunes provably suboptimal actions to focus computation on unresolved branches. 
    Once the bounds certify the optimal root action, the search can terminate 
    immediately and return the certified optimal decision. %\VID{[clarify scalability to large state spaces; clarify \textbf{CAR-MCTS} comes with anytime deterministic optimality guarantees calculated solely based on the constructed partial tree]}

    \item We empirically evaluate CAR-MCTS against existing robust online
    planners, demonstrating certificate tightening, pruning, root-action
    certification, and convergence toward the optimal robust solution.
\end{itemize}

%% file: sections/related_work.tex
\section{Related Work}

Robust online planning has received comparatively limited attention.
\citet{sharma2019robust} introduced RAMCP, a Monte Carlo planner for
risk-sensitive Bayes-adaptive planning, but its formulation relies on a prior
over a parametric family of transition models. \citet{kohankhaki2024monte}
developed UA-MCTS, which incorporates transition uncertainty into MCTS
exploration; its empirical evaluation considers deterministic environments,
while its theoretical analysis is limited to the corrupted-bandit setting. More closely related to our setting, \citet{shazman2026online} introduced
Robust Sparse Sampling (RSS), which extends Sparse Sampling to RMDPs using
Sample Average Approximation and provides finite-sample performance guarantees.
Its complexity is independent of the state-space cardinality but grows
exponentially with the planning horizon. Its analysis is further based on
non-negative rewards and assumes the existence of a zero-reward fail-state
that is absorbing under every model in the uncertainty set. \citet{dam2026online} developed a robust
MCTS planner that combines robust power-mean backups with exploration bonuses
and establishes finite-sample convergence throughout the search tree. While its tree search is selective, the robust transition optimization retains an explicit dependence on the full state space, thereby limiting scalability as the state-space cardinality grows. %\VID{[not scalable for large problems etc.]}.

%% file: sections/preliminaries.tex
\section{Preliminaries}

A robust MDP accounts for model uncertainty by replacing the single transition model with a set of admissible models. We consider a finite-horizon non-stationary RMDP
$\mathcal{M}^{\mathrm{rob}}=(\mathcal S,\mathcal A, \UTseq[0][H-1], r, H)$, where $\mathcal S$ and $\mathcal A$, are the finite state and action spaces. The set $\UTseq[0][H-1]$ contains admissible non-stationary transition model sequences $\PTseq[0][H-1]$, over the horizon $H$. The information available to the agent is summarized by the history of actions and states up to time $t$ which is denoted by $h_t\triangleq(a_{0:t-1}, s_{0:t})$. The reward function $r:\mathcal{S}\times\mathcal A \to [r_{min},r_{max}]$ is assumed to be bounded. A policy specifies how actions are selected based on the history. For a finite-horizon problem, a policy is a sequence $\pi={\{\pi_t}\}_{t=0}^{H-1}$, where each $\pi_t$ maps a history $h_t$ to an action: %$\pi_t(a | h_t) \triangleq \prob(\ActionRV{t}=a | h_t)$ 
$\pi_t : \mathcal H_t \rightarrow \mathcal A$. Given a sequence of nominal models $\mathring{P}_{0:H-1}^{T}$, such that the kernel at time step $k$ is $\mathring{P}_{k}^{T}:\mathcal{S}\times\mathcal{A} \rightarrow\Delta(\mathcal{S})$, Let $\UTseq$ denote a general set of admissible non-stationary transition
model sequences. One common choice is a
ball uncertainty set of the form:
% the non-rectangular uncertainty set over the planning horizon $H$ is defined as:
% 
{
\setlength{\abovedisplayskip}{6pt}
\setlength{\belowdisplayskip}{1pt}
\vspace{-4pt}
\begin{align}\label{eq:uncertainty_set_NR_transition}
    \UTseq
    &=
    \{
    \PTseq: 
    \ D 
    \left(
    \PTseq 
    ||
    \mathring{P}_{t:H-1}^{T}
    \right) 
    \le 
    \rho
    \},
\end{align}
}
\noindent where $D(\cdot,\cdot)$ denotes a distance between sequences of transition
kernels, potentially coupling deviations across time steps and
state-action pairs, and $\rho$ quantifies the uncertainty radius. The robust value function evaluates $\pi$ under the worst-case admissible model, and is defined as:
{
\setlength{\belowdisplayskip}{1pt}
\begin{align} \label{eq:robust value function non-stationary}
    V_t^{\pi,\mathrm{rob}}(h_t)
    =
    \inf_{
    \PTseq \in \UTseq
    }
    V_t^{\pi}(h_t; \PTseq),
\end{align}
}
where $V_t^{\pi}(h_t; \PTseq)$ is the value function given a sequence of models. To reduce clutter, when clear from context, we omit the explicit dependence on $\PTseq$, and write $V_t^{\pi}(h_t; \PTseq)$ as:
\vspace{-2pt}
\begin{align} \label{eq:value function}
    V_{t}^{\pi }( h_{t}) =
    r\bigl(s_t,\pi_t(h_t)\bigr)+
    \mathbb{E}^{\pi, \PT }_{s_{t+1:H-1}}
    \left[ \sum _{j=t+1}^{H-1} r\bigl(s_{j} ,\pi_j(h_j)\bigr) \mid h_t\right] .
\end{align}
The non-rectangular formulation in  \eqref{eq:uncertainty_set_NR_transition} and \eqref{eq:robust value function non-stationary}, forces the adversary to choose the model sequence jointly, so the admissibility of the model at one time may depend on choices at others. This coupling prevents a dynamic-programming decomposition of the Bellman backup, motivating the rectangular relaxation introduced by \citet{iyengar2005robust,nilim2005robust}. The rectangular set at time step $t$ is defined as $\mathcal{P}_t^{T,\mathrm{rect}}=\bigotimes_{\substack {s\in \mathcal{S} \\ a\in \mathcal{A}}} \UTt(s,a)$, and the state-action uncertainty set is:
\begin{align} \label{eq:rect uncertainty set}
    % \UTt^{,rect}
    % &=
    % \bigotimes_{\substack {s\in \mathcal{S} \\ a\in \mathcal{A}}}
    % \UTt(s,a), 
    % \\
    \UTt(s,a)
    &= 
    \Bigl \{ 
    \PTt(\cdot|s,a):  
    D 
    \left(
    \PTt(\cdot|s,a) 
    ||
    \mathring{P}_{t}^{T}(\cdot|s,a)
    \right) 
    \le 
    \rho_{s,a}
    \Bigr \}.
\end{align} 
Rectangularity is imposed across time and transition rows. First, nature may choose a
different worst-case model at each stage independently of its choices at
other stages. Second, the uncertainty set is $(s,a)$-rectangular, such that
each transition row $\PTt(\cdot\mid s,a)$ is selected independently. Since we represent the planning problem as a history tree, we additionally
adopt a history-rectangular interpretation: each history-action node
$(h_t,a)$ is associated with an independent local
uncertainty set $\UTt(s_t,a)$. Thus, nature may select different admissible
transition rows at distinct history nodes. This represents an adversary that may adapt its models choices locally rather than committing in advance to a fixed sequence of models. Under this rectangular formulation, the robust value in \eqref{eq:robust value function non-stationary} admits the recursive form:
% 
% Adapting \eqref{eq:robust value function non-stationary} to the rectangular setting, we denote the rectangular-robust value function, that admits robust DP:
% 
\begin{align} \label{eq:robust-rect-recursive-value}
    V_{t,\mathrm{rect}}^{\pi,\mathrm{rob}}(h_t)
    =
    r\bigl(s_t,\pi_t(h_t)\bigr)
    + \
    \inf_{\PTt \in \UTt}
    \mathbb{E}_{s_{t+1}}^{\PTt}
    \left[
    V_{t+1,\mathrm{rect}}^{\pi,\mathrm{rob}}(h_{t+1})
    \right].
\end{align}
The rectangular relaxation comes at a cost of a more conservative robust value: since $\mathcal{P}_{t:H} \subseteq  \mathcal{P}_{t:H}^{rect}$, enlarging the uncertainty set can only decrease the robust value, yielding $V_{t,\mathrm{rect}}^{\pi,\mathrm{rob}}(h_t)
\le V_{t}^{\pi,\mathrm{rob}}(h_t)$.

\textbf{Monte Carlo Tree Search.}
Monte Carlo Tree Search (MCTS) incrementally constructs a finite lookahead
tree rooted at the current history $h_t$ through repeated simulations
\citep{kocsis2006bandit}. During each simulation, at a decision node $h_k$,
$k \ge t$, an action is selected according to an exploration-strategy that
balances exploration and exploitation, and a successor state is sampled from
a generative transition model. If the resulting successor has not previously
been encountered, the tree is expanded with the new history
$h_{k+1}=(h_k,a_k,s_{k+1})$. The simulation then continues until the planning
horizon is reached, after which the resulting values are backed up along the
visited path. Repeating this process progressively expands the sampled portion
of the planning tree while concentrating computation on branches favored by
the search strategy. At decision time, the planner selects an action at the
root based on the resulting action-value estimates. In our setting, this
sampled tree defines the support of the simplified RMDP introduced in
Section~\ref{sec:Simplified Robust MDP}.

%% file: sections/Simplified_Robust_MDP.tex
\section{Simplified Robust MDP} \label{sec:Simplified Robust MDP}

Computing the exact optimal robust value online is generally intractable:
the planning tree grows exponentially with the horizon, while each robust
backup additionally requires optimization over the transition uncertainty set.
We therefore construct a projected planning problem defined only over the
portion of the tree represented by the current samples. Importantly, the projected transition probabilities are not renormalized; probability mass assigned to unsampled next-state outcomes is retained implicitly and will later be used to bound the approximation error deterministically. At a history-action node $h_{k+1}^-=(h_k,a_k)$, let
$\Sin(h_{k+1}^-)\subseteq\Sfull$ denote the sampled next-state support. For compactness, we use
$s_{t+1:j}\in\Sin(h_j^-)$ to denote that
$s_{k+1}\in\Sin(h_{k+1}^-)$ for every $k=t,\ldots,j-1$.
% At a history node $h_k$, let $\Ain(h_k)\subseteq\Afull$ denote the
% represented action support, and let $\Sin(h_{k+1}^-)\subseteq\Sfull$
% denote the sampled next-state support after selecting $a_k$, where
% $h_{k+1}^-=(h_k,a_k)$. A trajectory $
% \tau_{t:H}=(s_t,a_t,s_{t+1},a_{t+1},\ldots,a_{H-1},s_H)
% $ is represented in the sampled tree if $a_k\in\Ain(h_k)$ and $s_{k+1}\in\Sin(h_{k+1}^-)$ for every $k\in[t,H-1]$.
% We denote the set of all such trajectories by
% $\Tinseq[t][H](h_t)$.
% Thus, a trajectory leaves the sampled tree as soon as either one of its
% states or actions is not represented at the corresponding history node.

% ==================================
% Projected models
% ==================================
\subsection{Projected Transition Model}

For each history-action node $h_{t+1}^-=(h_t,a_t)$, we define the projected transition
model by restricting the full transition model to the sampled next-state
support:
{
\setlength{\abovedisplayskip}{4pt}
\setlength{\belowdisplayskip}{4pt}
\begin{align} \label{eq:projected-transition-model}
    \PTht(s_{t+1}\mid s_t,a_t;h_{t+1}^-)
    \triangleq
    \PTt(s_{t+1}\mid s_t,a_t),
    \qquad
    s_{t+1}\in\Sin(h_{t+1}^-).
\end{align}
}
The projected probabilities are not renormalized. Hence,
$\PTht$ is generally a sub-probability vector, where the missing mass corresponds exactly to transitions leaving the
sampled support. Restricting the transition model in this manner reduces each robust backup from an optimization over $\mathbb{R}^{|\Sfull|}$ to one over
$\mathbb{R}^{|\Sin(h_{t+1}^-)|}$. This reduction makes each robust backup depend only on the sampled support and constitutes the main computational saving of the projected planner.
Using these we define the simplified value function: 
{
\setlength{\abovedisplayskip}{0pt}
\setlength{\belowdisplayskip}{2pt}
\begin{align} \label{eq:simp value function}
    &\widehat{V}_t^{\pi}(h_t)
    =
    \sum_{j=t}^{H-1}
    \sum_{s_{t+1:j} \in \Sin(h_j^-)}
    \left[
    \prod_{k=t}^{j-1}
    \PThk\bigl(s_{k+1}\mid s_k,\pi_k(h_k)\bigr)
    \right]
    r\bigl(s_j,\pi_j(h_j)\bigr).
\end{align}
}
Unlike an ordinary value function, $\widehat V_t^\pi$ is not an expectation under a normalized transition model; it is the contribution of trajectories represented in the sampled tree.
% ==================================
% Projected uncertainty set
% ==================================
\subsection{Projected Uncertainty Set}
Simply imposing the original uncertainty constraint directly in the
lower-dimensional space does not, in general, preserve the set of models
induced by the original RMDP. For a full transition model $P_k^T$, we define
the restriction operator $\RT[k](\PTk;h_{k}) \triangleq \PThk(s_{k+1}| s_k,a_k), \ \ s_{k+1} \in \Sin(h_{k+1}^-)$.
% 
% \begin{align}
%     \RT[k](\PTk;h_{k})
%     \triangleq
%     \PThk(s_{k+1}| s_k,a_k), \quad s_{k+1} \in \Sin(h_{k+1}^-),
% \end{align}
% 
Let $\mathcal R_{t:H-1}^T$ apply this restriction over the planning
horizon. We define the projected uncertainty set as the image of the original
uncertainty set under this restriction:
\begin{align}
\label{eq:projected_uncertainty_set_NR_transition}
\UThseq(h_{H-1})
\triangleq
\left\{
\mathcal R^T_{t:H-1}(P^T_{t:H-1})
:
P^T_{t:H-1}\in\UTseq
\right\}.
\end{align}
Thus, every projected model in $\UThseq$ is the restriction of at least
one admissible full model; equivalently, its omitted components can be
completed to a model belonging to $\UTseq$. We define the resulting simplified-RMDP, as the tuple $\widehat{\mathcal{M}}^{\mathrm{rob}}=(\Sin,\Afull, \UThseq, r, H)$. Given a policy $\pi$, its simplified robust value is the worst-case projected
value over all projected models consistent with the original uncertainty set:
{
\setlength{\abovedisplayskip}{0pt}
\setlength{\belowdisplayskip}{0pt}
\begin{equation}
\label{eq:simplified-robust-value-function-non-stationary}
\widehat V_t^{\pi,\mathrm{rob}}(h_t)
=
\inf_{\PThseq\in\UThseq}
\widehat V_t^\pi(h_t;\PThseq).
\end{equation}
}
Thus, $\widehat V_t^{\pi,\mathrm{rob}}$ applies the same worst-case model
criterion as the original robust value, but evaluates only the trajectories
represented in the sampled tree.

%% file: sections/anytime_deterministic_guarantees.tex
\section{Anytime Deterministic Guarantees}
Having defined the simplified RMDP, we now characterize the approximation induced by restricting planning to the sampled tree. We first derive deterministic anytime certificates for the robust value of an arbitrary fixed policy under general uncertainty sets. We then exploit rectangularity to propagate these certificates recursively through the robust Bellman optimality operator and obtain deterministic bounds on the optimal robust value.
% 
% ==================================
% Fixed-Policy Guarantees
% ==================================
\subsection{Fixed-Policy Guarantees}
We first consider an arbitrary policy \(\pi\) and quantify the error introduced by evaluating it on the simplified RMDP rather than the full RMDP. For a projected model sequence \(\widehat P_{t:H-1}^T\), we define the cumulative in-support mass: $m_t^{\pi}\left(\PThseq \right) \triangleq \sum_{j=t}^{H-1} \sum_{s_{t+1:j} \in \Sin(h_j^-)} \prod_{k=t}^{j-1} \PThk\bigl(s_{k+1}\mid s_k,\pi_k(h_k)\bigr)$. At each future step \(j\), the inner sum is the probability mass that remains within the sampled tree up to \(j\). Hence, \(m_t^\pi\) measures the cumulative coverage of the policy-induced future by the current sampled tree.
% 
% ==================================
% Theorem 1 - foxed policy
% ==================================
\begin{theorem}[Fixed-policy deterministic bounds]\label{thm:bounds}
Let $h_t$ be the history at time $t$, and $V_{t}^{\pi,\mathrm{rob}}(h_t)$ and $\widehat V_t^{\pi,\mathrm{rob}}(h_t)$ denote the robust values of the same policy \(\pi\) in the full \eqref{eq:robust value function non-stationary} and simplified \eqref{eq:simplified-robust-value-function-non-stationary} RMDPs, respectively. For any transition uncertainty set $\UTseq$ and its projected counterpart $\UThseq$:
\begin{align}
    \LB
    &\leq
     V_t^{\pi,\mathrm{rob}}(h_t) - \widehat V_t^{\pi,\mathrm{rob}}(h_t)
    \ \leq \
    \UB,
    \\[0.5em]
    \llap{\text{where}\qquad}
    \UB
    &\triangleq
    \sup\limits_{\PThseq \in \UThseq}
    \left[
    r_{max}
    \biggl (
    H-t-
    m_t^{\pi}\left(\PThseq \right)
    \biggr)\right],
    \\
    \LB
    &\triangleq
    \inf\limits_{\PThseq \in \UThseq}
    \left[
    r_{min}
    \biggl (
    H-t-
    m_t^{\pi}\left(\PThseq \right)
    \biggr)\right].
\end{align}
\end{theorem}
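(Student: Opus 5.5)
The plan is to prove a pointwise decomposition for a single full model sequence, and then pass to the infimum on both sides via the restriction map $\mathcal R^T_{t:H-1}$.

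\textbf{Step 1: pointwise decomposition.} Fix any admissible $\PTseq\in\UTseq$ and let $\PThseq=\mathcal R^T_{t:H-1}(\PTseq)$. Expanding \eqref{eq:value function} as a sum over $j$ of expected rewards at depth $j$, I split the trajectories $s_{t+1:j}$ into two classes:
\begin{itemize}
\item those lying entirely in the sampled support, $s_{t+1:j}\in\Sin(h_j^-)$;
\item those that leave the support at some step.
\end{itemize}
On the first class the full and projected probabilities coincide by \eqref{eq:projected-transition-model}, so this part equals $\widehat V_t^\pi(h_t;\PThseq)$. For the second class, the total probability mass at depth $j$ is $1-\sum_{s_{t+1:j}\in\Sin(h_j^-)}\prod_{k=t}^{j-1}\PThk(s_{k+1}\mid s_k,\pi_k(h_k))$. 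Each reward in this class lies in $[r_{min},r_{max}]$. Summing over $j=t,\dots,H-1$, the out-of-support masses add to $(H-t)-m_t^\pi(\PThseq)$. This gives
\[
r_{min}\bigl(H-t-m_t^\pi(\PThseq)\bigr)\le V_t^\pi(h_t;\PTseq)-\widehat V_t^\pi(h_t;\PThseq)\le r_{max}\bigl(H-t-m_t^\pi(\PThseq)\bigr).
\]
The $j=t$ term has in-support mass $1$ (empty product), which is consistent with the $H-t$ count.

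\textbf{Step 2: upper bound.} For any $\PTseq$, the pointwise inequality gives
\[
V_t^\pi(h_t;\PTseq)\le \widehat V_t^\pi(h_t;\PThseq)+r_{max}(H-t-m_t^\pi(\PThseq)).
\]
I bound the right side by $\widehat V_t^\pi(h_t;\PThseq)+\UB$, which is valid because the image $\PThseq$ lies in $\UThseq$ by definition \eqref{eq:projected_uncertainty_set_NR_transition}. The infimum over $\PTseq$ of the left side is $V_t^{\pi,\mathrm{rob}}$. On the right, surjectivity of the restriction onto $\UThseq$ means that the infimum over $\PTseq$ of $\widehat V_t^\pi(h_t;\mathcal R(\PTseq))$ equals $\widehat V_t^{\pi,\mathrm{rob}}$. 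Hence $V^{\pi,\mathrm{rob}}_t-\widehat V^{\pi,\mathrm{rob}}_t\le\UB$.

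\textbf{Step 3: lower bound.} Symmetrically,
\[
V_t^\pi(h_t;\PTseq)\ge \widehat V_t^\pi(h_t;\PThseq)+\LB\ge \widehat V_t^{\pi,\mathrm{rob}}(h_t)+\LB.
\]
Taking the infimum over $\PTseq$ gives the lower bound.

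The main subtlety is Step 1 combined with the correct use of the image-set definition. The restriction map must be onto $\UThseq$, and every projected model must come from some full admissible model. This is exactly why the projected set is defined as an image rather than by imposing the constraint in the lower-dimensional space. No rectangularity is needed, since the argument never decomposes the infimum over time.

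I should also check that the non-renormalized sub-probabilities and the history-dependent supports $\Sin(h_{k+1}^-)$ make the partition of trajectories well defined. A trajectory exits at its first out-of-support step, and all its subsequent rewards are bounded crudely. Those later rewards are already counted in the $1-(\text{in-mass at depth }j)$ terms for each later $j$, so nothing needs to be tracked separately.
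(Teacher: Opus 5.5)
Your proposal is correct and follows essentially the same route as the paper. Both arguments rewrite $\widehat V^{\pi,\mathrm{rob}}_t$ as an infimum over full models via the image-set definition (Lemma~\ref{lem:Rmap}), cancel the in-support terms, and bound the out-of-support contribution at each depth by the reward extremes times one minus the in-support mass; you do the passage between infima inline in Steps 2--3, where the paper uses Lemmas~\ref{lem:minmax} and~\ref{lem:minMin}. The one difference is ordering: you bound pointwise before optimizing, which handles the signs of $r_{\max}$ and $r_{\min}$ automatically and gives the main-text form directly, whereas the paper optimizes the difference first and then splits on sign.
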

\vspace{-0.5em}
The proof is provided in the Appendix \ref{thm:thm_1_proof}.
The quantity \(H-t-m_t^\pi\) is the cumulative probability mass not represented by the sampled tree over the remaining horizon. Since the reward is bounded as \(r_{\min}\le r(s,a)\le r_{\max}\), Theorem \ref{thm:bounds} bounds the value contribution of the unsampled portion of the tree using the reward range. Importantly, the result does not rely on rectangularity or on a particular structure of the transition uncertainty set, and thus applies to general robust MDP formulations. 
% 
% ==================================
% corollary 1 
% ==================================
\begin{corollary}[Asymptotic convergence] \label{cor:Bound-diminishing}
For any policy $\pi$, as the sampled state support expands to the full state space at every node $\Sin(h_k^-) \uparrow \Sfull, \ k=t,\ldots, H$, the simplified robust value \eqref{eq:simplified-robust-value-function-non-stationary} convergences to the true robust value \eqref{eq:robust value function non-stationary}:
\begin{align}
    \widehat V_t^{\pi,\mathrm{rob}}(h_t)
    \xrightarrow{
    % \substack{
    % \Sin \rightarrow \Sfull}
    }
    V_t^{\pi,\mathrm{rob}}(h_t).
\end{align}
\end{corollary}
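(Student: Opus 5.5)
The plan is to derive the corollary directly from Theorem~\ref{thm:bounds}: it suffices to show that both certificates $\LB$ and $\UB$ tend to zero as $\Sin(h_k^-)\uparrow\Sfull$ at every node. Once they do, the sandwich
\[
\LB \le V_t^{\pi,\mathrm{rob}}(h_t)-\widehat V_t^{\pi,\mathrm{rob}}(h_t)\le \UB
\]
forces the gap to vanish. Since the bounds are expressed through $H-t-m_t^\pi(\PThseq)$, the task reduces to controlling the missing mass \emph{uniformly} over the projected uncertainty set $\UThseq$. Uniformity is needed because both bounds involve a $\sup$ or $\inf$ over that set.

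\textbf{Step 1 (per-model identity).} For a full admissible sequence $\PTseq\in\UTseq$ with restriction $\PThseq=\mathcal R^T_{t:H-1}(\PTseq)$, I would write $H-t-m_t^\pi$ as a sum over $j=t,\dots,H-1$ of the probability, under $\PTseq$ and $\pi$, that the trajectory $s_{t+1:j}$ leaves the sampled tree before step $j$. Because the full kernels are normalized, this is immediate: the $j$-th term of $m_t^\pi$ is the probability of staying in support up to $j$, and one minus it is the escape probability. This gives
\[
0\le H-t-m_t^\pi(\PThseq)\le \sum_{j=t}^{H-1}\sum_{k=t}^{j-1}\Pr\bigl(s_{k+1}\notin\Sin(h_{k+1}^-),\ s_{t+1:k}\in\Sin\bigr)
\]
by a union bound over the first exit time.

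\textbf{Step 2 (full support).} The history tree under $\pi$ from $h_t$ is finite, since $\Sfull$ is finite and $H<\infty$. Hence the expansion $\Sin(h_k^-)\uparrow\Sfull$ reaches $\Sin(h_k^-)=\Sfull$ at every relevant node after finitely many enlargements. At that point each escape probability is identically zero for every model, so $m_t^\pi\equiv H-t$ on all of $\UThseq$. Then $\UB=\LB=0$ exactly, and the restriction map is the identity, giving $\widehat V_t^{\pi,\mathrm{rob}}=V_t^{\pi,\mathrm{rob}}$. For a genuine limiting statement along a monotone sequence of supports, I would instead bound each escape term by $\sup_{P\in\UTseq}P(\Sfull\setminus\Sin(h_{k+1}^-)\mid s_k,a_k)$. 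This is at most $|\Sfull\setminus\Sin|$, which vanishes since $\Sfull$ is finite. Multiplying by $\max(|r_{\min}|,|r_{\max}|)$ and summing over finitely many $(j,k)$ gives a uniform bound that goes to $0$.

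\textbf{Main obstacle.} The delicate point is uniformity over the uncertainty set: the $\sup$/$\inf$ in $\UB,\LB$ ranges over possibly infinite $\UThseq$, and the projected set itself changes as the support grows. Expressing everything through full admissible models via the definition of $\UThseq$ as an image handles this. Every projected model lifts to a normalized full model, so the missing mass is an honest escape probability bounded independently of the particular model. Finiteness of $\Sfull$ then makes the convergence trivial, in fact exact after finitely many steps.
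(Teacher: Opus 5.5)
Your proposal is correct and follows essentially the same route as the paper: invoke the sandwich of Theorem~\ref{thm:bounds} and show that the missing mass $H-t-m_t^\pi$ vanishes, so that both $\UB$ and $\LB$ tend to zero. Two of your steps make explicit what the paper's proof leaves implicit: lifting every projected model to a normalized full model gives the bound uniformly over the changing set $\UThseq$, and finiteness of $\Sfull$ makes the supports, and hence the equality $\widehat V_t^{\pi,\mathrm{rob}}(h_t)=V_t^{\pi,\mathrm{rob}}(h_t)$, exact after finitely many enlargements.
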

The proof is provided in the Appendix \ref{cor:cor_1_proof}.
% 
% ==================================
% corollary 2 
% ==================================
\begin{corollary}[Monotonic tightening under support expansion]
\label{cor:monotonic-tightening}
Let $n$ denote the MCTS simulation index. Since the search tree is expanded
incrementally, the sampled supports are nested across simulations:
$
\Sin^{(n)}(h_k^-)
\subseteq
\Sin^{(n+1)}(h_k^-),
\ k=t,\ldots,H$.
Thus the deterministic bounds of Theorem~\ref{thm:bounds} tighten
monotonically:
{
\setlength{\abovedisplayskip}{4pt}
\setlength{\belowdisplayskip}{0pt}
\begin{align}
\left|\mathcal{L}_{t}^{\pi,(n+1)}(h_t)\right|
\le
\left|\mathcal{L}_{t}^{\pi,(n)}(h_t)\right|,
\qquad
\left|\mathcal{U}_{t}^{\pi,(n+1)}(h_t)\right|
\le
\left|\mathcal{U}_{t}^{\pi,(n)}(h_t)\right|.
\end{align}
}
\end{corollary}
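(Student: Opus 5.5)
The plan is to reparametrize both certificates of Theorem~\ref{thm:bounds} by \emph{full} admissible model sequences instead of projected ones. Once that is done, monotonicity reduces to a pointwise comparison of the in-support mass $m_t^\pi$ for each fixed full model. By the definition \eqref{eq:projected_uncertainty_set_NR_transition}, $\UThseq$ is the image of $\UTseq$ under the restriction map $\mathcal R^T_{t:H-1}$. So for iteration $n$ we can write
\begin{align*}
\mathcal{U}_t^{\pi,(n)}(h_t)=\sup_{\PTseq\in\UTseq} r_{max}\bigl(H-t-m_t^{\pi,(n)}(\PTseq)\bigr),
\qquad
m_t^{\pi,(n)}(\PTseq)\triangleq m_t^{\pi}\bigl(\mathcal R^{T,(n)}_{t:H-1}(\PTseq)\bigr),
\end{align*}
and analogously for $\mathcal{L}_t^{\pi,(n)}$ with $\inf$ and $r_{min}$. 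The key structural fact is that the uncertainty set $\UTseq$ being optimized over is the same at every $n$; only the restriction map depends on $n$.

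\textbf{Step 1 (pointwise monotonicity of the mass).} Fix any $\PTseq\in\UTseq$. By \eqref{eq:projected-transition-model}, the projected entries are the unnormalized full-model probabilities on the sampled support. Hence $m_t^{\pi,(n)}(\PTseq)$ is a sum, over $j$ and over sequences $s_{t+1:j}$ lying in the nested supports $\Sin^{(n)}$, of the nonnegative products $\prod_k \PTk(s_{k+1}\mid s_k,\pi_k(h_k))$. Because $\Sin^{(n)}(h_k^-)\subseteq\Sin^{(n+1)}(h_k^-)$ at every node, every index sequence counted at iteration $n$ is also counted at iteration $n+1$, with identical weight. This includes histories along which the fixed policy $\pi$ has newly created nodes. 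Since all added terms are nonnegative,
\begin{align*}
m_t^{\pi,(n)}(\PTseq)\le m_t^{\pi,(n+1)}(\PTseq).
\end{align*}
I also record that each inner sum over $s_{t+1:j}$ is at most the full probability of $j-t$ steps, which equals $1$. Therefore $0\le H-t-m_t^{\pi,(n)}\le H-t$, so the missing mass $\mu^{(n)}(\PTseq)\triangleq H-t-m_t^{\pi,(n)}(\PTseq)$ is nonnegative and pointwise nonincreasing in $n$.

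\textbf{Step 2 (passing to $\sup/\inf$ and absolute values).} For the upper certificate, I split on the sign of $r_{max}$. If $r_{max}\ge0$, then $|\mathcal{U}^{(n)}_t|=r_{max}\sup_{P}\mu^{(n)}(P)$. If $r_{max}<0$, then $|\mathcal{U}^{(n)}_t|=|r_{max}|\inf_{P}\mu^{(n)}(P)$, because the supremum of a negative multiple is attained at the infimum of $\mu$. In both cases, pointwise nonincrease of $\mu^{(n)}$ over a common index set gives nonincrease of both $\sup$ and $\inf$. Hence $|\mathcal{U}^{(n+1)}_t|\le|\mathcal{U}^{(n)}_t|$. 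The lower certificate is handled symmetrically, with $r_{min}$ and the roles of $\sup$ and $\inf$ swapped.

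The main obstacle I expect is Step 2's reliance on the reparametrization. The projected sets $\UThseq^{(n)}$ and $\UThseq^{(n+1)}$ live in different dimensions, so they cannot be compared directly. The argument therefore hinges on rewriting both optimizations over the common full set $\UTseq$, which is valid exactly because the projected set is defined as an image (each projected model has a full admissible completion). The nonnegativity of $\mu$ matters too, since it makes the absolute values reduce cleanly to $|r|\cdot\mu$ and avoids sign issues.
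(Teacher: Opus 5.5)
Your proof is correct and follows the same idea as the paper: nested supports can only increase the in-support mass $m_t^\pi$, so the unrepresented mass $H-t-m_t^\pi$ shrinks and both certificates contract. The paper states this in one line. Your additional steps are the pointwise comparison over the common full set $\UTseq$ via the image definition (as in Lemma~\ref{lem:Rmap}) and the sign split on $r_{\max}$ and $r_{\min}$. Together they rigorously justify comparing the projected optimizations across $n$, which the paper leaves implicit.
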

Indeed, expanding the sampled support can only increase the in-support
probability mass, such that
$
m_t^{\pi,(n)}
\le
m_t^{\pi,(n+1)}.
$
Consequently, the cumulative unrepresented mass
$H-t-m_t^\pi$ decreases, and both bounds contract monotonically toward zero. Thus, adding sampled support can only improve the certificate. The convergence
rate is governed by how quickly the planner captures probability mass in the
sampled support. Different support-expansion strategies may therefore yield
different rates of certificate tightening without affecting its validity.

\textbf{Anytime Certified Policy Evaluation.}
The bounds in Theorem~\ref{thm:bounds} depend only on the current sampled
support, the projected uncertainty set, and the evaluated policy, and are
therefore computable from the current partial tree provided the corresponding optimization over the projected uncertainty set can be solved. Consequently, they provide an
anytime deterministic certificate on the robust value of any policy, including
under non-rectangular uncertainty sets, independently of how that policy was obtained. %\VID{[mention also a non-rectangular uncertainty set?]}

\textbf{Recursive Form Under Rectangular Uncertainty.}
Rectangularity enables a recursive robust Bellman decomposition, allowing the
lower and upper fixed-policy certificates to be propagated through the
planning tree. Accordingly, the fixed-policy certificates of
Theorem~\ref{thm:bounds} admit a recursive action-conditioned form. The recursive quantity introduced below is directly related to the cumulative
in-support mass $m_t^\pi$ of Theorem~\ref{thm:bounds}. Under rectangular
uncertainty,
$
1+\delta_t^\pi(s_t\mid h_t)
=
\inf_{\PThseq\in\UThseq}
m_t^\pi(\PThseq).
$
The unit term corresponds to the current stage, while $\delta_t^\pi$ represents the worst-case
cumulative in-support mass from the next step onward. Rectangularity allows
this global minimization to be decomposed recursively as:
{
\setlength{\abovedisplayskip}{3pt}
\setlength{\belowdisplayskip}{2pt}
\begin{align} 
    \delta_{k}^{\pi}(s_k|h_k)
    &=
    \inf_{\PThk \in \UTh[k]}
    \delta_{k}^{\pi, \PThk}(s_k, \pi_k(h_k))
    \qquad k=t,\ldots, H-2,
    \\ 
    \delta_{k}^{\pi,\PThk}(s_k,a_k|h_k)
    &=
    % \inf_{\PThk \in \UTh[k]}
    \sum _{s_{k+1} \in \Sin(h_{k+1}^-)}
    \PThk(s_{k+1}\mid s_{k},a_{k})\,
    \left [
    1 
    +
    \delta_{k+1}^{\pi}(s_{k+1}|h_{k+1})
    \right ],
\end{align}
}
with a terminal condition $\delta_{H-1}^{\pi}(s_{H-1}) \equiv 0$. The term \(1+\delta_{t+1}^\pi\) accounts for the probability mass retained at the next state together with all subsequent in-support mass along the continuation. The action-conditioned bounds are defined as: 
{
\setlength{\abovedisplayskip}{3pt}
\setlength{\belowdisplayskip}{2pt}
\begin{align} 
    &\mathcal{L}_{t,rect}^{\pi,\PTht}(h_t,a_t) \label{eq:action-condition-L}
    \triangleq
    r_{min}
    \Bigl (
    H -1- t
    -
    \delta_{t}^{\pi,\PTht}\bigl(s_t,a_t|h_t)
    \Bigr ),
    \\
    &\mathcal{U}_{t,rect}^{\pi,\PTht}(h_t,a_t) \label{eq:action-condition-U}
    \triangleq
    r_{max}
    \Bigl (
    H -1- t
    -
    \delta_{t}^{\pi,\PTht}\bigl(s_t,a_t|h_t)
    \Bigr ).
\end{align}
}
For clarity, we present the recursive bounds for the common case \(r_{\min}\leq 0\leq r_{\max}\). The remaining reward-sign cases follow analogously by replacing the corresponding infimum with a supremum when the multiplying reward bound is negative; the full expressions are given in the appendix.
% ==================================
% Theorem 2 - Optimality Guarantees 
% ==================================
\subsection{Optimality Guarantees}
Building on the recursive fixed-policy certificates derived above for
rectangular uncertainty, we now extend them to bound the \textbf{optimal robust
value}. By optimizing the lower and upper recursive backups over actions, we
obtain deterministic anytime bounds on
$V_{t,\mathrm{rect}}^{*,\mathrm{rob}}$. Using the action-conditioned bounds \eqref{eq:action-condition-L} and \eqref{eq:action-condition-U}, for an arbitrary continuation policy $\pi$, we define the lower and upper
robust deterministic backups as: %\VID{[above you have $\mathcal{L}_{t,rect}^{\pi}$ but below you switch to $\mathcal{L}_{t,rect}^{\pi^L,\PTht}$; at this point it is also unclear what $\pi^L$ and $\pi^U$ correspond to]}
{
\setlength{\abovedisplayskip}{0pt}
\setlength{\belowdisplayskip}{1pt}
\begin{align}  
    \mathcal{LRDB}_{t}^{\pi}(h_t,a_t) \label{eq:RLDBa}
    &\triangleq
    r(s_t,a_t)
    +
    \inf_{\PTht \in \UTht}
    \Biggl(
    \widehat{\mathbb{E}}_{s_{t+1}|h_t}^{\PTht}
    \!\left[
    \widehat V_{t+1,rect}^{\pi,\widehat{P}_{t+1}^{T, \mathcal{LRDB}}}(h_{t+1}) 
    \right]
    +
    \mathcal{L}_{t,rect}^{\pi,\PTht}(h_{t},a_{t})
    \Biggr),
    \\
    \mathcal{URDB}_{t}^{\pi}(h_t,a_t) \label{eq:RUDBa}
    &\triangleq
    r(s_t,a_t)
    +
    \inf_{\PTht \in \UTht}
    \Biggl(
    \widehat{\mathbb{E}}_{s_{t+1}|h_t}^{\PTht}
    \!\left[
    \widehat V_{t+1,rect}^{\pi,\widehat{P}_{t+1}^{T, \mathcal{URDB}}}(h_{t+1}) 
    \right]
    +
    \mathcal{U}_{t,rect}^{\pi,\PTht}(h_{t},a_{t})
    \Biggr).
\end{align}
}
We use $\widehat P_{t+1}^{T,\mathcal{LRDB}}$ and
$\widehat P_{t+1}^{T,\mathcal{URDB}}$ to denote the projected continuation
models induced by the minimizing model choices of the corresponding recursive
LRDB and URDB backups from time $t+1$ onward. Importantly, the projected value and its correction term are evaluated under the same transition model inside each robust backup. Thus, the adversarial model jointly determines both the projected return and the corresponding approximation correction, rather than minimizing them independently. We now leverage these recursive deterministic backups to establish deterministic anytime optimality guarantees for the rectangular RMDP.
\begin{theorem}[Optimal robust value bounds] \label{thm:optimal-bounds}
Let $h_t$ be an expanded history node at time step $t$, and let $\pi^L$ and $\pi^U$ be the lower- and upper-bound policies defined in \eqref{policices_L_U}. Under rectangular transition
uncertainty, the optimal robust value is bounded deterministically by:
{
\setlength{\abovedisplayskip}{3pt}
\setlength{\belowdisplayskip}{2pt}
\begin{equation}
    \mathcal{LRDB}_{t}^{\pi^L}\left(h_t, \pi_t^L(h_t)\right)
    \leq
    V_{t,\mathrm{rect}}^{*,\mathrm{rob}}(h_t)
    \leq
    \mathcal{URDB}_{t}^{\pi^U}\left(h_t, \pi_t^U(h_t)\right),
\end{equation}
}
where $\pi_t^L$ and $\pi_t^U$ are the lower- and upper-bound policies defined greedily with respect to their corresponding deterministic backups:
{
\setlength{\abovedisplayskip}{3pt}
\setlength{\belowdisplayskip}{0pt}
\begin{align} \label{policices_L_U}
    \pi_t^L(h_t) 
    =
    \argmaxover{a_t \in \mathcal{A}_t}
    \Bigl\{
    \RLDBa
    \Bigr\}, \quad
    \pi_t^U(h_t) 
    =
    \argmaxover{a_t \in \mathcal{A}_t}
    \Bigl\{
    \RUDBa
    \Bigr\}.
\end{align}
}
\end{theorem}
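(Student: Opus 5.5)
We prove the two inequalities by backward induction on the depth $k=H-1,H-2,\ldots,t$ of the sampled tree. The claim at each expanded node $h_k$ is
\begin{equation*}
\mathcal{LRDB}_{k}^{\pi^L}(h_k,\pi_k^L(h_k)) \le V_{k,\mathrm{rect}}^{*,\mathrm{rob}}(h_k) \le \mathcal{URDB}_{k}^{\pi^U}(h_k,\pi_k^U(h_k)).
\end{equation*}
We use the case $r_{\min}\le 0\le r_{\max}$ stated in the text; the other sign cases swap an infimum for a supremum in the correction term and are handled the same way.

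\textbf{Base case.} At $k=H-1$ we have $\delta_{H-1}\equiv 0$ and the correction terms vanish. There is no continuation, so both backups reduce to $r(s_{H-1},a)$. Maximizing over $a$ gives exactly $V^{*,\mathrm{rob}}_{H-1,\mathrm{rect}}$.

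\textbf{Per-action decomposition.} For the inductive step, fix $a_k$ and any admissible row $P_k^T\in\UT[k](s_k,a_k)$. Let $\PThk=\RT[k](P_k^T)$ be its restriction. Split the expectation of the true continuation value into an in-support part and an out-of-support part:
\begin{equation*}
\sum_{s'\in\Sin}\PThk(s')\,V^{*}_{k+1}(h_{k+1}) + \sum_{s'\notin\Sin}P_k^T(s')\,V^{*}_{k+1}(h_{k+1}).
\end{equation*}
For the out-of-support part, the out-mass is $1-\sum_{s'\in\Sin}\PThk(s')$ and every value there lies in $[r_{\min}(H-1-k),\,r_{\max}(H-1-k)]$. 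For the in-support part, apply the induction hypothesis at each child. We also need the unrolled identity for each child:
\begin{equation*}
\mathcal{URDB}_{k+1}=\widehat V^{\pi^U,\widehat P^{\mathcal{URDB}}}_{k+1,rect}(h_{k+1})+r_{\max}\bigl(H-2-k-\delta_{k+1}\bigr).
\end{equation*}
This identity is a sub-lemma proved by the same induction: the recursion for $\delta$ lets the correction terms telescope along the projected model chosen by the minimizers.

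Combining the two parts and using the recursive definition of $\delta_k^{\pi,\PThk}$, the total bound collapses to
\begin{equation*}
r(s_k,a_k)+\widehat{\mathbb E}^{\PThk}\bigl[\widehat V_{k+1}\bigr]+r_{\max}\bigl(H-1-k-\delta_k^{\pi,\PThk}\bigr).
\end{equation*}
The mass bookkeeping works out because $\sum_{s'\in\Sin}\PThk(s')+\text{out-mass}=1$, and each unit of in-support mass contributes $1+\delta_{k+1}$ to the count of covered future steps.

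\textbf{Taking the infimum over nature.} Take $\inf$ over $P_k^T\in\UT[k](s_k,a_k)$. By history-rectangularity the adversary's choice at $(h_k,a_k)$ is independent of its choices at descendant nodes. Moreover, the projected set is exactly the image of the full set under restriction, so the infimum over full rows of an expression that depends only on the restriction equals the infimum over $\PThk\in\UTh[k]$.

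\textbf{Upper bound.} The inequality $Q^{*,\mathrm{rob}}_k(h_k,a_k)\le\mathcal{URDB}^{\pi^U}_k(h_k,a_k)$ holds pointwise in the model, so it survives the infimum: we take $\inf$ of a pointwise-larger function. Maximizing over $a_k$ then gives $V^{*}\le\max_a\mathcal{URDB}=\mathcal{URDB}(h_k,\pi^U_k(h_k))$.

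\textbf{Lower bound.} Here the comparison is against the model minimizing the true $Q$-value, and the $\inf$ over $\PThk$ of the LRDB expression is at most its value at that model's restriction. This gives $\mathcal{LRDB}^{\pi^L}_k(h_k,a)\le Q^{*,\mathrm{rob}}_k(h_k,a)$ for every $a$. Taking $a=\pi^L_k(h_k)$ and using $Q^*\le V^*$ finishes the step. Alternatively, the lower bound follows from Theorem~\ref{thm:bounds} applied to $\pi^L$, since $V^{\pi^L,\mathrm{rob}}\le V^*$.

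\textbf{Main obstacle.} The delicate step is the upper bound. The continuation values $\widehat V_{k+1}$ inside URDB are evaluated under the URDB-minimizing child models, not the models minimizing the true value, so a plain induction hypothesis at the children does not immediately give the needed inequality. To handle this, we state the induction hypothesis directly in the form ``$V^*_{k+1}(h_{k+1})\le \widehat V_{k+1}^{\pi^U,\widehat P^{\mathcal{URDB}}}+r_{\max}(H-2-k-\delta_{k+1})$''. With this form, summing against $\PThk$ reproduces the $\delta_k^{\pi,\PThk}$ recursion exactly. If that fails, the fallback is to bound $V^*_{k+1}$ by the URDB value at the child, which is an upper bound for any continuation model, and accept matching only the correction term via the $\delta$ recursion.
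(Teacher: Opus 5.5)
Your skeleton is the paper's own: backward induction, a per-model split of $\mathbb{E}^{P_k^T}\bigl[V^{*,\mathrm{rob}}_{k+1,\mathrm{rect}}\bigr]$ into sampled and unsampled successors, the induction hypothesis on sampled children, the bound $r_{\max}(H-1-k)$ off support, a pointwise-in-the-model inequality that survives the infimum (via the image property of the projected set), and a final maximization over actions. The paper isolates the per-model step as Lemma~\ref{lem:rudb}. The gap is in exactly that step, which you assert as an exact ``collapse''. It is not one.

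The child expansion you need is definitional, not a sub-lemma requiring induction. Write $\widehat P^{T,\mathcal{URDB}}_{k+1}$ for the minimizing row at $h_{k+1}$. Then
\[
\mathcal{URDB}^{\pi^U}_{k+1}(h_{k+1})
=
\widehat V^{\pi^U,\widehat P^{T,\mathcal{URDB}}_{k+1}}_{k+1,rect}(h_{k+1})
+r_{\max}\bigl(H-2-k-\delta^{\pi^U,\widehat P^{T,\mathcal{URDB}}_{k+1}}_{k+1}(s_{k+1}\mid h_{k+1})\bigr).
\]
After adding the unsampled mass times $r_{\max}(H-1-k)$, the correction becomes $r_{\max}\bigl(H-1-k-\sum_{s'\in\widehat{\mathcal S}}\widehat P^T_k(s')\,[1+\delta^{\pi^U,\widehat P^{T,\mathcal{URDB}}_{k+1}}_{k+1}]\bigr)$. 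However, $\delta^{\pi^U,\widehat P^T_k}_k$, and hence $\mathcal{U}^{\pi^U,\widehat P^T_k}_{k,rect}(h_k,a_k)$, is built from the separately minimized $\delta^{\pi^U}_{k+1}=\inf\delta^{\pi^U,\cdot}_{k+1}$. The deeper $\delta$'s are independent infima too, so no identity ``telescoping along the minimizers' models'' holds.

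So whichever $\delta_{k+1}$ you mean, one of your two claimed equalities fails:
\begin{itemize}
\item If it is the value at the URDB-minimizing row, the child identity holds, but the sum does not reproduce $\delta^{\pi^U,\widehat P^T_k}_k$.
\item If it is the infimum, the sum reproduces it, but the child identity is only an inequality.
\end{itemize}
What rescues the step is $\delta^{\pi^U}_{k+1}\le\delta^{\pi^U,\widehat P^{T,\mathcal{URDB}}_{k+1}}_{k+1}$ together with $r_{\max}\ge 0$. This gives the one-sided bound
\[
\widehat{\mathbb{E}}^{\widehat P^T_k}\bigl[\mathcal{URDB}^{\pi^U}_{k+1}\bigr]
+\Bigl(1-\sum_{s'\in\widehat{\mathcal S}}\widehat P^T_k(s')\Bigr)\,r_{\max}(H-1-k)
\le
\widehat{\mathbb{E}}^{\widehat P^T_k}\Bigl[\widehat V^{\pi^U,\widehat P^{T,\mathcal{URDB}}_{k+1}}_{k+1,rect}\Bigr]
+\mathcal{U}^{\pi^U,\widehat P^T_k}_{k,rect}(h_k,a_k),
\]
in exactly the direction you need. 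Symmetrically, $r_{\min}\le 0$ handles $\mathcal{LRDB}$. In the other sign cases, this comparison is the reason the infimum in $\delta$ becomes a supremum. It is the only nontrivial ingredient of the theorem, and your write-up replaces it with an equality that is false in general.

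There are two smaller problems.

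First, your ``main obstacle'' is misdiagnosed. The continuation values under the URDB-minimizing child models are exactly those inside the child's $\mathcal{URDB}$. So the plain hypothesis $V^{*,\mathrm{rob}}_{k+1,\mathrm{rect}}\le\mathcal{URDB}^{\pi^U}_{k+1}$ suffices once the inequality above is used. Your ``fallback'' is in fact the paper's argument.

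Second, the alternative lower-bound route through Theorem~\ref{thm:bounds} does not work. $\mathcal{LRDB}$ minimizes projected value and correction jointly, and it evaluates the continuation under the LRDB-chosen models rather than the $\widehat V$-minimizing ones. Hence, under rectangularity with $r_{\min}\le 0$, $\mathcal{LRDB}^{\pi^L}_t(h_t,\pi^L_t(h_t))\ge\widehat V^{\pi^L,\mathrm{rob}}_t(h_t)+\mathcal{L}^{\pi^L}_t(h_t)$, typically strictly. Theorem~\ref{thm:bounds} only places the right-hand side below $V^{\pi^L,\mathrm{rob}}_t(h_t)$, so it gives no comparison between $\mathcal{LRDB}$ and $V^{\pi^L,\mathrm{rob}}_t(h_t)$. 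Drop that remark, or replace it with the same node-wise induction run for the fixed policy $\pi^L$.
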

\vspace{-0.5em}
Theorem~\ref{thm:optimal-bounds} assumes that all actions in $\Afull$ are initialized at every expanded decision node. The proof is provided in the Appendix \ref{thm:thm_2_proof}.
\begin{corollary}[Asymptotic optimality]
\label{cor:optimal-convergence}
As the sampled state support expands to the full state space at every node,
$
\Sin(h_k^-)\uparrow\Sfull,
\ k=t,\ldots,H,
$
both deterministic bounds converge to the optimal rectangular robust value:
\vspace{-0.5em}
{
\setlength{\abovedisplayskip}{0pt}
\setlength{\belowdisplayskip}{0pt}
\begin{align}
\mathcal{LRDB}_{t}^{\pi^L}
\left(h_t,\pi_t^L(h_t)\right)
&\longrightarrow
V_{t,\mathrm{rect}}^{*,\mathrm{rob}}(h_t),
\\
\mathcal{URDB}_{t}^{\pi^U}
\left(h_t,\pi_t^U(h_t)\right)
&\longrightarrow
V_{t,\mathrm{rect}}^{*,\mathrm{rob}}(h_t).
\end{align}
}
\end{corollary}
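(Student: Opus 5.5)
We prove Corollary~\ref{cor:optimal-convergence} with a sandwich argument. By Theorem~\ref{thm:optimal-bounds}, at every stage of support expansion
\[
\mathcal{LRDB}_{t}^{\pi^L}\left(h_t,\pi_t^L(h_t)\right)\le V_{t,\mathrm{rect}}^{*,\mathrm{rob}}(h_t)\le \mathcal{URDB}_{t}^{\pi^U}\left(h_t,\pi_t^U(h_t)\right).
\]
It therefore suffices to show that the gap $\mathcal{URDB}-\mathcal{LRDB}$ at the root tends to zero. A cleaner route is to show directly that each backup converges to the exact robust Bellman recursion. We carry this out by backward induction on $k=H-1,\ldots,t$. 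The terminal stage is trivial: $\delta_{H-1}\equiv 0$, and both backups reduce to $r(s_{H-1},a)$, so they equal $V^{*,\mathrm{rob}}_{H-1,\mathrm{rect}}$ exactly.

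\textbf{Step 1: the correction terms vanish uniformly.} Fix a node $(h_k,a_k)$. Consider the action-conditioned correction $H-1-k-\delta_k^{\pi,\PThk}(s_k,a_k\mid h_k)$, which is $m^\pi_k-1$ minus the horizon count. It equals the expected number of future stages at which the trajectory has left the sampled tree. It is bounded by $(H-1-k)$ times the total out-of-support mass $\sum_{j}\sum_{s\notin\Sin}P(s\mid\cdot)$ along the continuation.

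Once $\Sin(h_j^-)=\Sfull$ at every node (the supports are nested and the state space is finite, so this happens after finitely many expansions), every projected kernel $\PThk$ is a full probability vector. Two consequences follow:
\begin{enumerate}
\item The restriction map $\mathcal R^T$ becomes the identity, so $\UTh[k]=\UT[k]$.
\item We get $m_k^\pi=H-k$ for every admissible model and every policy, so $\delta_k^{\pi,\PThk}=H-1-k$ identically.
\end{enumerate}
Hence $\mathcal{L}_{t,rect}^{\pi,\PTht}=\mathcal{U}_{t,rect}^{\pi,\PTht}=0$ for every $\PTht$ in the set and every $\pi$. For a merely limiting statement with a possibly infinite sequence, we would instead use that the out-of-support mass is at most $\sup_{P\in\UT[k](s,a)}P(\Sfull\setminus\Sin)$. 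This quantity tends to zero as $\Sin\uparrow\Sfull$. This is the analogue of Corollary~\ref{cor:Bound-diminishing} and relies on compactness of the rows of the rectangular set.

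\textbf{Step 2: the backups collapse to the robust Bellman optimality operator.} With zero corrections and $\UTh[k]=\UT[k]$, definitions \eqref{eq:RLDBa}--\eqref{eq:RUDBa} read
\[
r(s_k,a)+\inf_{\PTk\in\UT[k]}\mathbb{E}^{\PTk}\bigl[\widehat V_{k+1,rect}(h_{k+1})\bigr].
\]
By the induction hypothesis, the continuation value at $h_{k+1}$ under the greedy policy equals $V^{*,\mathrm{rob}}_{k+1,\mathrm{rect}}(h_{k+1})$. This uses that the greedy maximization in \eqref{policices_L_U} over all of $\Afull$, which are initialized at expanded nodes by assumption, reproduces $\max_a$. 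The right-hand side is then exactly the rectangular robust Bellman backup \eqref{eq:robust-rect-recursive-value}, now with the optimal continuation. Maximizing over $a$ gives $V^{*,\mathrm{rob}}_{k,\mathrm{rect}}(h_k)$. In the limiting version, each step of this chain is $|\cdot|$-Lipschitz: the inf over a common set, the expectation of values bounded by $H\max(|r_{\min}|,|r_{\max}|)$, and a max over a finite action set. Errors of order $\varepsilon_k$ therefore propagate additively, giving an $O\bigl(\sum_k\varepsilon_k\bigr)\to 0$ gap.

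\textbf{Main obstacle.} The delicate point is the continuation models $\widehat P^{T,\mathcal{LRDB}}_{t+1}$ and $\widehat P^{T,\mathcal{URDB}}_{t+1}$. These are chosen jointly with the correction term inside the infimum, so the minimizer of the projected value plus the correction is not the minimizer of the value alone. We handle this by the elementary bound
\[
\Bigl|\inf_P(f+g)-\inf_P f\Bigr|\le\sup_P|g|,
\]
applied with $g$ equal to the correction. By Step 1, $\sup_P|g|\to 0$, so the joint minimization is asymptotically equivalent to minimizing the projected value alone. The same step needs a check that the argmax policies $\pi^L,\pi^U$ may switch along the sequence. This is harmless, because we bound the optimal values themselves, and $\max$ is $1$-Lipschitz regardless of which action attains it.
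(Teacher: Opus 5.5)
Your proposal is correct, but after stating the sandwich you take a different route from the paper.

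\textbf{The paper's route.} The paper keeps the sandwich of Theorem~\ref{thm:optimal-bounds} and controls only the certificate width. It lets $\epsilon$ be the largest $|\mathcal{U}_{k,rect}^{\pi,\PThk}|,|\mathcal{L}_{k,rect}^{\pi,\PThk}|$ over stages, policies, histories and projected models, and sets $G_k=\sup_{h_k}|\mathcal{URDB}_k^{\pi^U}(h_k)-\mathcal{LRDB}_k^{\pi^L}(h_k)|$. It bounds the difference of the two infima by a supremum of differences, substitutes $\widehat V_{k+1}=\mathcal{URDB}_{k+1}-\mathcal{U}_{k+1}$ (resp.\ with $\mathcal{LRDB},\mathcal{L}$), and obtains $G_k\le G_{k+1}+4\epsilon$. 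Unrolling gives $G_t\le 4(H-1-t)\epsilon$, and it then shows $\epsilon\to0$.

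\textbf{Your route.} You bypass Theorem~\ref{thm:optimal-bounds} and prove by backward induction that each backup separately reproduces the rectangular robust Bellman optimality recursion. All supports equal $\Sfull$ after finitely many expansions, since $\Sfull$ and the depth-$H$ tree are finite. At that point $\UTh[k]=\UT[k]$ and $\delta_k^{\pi,\PThk}\equiv H-1-k$, so both corrections vanish identically. The joint-minimizer continuation models then become ordinary robust minimizers, and both certificates equal $V^{*,\mathrm{rob}}_{t,\mathrm{rect}}(h_t)$ exactly.

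\textbf{What each buys.} Your argument is self-contained and gives a finite-time equality rather than a limit. It also treats explicitly two points the paper passes over quickly: the value and the correction are coupled inside one infimum, and $\pi^L,\pi^U$ may switch. The paper's route instead yields a quantitative width bound $4(H-1-t)\epsilon$ valid at every stage of expansion. It also never has to compare projected quantities with the out-of-support part of $V^{*,\mathrm{rob}}_{t+1,\mathrm{rect}}$, because Theorem~\ref{thm:optimal-bounds} already locates the optimal value.

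\textbf{A minor wording slip.} The correction $H-1-k-\delta_k^{\pi,\PThk}$ is the horizon count $H-k$ minus the in-support mass $1+\delta_k^{\pi,\PThk}$, not ``$m_k^\pi-1$ minus the horizon count''. Its continuation term is also the worst-case $\delta_{k+1}^{\pi}$ rather than the mass under a single model. Neither point affects your exact-case argument.
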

\vspace{-0.3em}
The proof is provided in the Appendix \ref{cor:cor_2_proof}.\\
\textbf{Action pruning and certification.} The lower and upper action-value certificates induce a deterministic ordering test between actions. For two actions \(a,a'\), if: $\ \mathcal{URDB}
\left(h,a\right) < \mathcal{LRDB}\left(h,a'\right)$, then \(a\) is provably suboptimal and its subtree can be safely pruned. Likewise, an action \(a^*\) is certified optimal when: $\ \mathcal{LRDB}
\left(h,a^*\right) > \max_{a \neq a^*} \mathcal{URDB}\left(h,a\right)$. This eliminates redundant search in branches that can no longer affect the optimal decision, allowing computation to be focused on unresolved regions of the tree. At the root, this provides an anytime stopping criterion: planning can terminate as soon as the optimal root action is certified, rather than exhausting the search budget.

\vspace{-0.3em}
\textbf{Robust backup computation.}
For the TV uncertainty sets in our experiments, each inner robust optimization
reduces to a robust expectation over the sampled successors and one aggregate
unsampled state, solvable in
$O(|\Sin(h,a)|\log|\Sin(h,a)|)$ time. See
Appendix~\ref{app:tv-robust-backup}.

\vspace{-0.3em}
Together, Theorems~\ref{thm:bounds} and~\ref{thm:optimal-bounds} provide runtime-accessible deterministic certificates for both policy evaluation and optimal robust planning. We next use these certificates within our MCTS planner and evaluate their effect on convergence, pruning, and root-action certification.

%% file: sections/algorithm.tex
\section{Algorithm}
% 
% We instantiate the proposed certificates in \textbf{CAR-MCTS}
% (\emph{Certified Anytime Robust MCTS}), summarized in
% Algorithm~\ref{alg:car-mcts}. At each simulation, the planner descends the
% current search tree by selecting the action with the largest upper certificate
% $\mathcal{URDB}$ and samples the next state from the nominal transition model.
% Previously unseen successors are added to the sampled support, after which the
% projected robust values and certificates are backed up along the visited path, avoiding explicit enumeration of the full state space and thereby preserving scalability to large state spaces.
% 
\begin{algorithm}[h]
\small
\caption{CAR-MCTS: Certified Anytime Robust MCTS}
\label{alg:car-mcts}
\begin{minipage}[t]{0.60\linewidth}
\vspace{0pt}

\begin{algorithmic}[1]

\Statex \textbf{Input:} root state $s_0$, horizon $H$, budget $N$
\Statex \textbf{Output:} root action $a$, certification flag $c$, action-value bounds

\State $h_0 \gets \Call{NewNode}{s_0,0}$
\State $n \gets 0$

\While{$n < N$}

    \State $(c,a^\star) \gets \Call{Certify}{h_0}$

    \If{$c$}
        \State \Return $(a^\star,\textsc{true},[\mathcal{LRDB}(s_0,a^\star),\mathcal{URDB}(s_0,a^\star)])$
    \EndIf

    \State $h \gets h_0$
    \State $\textit{path} \gets \emptyset$

    \While{$t(h) < H-1$ \textbf{and} $n < N$}

        \State $a \gets
        \arg\max_{a} \mathcal{URDB}_{t(h)}(h,a)$

        \State $s' \sim
        \mathring{P}^{T}_{t(h)}
        (\cdot \mid s(h),a)$

        \State $n \gets n+1$

        \State append $(h,a)$ to $\textit{path}$

        \State $h \gets
        \Call{ExpandIfNeeded}{h,a,s'}$

    \EndWhile

    \ForAll{$(h,a)$ in reverse$(\textit{path})$}

        \State \Call{Backup}{$h,a$}

        \State \Call{Prune}{$h$}

    \EndFor

\EndWhile

\State $a \gets
\arg\max_a \widehat Q^{\mathrm{rob}}_0(s_0,a)$

\State \Return $(a,\textsc{false},[\mathcal{LRDB}(s_0,a),\mathcal{URDB}(s_0,a)])$

\end{algorithmic}

\end{minipage}
\hfill
% \hspace{0.005\linewidth}
\begin{minipage}[t]{0.37\linewidth}
\vspace{0pt}
\small
\raggedright

\setlength{\abovedisplayskip}{3pt}
\setlength{\belowdisplayskip}{3pt}
\setlength{\abovedisplayshortskip}{3pt}
\setlength{\belowdisplayshortskip}{3pt}

\textbf{Operations:}

\medskip

\textbf{\textsc{NewNode}}$(s,t)$

Initialize all actions $a\in\mathcal A$.

\bigskip

\textbf{\textsc{ExpandIfNeeded}}$(h,a,s')$

If $s'$ is not in the sampled support, add it and create
the corresponding child node. Return the child.

\bigskip

\textbf{\textsc{Backup}}$(h,a)$

Update $\widehat Q^{\mathrm{rob}}$, $\mathcal{LRDB}$,
and $\mathcal{URDB}$.

\bigskip

\textbf{\textsc{Prune}}$(h)$

Prune action $a$ if
\[
\mathcal{URDB}(h,a)
<
\max_{a'}\mathcal{LRDB}(h,a').
\]

\medskip

\textbf{\textsc{Certify}}$(h)$

Let
\[ a^\star =
\arg\max_a \mathcal{LRDB}(h,a).
\]
Return true if
\[
\mathcal{LRDB}(h,a^\star)
>
\max_{a\neq a^\star}
\mathcal{URDB}(h,a).
\]

\end{minipage}

\end{algorithm}
We instantiate the proposed certificates in \textbf{CAR-MCTS}
(\emph{Certified Anytime Robust MCTS}), summarized in
Algorithm~\ref{alg:car-mcts}. At each simulation, the planner descends the
current search tree by selecting the action with the largest upper certificate
$\mathcal{URDB}$ and samples the next state from the nominal transition model.
Previously unseen successors are added to the sampled support, after which the
projected robust values and certificates are backed up along the visited path, avoiding explicit enumeration of the full state space and thereby preserving scalability to large state spaces. During the backup, actions whose upper certificate falls below a competing lower certificate are pruned at every visited node. At the root, planning terminates early whenever an action is certified optimal. If certification is not reached within the sampling budget, CAR-MCTS returns the action with the
largest projected robust action-value $\widehat Q_0^{\mathrm{rob}}$ together with deterministic optimality certificates.

%% file: sections/results.tex
\section{Results}

\textbf{Experimental setup.}
We evaluate CAR-MCTS in the Frozen Lake domain with varying grid
sizes, terminal holes, and different start and goal cells. At each nonterminal
cell, the agent chooses one of four actions: \textsc{left}, \textsc{down},
\textsc{right}, or \textsc{up}. Under the nominal transition model, the agent
moves toward the intended neighboring cell with probability $1-p_{\mathrm{slip}}$;
the remaining probability $p_{\mathrm{slip}}$ is distributed over cells in
concentric Chebyshev-distance rings around the current cell, up to a
configurable radius $k$. This ring model allows the number of possible
successor states to be varied across experiments. Transition uncertainty is
modeled by rectangular total-variation sets around the nominal transition
rows. The reward function and remaining experiment-specific parameters, including
the grid size, horizon, $p_{\mathrm{slip}}$, ring radius $k$, and uncertainty
radius, are provided in the Appendix.

% % % ==================================
% % % Figure - value convergance
% % % ==================================
% \begin{figure}[h]
%     \centering
%     \includegraphics[
%     width=0.9\linewidth,
%     height=7cm,
%     keepaspectratio
%     ]{Figures/fl_fig1_V14.png}
%     \caption{Fixed-policy certificate tightening as the sampled support expands.}
%     \label{fig:fixed-policy-convergance}
% \end{figure}

% \begin{wrapfigure}[5]{r}{0.35\linewidth}
%     \centering
%     \includegraphics[width=\linewidth]{Figures/fl_fig1_V14.png}
%     \caption{Fixed-policy certificate tightening as the sampled support expands.}
%     \label{fig:fixed-policy-convergance}
% \end{wrapfigure}

We first evaluate the fixed-policy guarantees of Theorem~\ref{thm:bounds} by
progressively expanding the sampled tree of a fixed policy. Figure~\ref{fig:fixed-policy-convergance}
plots the projected robust value and its deterministic certificate against the
fraction of the complete depth-$H$ tree represented, with the exact robust
value obtained from the full tree shown for reference. Throughout the search,
the exact value remains within the certificate, while the interval contracts
as the sampled support expands and the projected value converges to the exact
value. The experiment is repeated over 50 independent nested tree expansions,
illustrating both the anytime validity and the convergence of the certificate.

\begin{figure}[h]
    \centering
    \begin{subfigure}[t]{0.49\linewidth}
        \centering
        \includegraphics[width=\linewidth]{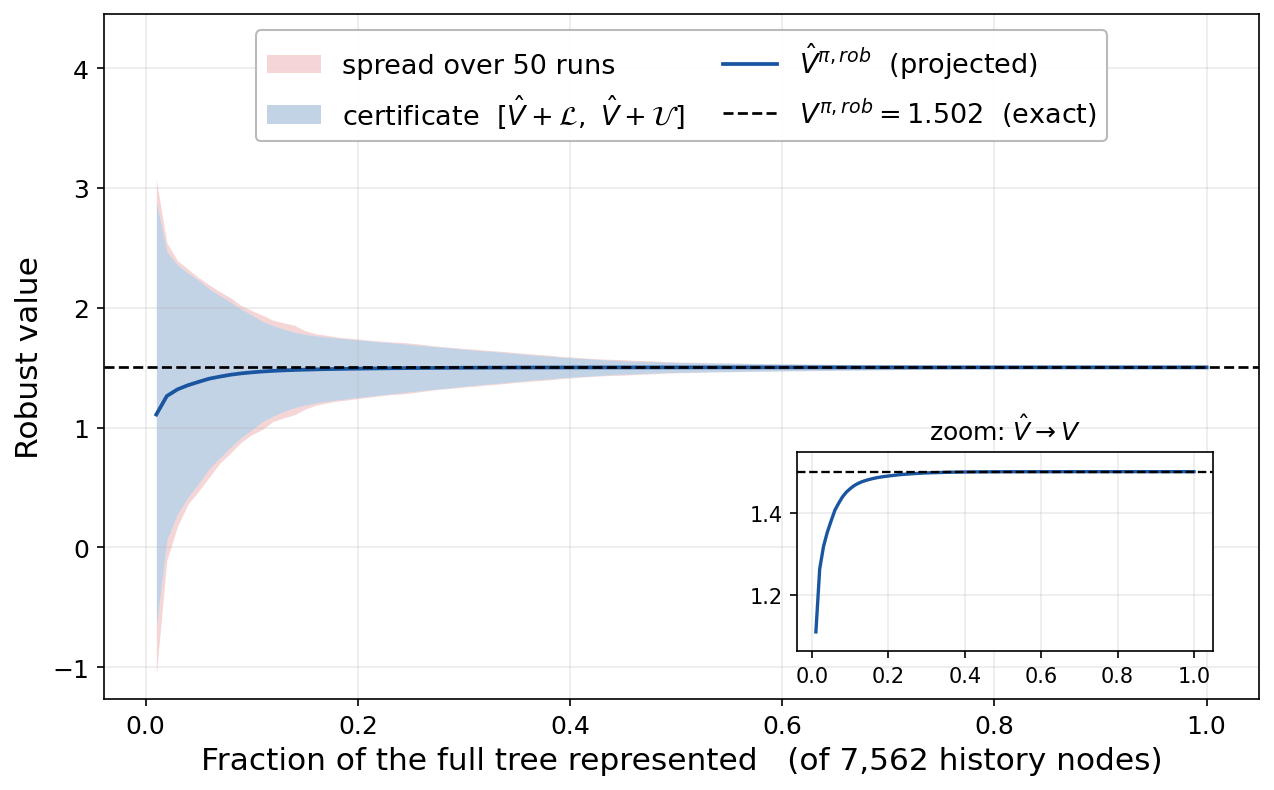}
        \caption{Fixed-policy certificate tightening.}
        \label{fig:fixed-policy-convergance}
    \end{subfigure}
    \hfill
    \begin{subfigure}[t]{0.49\linewidth}
        \centering
        \includegraphics[width=\linewidth]{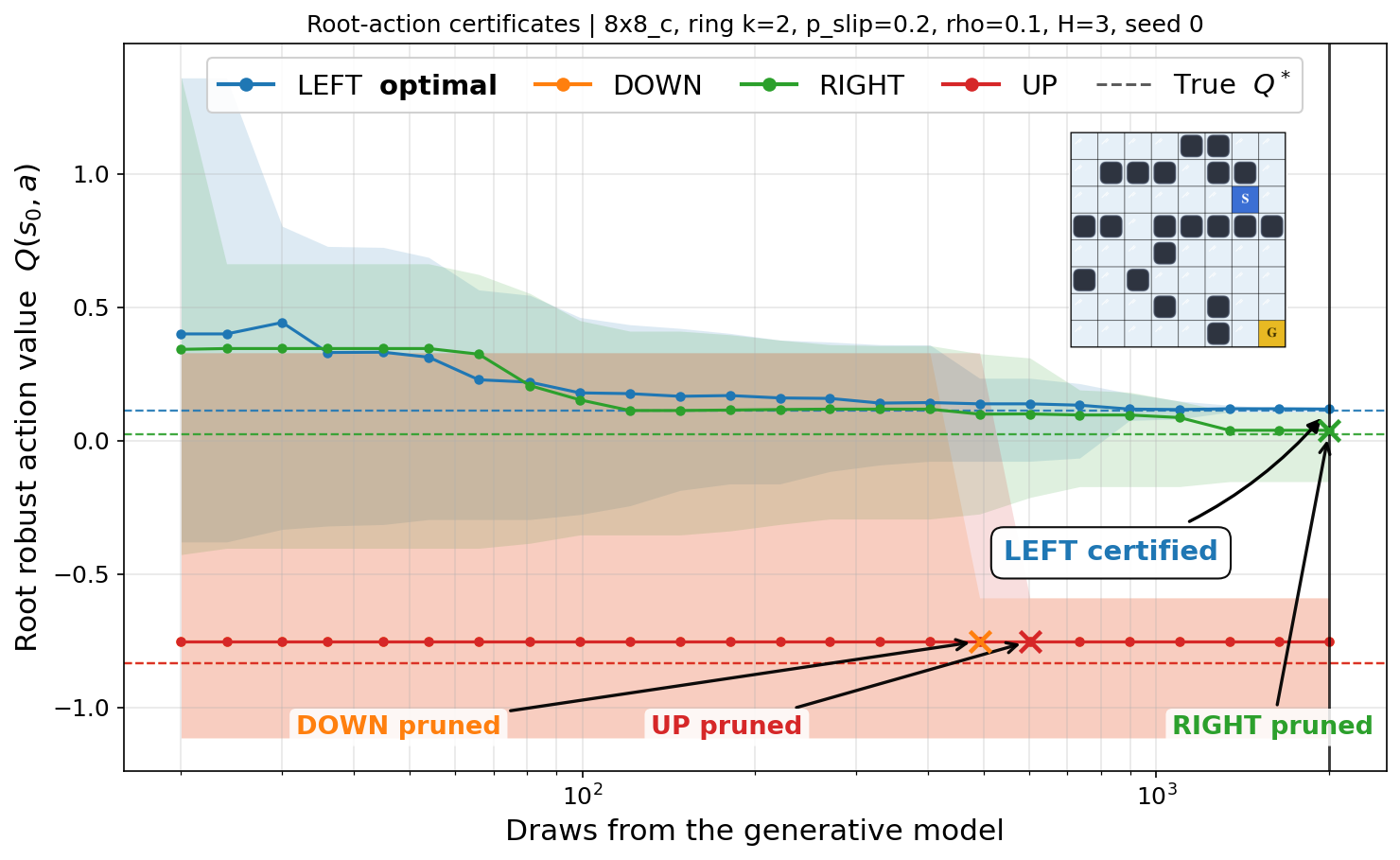}
        \caption{Pruning and root-action certification.}
        \label{fig:root-cert}
    \end{subfigure}
    \vspace{-5pt}
    \caption{Deterministic certificates: fixed-policy evaluation and CAR-MCTS planning.}
    % \caption{Deterministic certificates in CAR-MCTS.}
    \label{fig:certificate-results}
\end{figure}

% % ==================================
% % Figure - Map
% % ==================================
% \begin{wrapfigure}[7]{r}{0.15\linewidth}
%     \centering
%     \includegraphics[width=\linewidth]{Figures/map2.png}
% \end{wrapfigure}

% \paragraph{Pruning and action certification.}
Figure~\ref{fig:root-cert} illustrates how CAR-MCTS uses the certificates
during planning. Actions \textsc{down} and \textsc{up}, which lead toward
nearby holes, are quickly identified as suboptimal and pruned; their
certificates therefore stop changing once their subtrees are no longer
sampled. The remaining computation is concentrated on the \textsc{left} and
\textsc{right} actions. Since tree traversal selects the action with the largest
$\mathcal{URDB}$, the more promising \textsc{left} branch is sampled more
frequently, causing its certificate and projected value to tighten faster.
Eventually, \textsc{left} is deterministically certified as the optimal
action. Even when certification is not reached within the budget, this
targeted allocation improves the projected values used by CAR-MCTS's
$\arg\max_a \widehat Q^{\mathrm{rob}}(s_0,a)$ fallback decision.

% ==================================
% Figure - Root Certificate
% ==================================
% \begin{wrapfigure}[10]{r}{0.6\linewidth}
%     \centering
%     % \includegraphics[width=\linewidth]
%     \includegraphics[
%         width=\linewidth,
%         height=8cm,
%         keepaspectratio
%     ]{Figures/compare_k8_pexec0.05_n20_r_h.png}
%     \caption{comparison}
%     \label{fig:comparison}
% \end{wrapfigure}

% \begin{wrapfigure}[7]{R}{0.55\linewidth} \centering \includegraphics[width=\linewidth]{Figures/compare_k8_pexec0.05_n20_r_h.png} \caption{comparison} \label{fig:comparison} \end{wrapfigure}
% 
\textbf{Comparison with robust planning baselines.}
Figure~\ref{fig:comparison} compares CAR-MCTS with nominal UCT, RSS
\citep{shazman2026online}, and Robust Power UCT
\citep{dam2026online} on a $16\times16$ Frozen Lake domain. The execution
dynamics are fixed at $p_{\mathrm{true}}=0.05$, while $p_{\mathrm{slip}}$
denotes the slip probability assumed by the planner and the uncertainty radius
is adjusted to the resulting model mismatch. Hence, larger values of
$p_{\mathrm{slip}}$ primarily increase model mismatch rather than environment difficulty. CAR-MCTS maintains substantially
higher return over increasing mismatch than all baselines, with the advantage
becoming particularly pronounced at larger horizons. At $H=11$, CAR-MCTS
maintains near-maximal return through $p_{\mathrm{slip}}=0.5$, whereas nominal
UCT and Robust Power UCT deteriorate sharply and show no comparable benefit
from deeper lookahead.
\begin{wrapfigure}[11]{r}{0.54\linewidth} \centering \includegraphics[width=\linewidth]{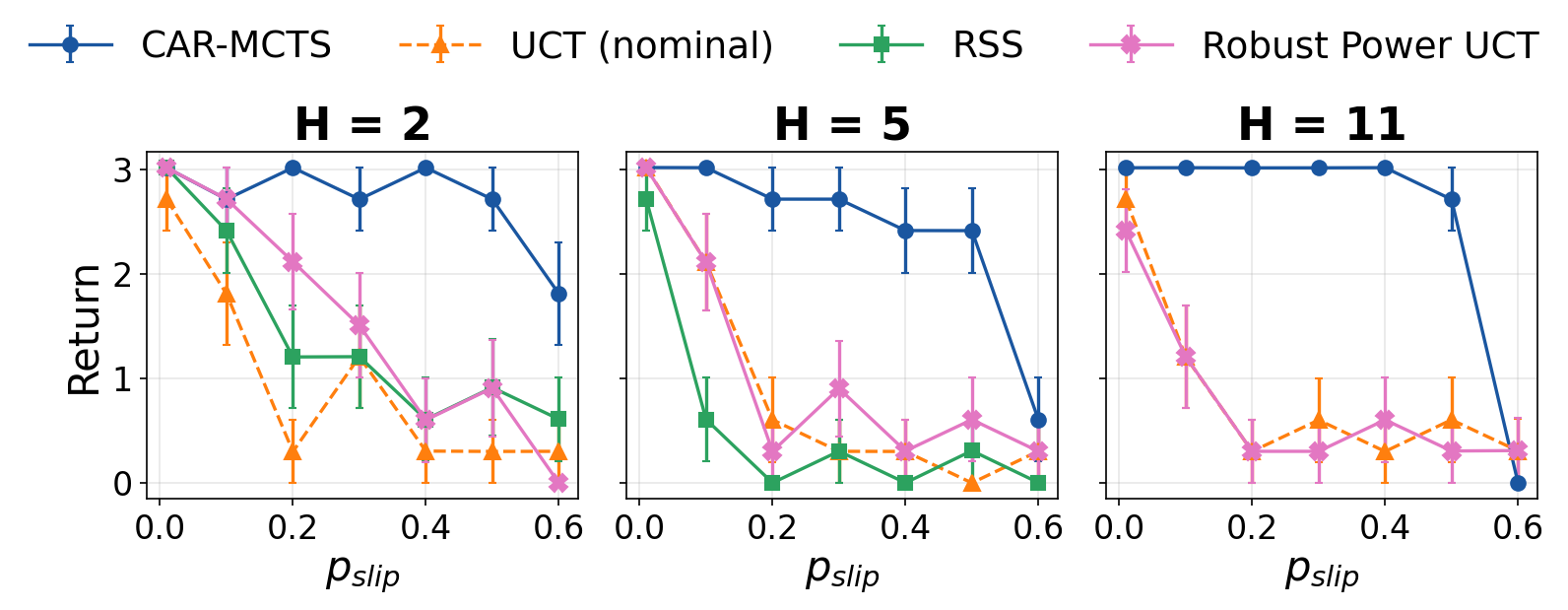}
\vspace{-17pt}
\caption{CAR-MCTS vs.\ robust planning baselines across model mismatch and planning horizons.} 
\label{fig:comparison}
\end{wrapfigure}
In particular, the degradation of nominal UCT
illustrates the limitation of planning with an inaccurate nominal model when
transition uncertainty is significant. 
RSS is omitted at $H=11$ because its sparse-sampling expansion becomes computationally prohibitive. 
At $p_{\mathrm{slip}}=0.6$, CAR-MCTS also deteriorates as the large model
mismatch induces a correspondingly large uncertainty set, leading to increasingly conservative behavior under the robust policy.

\textbf{Scalability with state-space size.}
Figure~\ref{fig:scaling} isolates the dependence of robust planning on
state-space cardinality. For each grid size,
the transition kernel is constructed so that every state has nonzero
probability under every transition row, i.e.,
$|\operatorname{supp} P^T(\cdot\mid s,a)|=|\Sfull|$, while the planning
horizon and simulation budget are held fixed. 
\begin{wrapfigure}[16]{r}{0.4\linewidth}
    \centering
    \includegraphics[width=\linewidth]{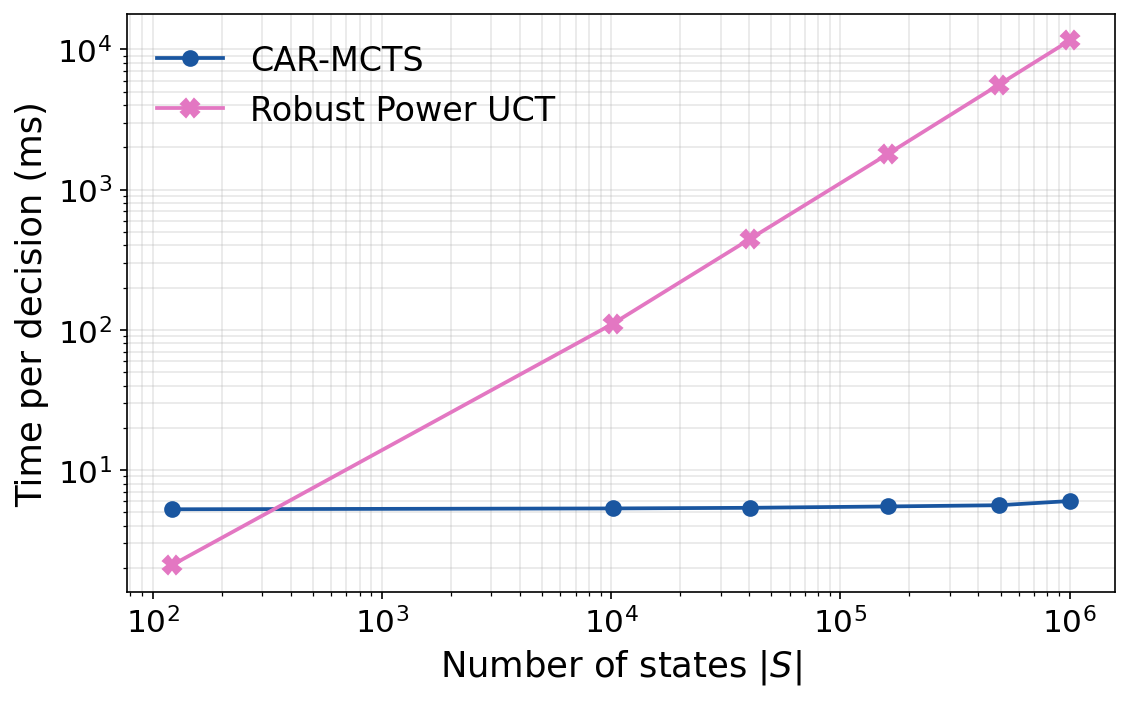}
    \vspace{-17pt}
    \caption{Decision time of CAR-MCTS and Robust Power UCT as $|\Sfull|$ grows under full-support transition kernels.}
    \label{fig:scaling}
\end{wrapfigure}
CAR-MCTS remains nearly constant at $5$--$6$ ms per decision as $|\Sfull|$ grows from $10^2$ to over $10^6$. In contrast, Robust Power UCT
grows from $2.1$ ms to $11.6$ s. This difference stems from the robust backups: CAR-MCTS operates only on the sampled support generated by its fixed simulation budget, 
whereas Robust Power UCT incurs computation over the
full state space, making online planning increasingly impractical as $|\Sfull|$ grows. Thus, CAR-MCTS scales essentially independently of full state-space cardinality, maintaining practical decision times even in very large state spaces. The same distinction is reflected in memory usage: CAR-MCTS requires only the
sampled-support representation and remains at approximately $0.2$ MB across
the sweep, whereas Robust Power UCT's full-state representation grows with
$|\Sfull|$, reaching $152$ MB at $|\Sfull|\approx10^6$.

\vspace{-2pt}
Additional ablations on robustness to increasing environment
slipperiness and model mismatch, sensitivity to the number of feasible successor states, scalability with planning depth, true optimal-action selection rate, and practical
certification and pruning rates are provided
in Appendix~\ref{exp:experiments}.
\vspace{-5pt}
\section{Conclusion}
\vspace{-5pt}
In this work, we presented a methodology for scalable robust online MDP
planning with deterministic anytime guarantees. The core principle we propose is to restrict
computation to the sampled search tree while explicitly accounting for the
contribution of unexplored outcomes, thereby avoiding enumeration of the full
state space. We first derived deterministic lower and upper bounds on the
error between the simplified and full robust values of an arbitrary policy
under general transition uncertainty, including non-rectangular sets, and showed that these certificates
tighten as the sampled support expands and converge as the full support is
recovered. Under rectangular uncertainty, we extended this construction to
recursive action-value certificates that bound the optimal robust value,
enabling provably suboptimal actions to be pruned, reducing unnecessary computation, and optimal root actions to be certified for early termination. Building on these results, we introduced
CAR-MCTS, which integrates the certificates directly into MCTS to guide
exploration, prune resolved branches, and terminate once the root decision is
certified, while retaining deterministic action-value certificates when the
sampling budget is exhausted. Experiments validate the derived guarantees,
demonstrate robustness to model mismatch, and show that CAR-MCTS remains
practical as the full state space grows while competing robust MCTS methods
become increasingly costly. To our knowledge, this is the first robust online
MCTS framework combining scalable sampled-support planning with deterministic
anytime guarantees.

% \TM{
% dams code:\\
% 1. they had a bug in the robust value backup\\
% 2. they say in the paper the root action is picked by argmax Q, but in the code the use max-visits.

% Tamir:\\
% limitations:\\
% 1. reward must be non negative\\
% 2. must have a zero reward state $s_f$\\
% 3. $s_f$ must exists in every model in the uncertainty set not just the nominal
% }

%% file: sections/Acknowledgments.tex
\section{Acknowledgments}
This work was supported by the Israel Ministry of Innovation, Science and Technology.

%% file: sections/appendix.tex
\appendix
\section{Mathematical analysis and proofs}

% ==================================
% Theorem 1
% ==================================
\input{Appendix/theorem_1}

\input{Appendix/lemma_1_value_function}
\input{Appendix/lemma_2_minmax}

\input{Appendix/lemma_3_Rmap}% Rmap
\input{Appendix/lemma_4_minMin}

\input{Appendix/corollary_1_convergance_bound}

% % ==================================
% % Theorem 2
% % ==================================
\clearpage
\input{Appendix/theorem_2}
\input{Appendix/Lemma_5_RUDB}

\clearpage
\input{Appendix/corollary_2_convergance_optimal}

% % ==================================
% % Experiments
% % ==================================
\clearpage
\input{Appendix/complexity}

\input{Appendix/experiments}

%% file: Appendix/theorem_1.tex
\subsection{Proof of Theorem \ref{thm:bounds} (Fixed-policy deterministic bounds):} \label{thm:thm_1_proof}

\textbf{Essentials lemmas and definitions for the proof:}

\bigskip \noindent \textbf{1.} The restriction mapping $\mathcal{R}$:
\begin{align}\label{RTmap}
    \RTseq[t][k-1](\PTseq[t][k-1];h_{k-1}) &=
    \begin{cases}
    \PTseq[t][k-1], & s_{t:k-1} \in \Sin(h_{k-1}), 
    \\
    0, & \text{otherwise}.
    \end{cases} 
\end{align}

\begin{lemmarestated}{lem:value function derivation}
Let H be the horizon of the MDP, $\pi=\{\pi_k\}_{k=t}^{H-1}$ be the policy,$r:\mathcal S\times\mathcal A\to[r_{\min},r_{\max}]$ be the bounded
state-action reward function , and $\PThk$ be the projected transition model defined in \eqref{eq:projected-transition-model}. The theoretical and simplified value functions $V_t^{\pi}$ and $\widehat{V}_t^{\pi}$ at time step $t$ are: 
\begin{align}
    &V_{t}^{\pi }( h_{t}) 
    =
    \sum_{j=t}^{H-1}
    \sum_{s_{t+1:j}}
    \prod_{k=t}^{j-1}
    \PTk\bigl(s_{k+1}\mid s_k,\pi_k(h_k)\bigr) 
    r\bigl(s_j,\pi_j(h_j)\bigr), \label{eq:thm_1_0}
    \\
    &\widehat{V}_t^{\pi}(h_t)
    =
    \sum_{j=t}^{H-1}
    \sum_{s_{t+1:j} \in \Sin(h_j)}
    \prod_{k=t}^{j-1}
    \PThk\bigl(s_{k+1}\mid s_k,\pi_k(h_k)\bigr) 
    r\bigl(s_j,\pi_j(h_j)\bigr).\label{eq:thm_1_1}
\end{align}
\end{lemmarestated}

\begin{lemmarestated}{lem:minmax}
For any 2 functions $f(\theta)$ and $\widehat{f}(\theta)$, the following holds:
\begin{align}
\inf_{\theta} f(\theta)
-
\inf_{{\theta}} \hat{f}(\theta)
\le
\sup_{\theta}
\Bigl(
f(\theta) - \hat{f}({\theta})
\Bigr).
\end{align}
\end{lemmarestated}

\begin{lemmarestated}{lem:Rmap}
    Let $\mathcal{P}$ denote some uncertainty set, and $\widehat{\mathcal{P}}$ denote its projected counterpart defined in \eqref{eq:projected_uncertainty_set_NR_transition}. Let $\mathcal{R}$ be the restriction map that takes a full probability distribution $P \in\mathcal{P}$ and retains only the entries associated with the sampled support:
\begin{align}
    \mathcal{R}(P;\Sin) &=
    \begin{cases}
    P(s), & s \in \Sin, 
    \\
    0, & \text{otherwise}.
    \end{cases}
\end{align}

\noindent Then minimizing some function $f : \widehat{P} \rightarrow \mathbb{R}$ over $\widehat{\mathcal{P}}$ is the same as minimizing $f(\mathcal{R}(P))$ over the full set $\mathcal{P}$, namely: 
\begin{align}
    \inf_{\substack{
    P \in \mathcal{P}
    }}
    {f}(\mathcal{R}(P))
    =
    \inf_{\substack{
    \widehat{P} \in \widehat{\mathcal{P}}
    }}
    {f}(\widehat{P}).
\end{align}
\end{lemmarestated}

\begin{lemmarestated}{lem:minMin}
For any 2 functions $f(\theta)$ and $\widehat{f}(\theta)$, the following holds:
\begin{align}
\inf_{\theta}
\left(
f(\theta) - \hat{f}({\theta})
\right).
\leq
\inf_{\theta} f(\theta)
-
\inf_{{\theta}} \hat{f}(\theta)
\end{align}
\end{lemmarestated}

\smallskip
\noindent\hfill\rule{0.5\linewidth}{0.4pt}\hfill
\smallskip

\textbf{Proof of Theorem~\ref{thm:bounds}.}
% ==================================
% Theorem restatement
% ==================================
\begin{theoremrestated}{thm:bounds}[Fixed-policy deterministic bounds] 
% Let $h_t$ be the history at time $t$, and $V_{t}^{\pi,\mathrm{rob}}(h_t)$ and $\widehat V_t^{\pi,\mathrm{rob}}(h_t)$ be the theoretical and simplified (\ref{eq:simplified-robust-value-function-non-stationary}) robust value functions that are achieved by following the same policy $\pi$, and $\UTseq, \UThseq$ be the original and projected, non-rectangular transition uncertainty set \ref{eq:uncertainty_set_NR_transition},\ref{eq:projected_uncertainty_set_NR_transition}. For any policy $\pi$ and horizon $H$, the difference between the theoretical and  simplified robust value functions is bounded deterministically by:
Let $h_t$ be the history at time $t$, and $V_{t}^{\pi,\mathrm{rob}}(h_t)$ and $\widehat V_t^{\pi,\mathrm{rob}}(h_t)$ denote the robust values of the same policy \(\pi\) in the full \eqref{eq:robust value function non-stationary} and simplified \eqref{eq:simplified-robust-value-function-non-stationary} RMDPs, respectively. For any transition uncertainty set $\UTseq$ and its projected counterpart $\UThseq$:
\begin{align}
    \LB
    \leq
     V_t^{\pi,\mathrm{rob}}(h_t) - \widehat V_t^{\pi,\mathrm{rob}}(h_t)
    \ \leq \
    \UB,
\end{align}
\noindent such that:
\begin{align}
    % &\UB
    % \triangleq
    % r_{max}
    % \Biggl (
    % H-t
    % -
    % \inf_{\PThseq \in \UThseq}
    % m_t^{\pi}\left(\PThseq \right) 
    % \Biggr)
    % \\[1em]
    &\UB
    \triangleq
    \begin{cases}
        r_{max}
        \biggl (
        H-t-
        \inf\limits_{\PThseq \in \UThseq}
        m_t^{\pi}\left(\PThseq \right)
        \biggr)
        , r_{max}\geq 0, 
        \\
        r_{max}
        \biggl (
        H-t-
        \sup\limits_{\PThseq \in \UThseq}
        m_t^{\pi}\left(\PThseq \right)
        \biggr)
        , r_{max} < 0,
    \end{cases}
    \\[1em]
    &\LB
    \triangleq
    \begin{cases}
        r_{min}
        \biggl (
        H-t-
        \sup\limits_{\PThseq \in \UThseq}
        m_t^{\pi}\left(\PThseq \right)
        \biggr)
        , r_{min}\geq 0, 
        \\
        r_{min}
        \biggl (
        H-t-
        \inf\limits_{\PThseq \in \UThseq}
        m_t^{\pi}\left(\PThseq \right)
        \biggr)
        , r_{min} < 0,
    \end{cases}
    % \\[1em]
    % &m_t^{\pi}\left(\PThseq \right)
    % \triangleq
    % \sum_{j=t}^{H-1}
    % \sum_{s_{t+1:j} \in \Sin(h_j)}
    % \prod_{k=t}^{j-1}
    % \PThk\bigl(s_{k+1}\mid s_k,\pi_k(h_k)\bigr)
\end{align}
and
\begin{align}
    m_t^{\pi}\left(\PThseq \right)
    \triangleq
    \sum_{j=t}^{H-1}
    \sum_{s_{t+1:j} \in \Sin(h_j)}
    \prod_{k=t}^{j-1}
    \PThk\bigl(s_{k+1}\mid s_k,\pi_k(h_k)\bigr).
\end{align}
% \TM{
% we can avoid the split for the lower bound $\LB$ if we put the $r_{min}$ inside the inf/sup. the motivation is algorithmic - reuse $m$ when $r_{min}<0$
% }
\end{theoremrestated}

\begin{proof}
% \noindent \\Using the derivation above (\ref{eq:thm_1_0}), (\ref{eq:thm_1_1}) for the theorectical/projected value function: 

% ==================================
% Proof
% ==================================
\noindent \\ The robust non-rectangular value function is defined as:
\begin{align}
    &V_t^{\pi,\mathrm{rob}}(h_t)
    \triangleq
    \inf_{\PTseq \in \UTseq} \
    V_t^{\pi} \left(h_t;\PTseq \right),
\end{align}

\noindent and the simplified robust value function:
\begin{align}
    \widehat{V}_t^{\pi,\mathrm{rob}}(h_t)
    \triangleq
    \inf_{\PThseq \in \UThseq}
    \, 
    \widehat{V}_t^{\pi} \left(h_t; \PThseq \right),    
\end{align}

\noindent where the theoretical and simplified value functions are defined in \eqref{eq:thm_1_0} and \eqref{eq:thm_1_1}. 
\bigskip
\noindent \\ Using Lemma \eqref{lem:Rmap} together with the restriction map $\RTseq[t][H-1]$ from \eqref{RTmap}, which project the full uncertainty set onto the projected uncertainty set, we obtain:
\begin{align}
    \widehat{V}_t^{\pi,\mathrm{rob}}(h_t) =
    \inf_{\PTseq \in \UTseq}
    \, 
    \widehat{V}_t^{\pi}\!\left(h_t; \RTseq[t][H-1](\PTseq[t][H-1])\right).
\end{align}

% =========================
% Upper Bound
% =========================
\noindent We first begin with the \textbf{Upper Bound}:
\begin{align}
    &V_t^{\pi,\mathrm{rob}}(h_t) 
    - 
    \widehat{V}_t^{\pi,\mathrm{rob}}(h_t)
    \nonumber \\[0.5em]
    =
    &\inf_{\PTseq \in \UTseq}
    \, V_t^{\pi}\!\left(h_t;\,\PTseq \right)
    -
    \inf_{\PTseq \in \UTseq}
    \, 
    \widehat{V}_t^{\pi}\!\left(h_t;\,
    \RTseq[t][H-1](\PTseq)\right)
    \\[0.5em]
    \leq
    &\sup_{\PTseq \in \UTseq}
    \left(
    \, V_t^{\pi}\!\left(h_t;\,\PTseq \right)
    -
    \widehat{V}_t^{\pi}\!\left(h_t;\,
    \RTseq[t][H-1](\PTseq)\right)
    \right)
\end{align}

\noindent For the last transition we used Lemma \ref{lem:minmax}.

\noindent Writing the value functions explicitly:
\begin{align}
    \sup_{\PTseq \in \UTseq}
    \Biggl(
    &\sum_{j=t}^{H-1}
    \sum_{s_{t+1:j}}
    \prod_{k=t}^{j-1}
    \PTk\bigl(s_{k+1}\mid s_k,\pi_k(h_k)\bigr) 
    r\bigl(s_j,\pi_j(h_j)\bigr)
    -
    \nonumber \\ 
    &\sum_{j=t}^{H-1}
    \sum_{s_{t+1:j} \in \Sin(h_j)}
    \prod_{k=t}^{j-1}
    \RT_k\left(\PTk\bigl(s_{k+1}\mid s_k,\pi_k(h_k)\bigr) \right)
    r\bigl(s_j,\pi_j(h_j)\bigr)
    \Biggr)
\end{align}

\noindent splitting the sums into in/out support: 
\begin{align} \label{eq:thm_1_inside_cancelation}
    \sup_{\PTseq \in \UTseq}
    \Biggl(
    &\sum_{j=t}^{H-1}
    \Bigl (
    \sum_{s_{t+1:j} \in \Sin(h_j)}
    \prod_{k=t}^{j-1}
    \underbrace{
    \left(
    \PTk\bigl(s_{k+1}\mid s_k,\pi_k(h_k)\bigr) 
    -
    \RT_k\left(\PTk\bigl(s_{k+1}\mid s_k,\pi_k(h_k)\bigr) \right)
    \right)}_{\equiv 0}
    r\bigl(s_j,\pi_j(h_j)\bigr)
    \nonumber \\ 
    &+
    \sum_{s_{t+1:j} \notin \Sin(h_j)}
    \prod_{k=t}^{j-1}
    \PTk\bigl(s_{k+1}\mid s_k,\pi_k(h_k)\bigr) 
    r\bigl(s_j,\pi_j(h_j)\bigr)
    \Bigr)
    \Biggr) 
    =
\end{align}

\noindent The under braced term equals zero by the definition of the Restricted mapping $\mathcal{R}$. Therefore:
\begin{align}
    &\sup_{\PTseq \in \UTseq}
    \left(
    \sum_{j=t}^{H-1}
    \sum_{s_{t+1:j} \notin \Sin(h_j)}
    \prod_{k=t}^{j-1}
    \PTk\bigl(s_{k+1}\mid s_k,\pi_k(h_k)\bigr) 
    r\bigl(s_j,\pi_j(h_j)\bigr)
    \right) \label{eq:thm_1_bound_Sout_true_reward}
    \\
    \leq
    &r_{max}
    \sup_{\PTseq \in \UTseq}
    \left(
    \sum_{j=t}^{H-1}
    \sum_{s_{t+1:j} \notin \Sin(h_j)}
    \prod_{k=t}^{j-1}
    \PTk\bigl(s_{k+1}\mid s_k,\pi_k(h_k)\bigr) 
    \right) \label{eq:thm_1_bound_Sout}
\end{align}

Since $r(s,a)\le r_{\max}$, the out-of-support contribution is upper bounded
by $r_{\max}$ times its probability mass. For $r_{\max}\ge 0$, maximizing
this product is equivalent to maximizing the out-of-support mass, yielding
the supremum in \eqref{eq:thm_1_bound_Sout}. If $r_{\max}<0$, the supremum is replaced by an infimum.

Since $P(s_{t+1:j}|h_t)$ defines a full probability distribution over sequences of states trajectories, we have:
\begin{align}
    \sum_{s_{t+1:j}}
    P(s_{t+1:j}|h_t)
    =
    \sum_{s_{t+1:j}}
    \prod_{k=t}^{j-1}
    \PTk\bigl(s_{k+1}\mid s_k,\pi_k(h_k)\bigr)
    \equiv
    1,
\end{align}

\noindent therefore:
\begin{align}
    \sum_{s_{t+1:j} \notin \Sin(h_j)}
    P(s_{t+1:j}|h_t)
    =
    1
    -
    \sum_{s_{t+1:j} \in \Sin(h_j)}
    P(s_{t+1:j}|h_t).
\end{align}

\noindent Thus \eqref{eq:thm_1_bound_Sout} equals:
\begin{align}
    &r_{max}
    \sup_{\PTseq \in \UTseq}
    \left(
    \sum_{j=t}^{H-1}
    \Biggl (
    1
    -
    \sum_{s_{t+1:j} \in \Sin(h_j)}
    \prod_{k=t}^{j-1}
    \PTk\bigl(s_{k+1}\mid s_k,\pi_k(h_k)\bigr) 
    \Biggr)
    \right)
    =
    \\[1em]
    &r_{max}
    \sup_{\PTseq \in \UTseq}
    \left(
    \sum_{j=t}^{H-1}
    \Biggl (
    1
    -
    \sum_{s_{t+1:j} \in \Sin(h_j)}
    \prod_{k=t}^{j-1}
    \RT_k\left(\PTk\bigl(s_{k+1}\mid s_k,\pi_k(h_k)\bigr) \right) 
    \Biggr)
    \right)
    =
\end{align}

\noindent Using Lemma \ref{lem:Rmap} again:
\begin{align}
    &r_{max}
    \sup_{\PThseq \in \UThseq}
    \left(
    \sum_{j=t}^{H-1}
    \Biggl (
    1
    -
    \sum_{s_{t+1:j} \in \Sin(h_j)}
    \prod_{k=t}^{j-1}
    \PThk\bigl(s_{k+1}\mid s_k,\pi_k(h_k)\bigr) 
    \Biggr)
    \right)
    =
    \\[1em]
    &r_{max}
    \Biggl (
    H-t
    -
    \inf_{\PThseq \in \UThseq}
    \sum_{j=t}^{H-1}
    \sum_{s_{t+1:j} \in \Sin(h_j)}
    \prod_{k=t}^{j-1}
    \PThk\bigl(s_{k+1}\mid s_k,\pi_k(h_k)\bigr) 
    \Biggr)
    \triangleq
    \\[1em]
    &\UB
    \label{thm_1_final_UB}
\end{align}

% =========================
% Lower Bound
% =========================
\noindent \textbf{Lower Bound:}
\begin{align}
    &V_t^{\pi,\mathrm{rob}}(h_t) 
    - 
    \widehat{V}_t^{\pi,\mathrm{rob}}(h_t)
    =
    \nonumber \\[0.5em]
    &\inf_{\PTseq \in \UTseq}
    \, V_t^{\pi}\!\left(h_t;\,\PTseq \right)
    -
    \inf_{\PTseq \in \UTseq}
    \, 
    \widehat{V}_t^{\pi}\!\left(h_t;\,
    \RTseq[t][H-1](\PTseq)\right)
    \geq
    \\[0.5em]
    &\inf_{\PTseq \in \UTseq}
    \left(
    \, V_t^{\pi}\!\left(h_t;\,\PTseq \right)
    -
    \widehat{V}_t^{\pi}\!\left(h_t;\,
    \RTseq[t][H-1](\PTseq)\right)
    \right)
    =
\end{align}

\noindent For the last transition we used Lemma \ref{lem:minMin}. As shown above in \eqref{eq:thm_1_inside_cancelation}, the difference on the support is canceled, and we get:
\begin{align}
    &\inf_{\PTseq \in \UTseq}
    \sum_{j=t}^{H-1}
    \Biggl (
    \sum_{s_{t+1:j} \notin \Sin(h_j)}
    \prod_{k=t}^{j-1}
    \PTk\bigl(s_{k+1}\mid s_k,\pi_k(h_k)\bigr) 
    r\bigl(s_j,\pi_j(h_j)\bigr)
    \Biggr) 
    \geq
\end{align}

% \noindnet we now differentiate between 2 scenarios: $r_{min}\geq 0$ and $r_{min} < 0$,  where $r_{min} \leq r(s,a), \ \ \forall(s,a) \in (\Sfull \times \Afull)$:

% \noindent 1) $r_{min}\geq 0$:
% % 
% \begin{align}
%     &r_{min}
%     \inf_{\PTseq \in \UTseq}
%     \sum_{j=t}^{H-1}
%     \Biggl (
%     \sum_{s_{t+1:j} \notin \Sin(h_j)}
%     \prod_{k=t}^{j-1}
%     \PTk\bigl(s_{k+1}\mid s_k,\pi_k(h_k)\bigr) 
%     \Biggr)
%     =
%     \\[0.5em]
%     &r_{min}
%     \inf_{\PThseq \in \UThseq}
%     \sum_{j=t}^{H-1}
%     \Biggl (
%     1-
%     \sum_{s_{t+1:j} \in \Sin(h_j)}
%     \prod_{k=t}^{j-1}
%     \PThk\bigl(s_{k+1}\mid s_k,\pi_k(h_k)\bigr) 
%     \Biggr)
%     \\[0.5em]
%     =
%     &r_{min}
%     \Biggl (
%     H-t-
%     \sup_{\PThseq \in \UThseq}
%     \sum_{j=t}^{H-1}
%     \sum_{s_{t+1:j} \in \Sin(h_j)}
%     \prod_{k=t}^{j-1}
%     \PThk\bigl(s_{k+1}\mid s_k,\pi_k(h_k)\bigr) 
%     \Biggr)
% \end{align}

% \noindent 2) $r_{min} < 0$:
% 

Since $r(s,a)\ge r_{\min}$, the out-of-support contribution is lower bounded
by $r_{\min}$ times its probability mass. For $r_{\min}\leq 0$, minimizing
this product is equivalent to maximizing the out-of-support mass, yielding the
supremum below. If $r_{\min}>0$, the supremum is replaced by an infimum.
\begin{align}
    &r_{min}
    \sup_{\PTseq \in \UTseq}
    \sum_{j=t}^{H-1}
    \Biggl (
    \sum_{s_{t+1:j} \notin \Sin(h_j)}
    \prod_{k=t}^{j-1}
    \PTk\bigl(s_{k+1}\mid s_k,\pi_k(h_k)\bigr) 
    \Biggr)
    =
    \\[0.5em]
    &r_{min}
    \sup_{\PThseq \in \UThseq}
    \sum_{j=t}^{H-1}
    \Biggl (
    1-
    \sum_{s_{t+1:j} \in \Sin(h_j)}
    \prod_{k=t}^{j-1}
    \PThk\bigl(s_{k+1}\mid s_k,\pi_k(h_k)\bigr) 
    \Biggr)
    =
    \\[0.5em]
    &r_{min}
    \Biggl (
    H-t-
    \inf_{\PThseq \in \UThseq}
    \sum_{j=t}^{H-1}
    \sum_{s_{t+1:j} \in \Sin(h_j)}
    \prod_{k=t}^{j-1}
    \PThk\bigl(s_{k+1}\mid s_k,\pi_k(h_k)\bigr) 
    \Biggr)
    \\[1em]
    &\triangleq
    \LB \label{eq:thm_1_final_LB}
\end{align}

% \noindent To conclude the lower bound for the 2 scenarios we have:
% % 
% \begin{align}
%     &\LB \label{eq:thm_1_final_LB}
%     =
%     \begin{cases}
%         r_{min}
%         \biggl (
%         H-t-
%         \sup\limits_{\PThseq \in \UThseq}
%         m_t^{\pi}\left(\PThseq \right)
%         \biggr)
%         , r_{min}\geq 0, 
%         \\
%         r_{min}
%         \biggl (
%         H-t-
%         \inf\limits_{\PThseq \in \UThseq}
%         m_t^{\pi}\left(\PThseq \right)
%         \biggr)
%         , r_{min} < 0
%     \end{cases}
%     \\[1em]
%     &m_t^{\pi}\left(\PThseq \right)
%     \triangleq
%     \sum_{j=t}^{H-1}
%     \sum_{s_{t+1:j} \in \Sin(h_j)}
%     \prod_{k=t}^{j-1}
%     \PThk\bigl(s_{k+1}\mid s_k,\pi_k(h_k)\bigr) 
% \end{align}

\noindent which concludes the proof.
\end{proof}

\paragraph{Transition to a rectangular setting:}
\noindent \\ If we assume rectangular uncertainty sets \ref{eq:rect uncertainty set}, the above bounds \ref{thm_1_final_UB}, \ref{eq:thm_1_final_LB} can be written as nested optimizations, that allows dynamic programming: 
\begin{align} \label{eq:thm_1_UB_rectangular}
    \UBrec
    \triangleq 
    r_{max} %first
    &\Biggl(
    H -1-t
    -
    \Biggl (
    \inf_{\PThk[t] \in \UTh[t]}
    \sum _{s_{t+1} \in \Sin(h_{t+1}^-)}
    \PThk[t](s_{t+1}\mid \pi_t(h_t),s_t;h_t)\,
    \Biggl (
    \nonumber \\ \nonumber
    &1+
    \inf_{\PThk[t+1] \in \UTh[t+1]}
    \sum _{s_{t+2} \in \Sin(h_{t+2}^-)}
    \PThk[t+1](s_{t+2}\mid \pi_{t+1},s_{t+1})\,
    \Biggl (
    \\ \nonumber 
    &\cdots
    \\ 
    &1+
    \inf_{\PThk[H-2] \in \UTh[H-2]}
    \sum _{s_{H-1} \in \Sin(h_{H-1}^-)}
    \PThk[H-2](s_{H-1}\mid \pi_{H-2},s_{H-2})\,
    \Biggr )  
    % \Biggr )
    % \Biggr )
    % \Biggr )
    \cdots
    \Biggr )
\end{align}
\begin{align} \label{eq:thm_1_LB_rectangular}
    \LBrec
    \triangleq 
    r_{min} %first
    &\Biggl(
    H -1-t
    -
    \Biggl (
    \inf_{\PThk[t] \in \UTh[t]}
    \sum _{s_{t+1} \in \Sin(h_{t+1}^-)}
    \PThk[t](s_{t+1}\mid \pi_t(h_t),s_t;h_t)\,
    \Biggl (
    \nonumber \\ \nonumber
    &1+
    \inf_{\PThk[t+1] \in \UTh[t+1]}
    \sum _{s_{t+2} \in \Sin(h_{t+2}^-)}
    \PThk[t+1](s_{t+2}\mid \pi_{t+1},s_{t+1})\,
    \Biggl (
    \\ \nonumber 
    &\cdots
    \\ 
    &1+
    \inf_{\PThk[H-2] \in \UTh[H-2]}
    \sum _{s_{H-1} \in \Sin(h_{H-1}^-)}
    \PThk[H-2](s_{H-1}\mid \pi_{H-2},s_{H-2})\,
    \Biggr )  
    % \Biggr )
    % \Biggr )
    % \Biggr )
    \cdots
    \Biggr )
\end{align}

\noindent  We can write \ref{eq:thm_1_UB_rectangular}, \ref{eq:thm_1_LB_rectangular} in a compact form, recursively:
\begin{align} \label{eq:thm_1_rect_bounds}
    &\UBrec
    \triangleq
    r_{max}
    \Bigl (
    H -1- t
    -
    \delta_{t}^{\pi^U}(s_t|h_t)
    \Bigr )
    \\[1em]
    &\LBrec
    \triangleq
    r_{min}
    \Bigl (
    H -1- t
    -
    \delta_{t}^{\pi^L}(s_t|h_t)
    \Bigr )
\end{align}

\noindent Such that for $k \in [t, H-2]$:
\begin{align} 
    \delta_{k}^{\pi}(s_k|h_k) \label{eq:thm_1_delta}
    &=
    \inf_{\PThk \in \UTh[k]}
    \delta_{k}^{\pi, \PThk}(s_k, \pi_k(h_k))
    \\[1em]
    \delta_{k}^{\pi,\PThk}(s_k,a_k|h_k) \label{eq:thm_1_delta_a}
    &=
    \sum _{s_{k+1} \in \Sin(h_{k+1}^-)}
    \PThk(s_{k+1}\mid s_{k},a_{k})\,
    \left [
    1 
    +
    \delta_{k+1}^{\pi}(s_{k+1}|h_{k+1})
    \right ]
\end{align}

\noindent and for $k=H-1$ there is no continuation therefore:
\begin{align}
    \delta_{H-1}^{\pi}(s_{H-1})
    \equiv
    0
\end{align}

\bigskip \noindent \textbf{Note}:  we switch  the $inf$ to $sup$:
\begin{enumerate}
    \item In equations \ref{eq:thm_1_LB_rectangular}, \ref{eq:thm_1_delta}: if $r_{\min}>0$.
    \item In equations \ref{eq:thm_1_UB_rectangular}, \ref{eq:thm_1_delta}: if $r_{\max}<0$.
    \item $\delta$ is shared only when $r_{min}\leq 0\leq r_{max}$
\end{enumerate}

\smallskip
\noindent\rule{\linewidth}{0.4pt}
\smallskip

%% file: Appendix/lemma_1_value_function.tex
\subsection{Proof of Lemma \ref{lem:value function derivation}}

\begin{lemma}\label{lem:value function derivation}
Let H be the horizon of the MDP, $\pi=\{\pi_k\}_{k=t}^{H-1}$ be the policy,$r:\mathcal S\times\mathcal A\to[r_{\min},r_{\max}]$ be the bounded
state-action reward function , and $\PThk$ be the projected transition model defined in \eqref{eq:projected-transition-model}. The theoretical and simplified value functions $V_t^{\pi}$ and $\widehat{V}_t^{\pi}$ at time step $t$ are: 
\begin{align}
    &V_{t}^{\pi }( h_{t}) \label{eq:value function derived}
    =
    \sum_{j=t}^{H-1}
    \sum_{s_{t+1:j}}
    \prod_{k=t}^{j-1}
    \PTk\bigl(s_{k+1}\mid s_k,\pi_k(h_k)\bigr) 
    r\bigl(s_j,\pi_j(h_j)\bigr),
    \\
    &\widehat{V}_t^{\pi}(h_t)\label{eq:simp value function derived}
    =
    \sum_{j=t}^{H-1}
    \sum_{s_{t+1:j} \in \Sin(h_j)}
    \prod_{k=t}^{j-1}
    \PThk\bigl(s_{k+1}\mid s_k,\pi_k(h_k)\bigr) 
    r\bigl(s_j,\pi_j(h_j)\bigr).
\end{align}
\end{lemma}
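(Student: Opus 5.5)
We prove both identities from the definition \eqref{eq:value function} by expanding the conditional expectation into explicit sums over state sequences. The first step is to use linearity of expectation to write
\[
V_t^\pi(h_t)=r\bigl(s_t,\pi_t(h_t)\bigr)+\sum_{j=t+1}^{H-1}\mathbb{E}^{\pi,\PT}\bigl[r(s_j,\pi_j(h_j))\mid h_t\bigr].
\]
The $j=t$ term matches the empty-product convention, since $s_{t+1:t}$ is the empty sequence and the product equals $1$.

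For each $j>t$, we use that the policy is deterministic: every $a_k=\pi_k(h_k)$ is determined by $(h_t,s_{t+1:k})$. Therefore, under the non-stationary model sequence $\PTseq$, the law of $s_{t+1:j}$ given $h_t$ factorizes by the chain rule and the Markov property of each kernel $\PTk$:
\[
\prod_{k=t}^{j-1}\PTk\bigl(s_{k+1}\mid s_k,\pi_k(h_k)\bigr).
\]
Writing the expectation of $r(s_j,\pi_j(h_j))$, a function of $s_{t+1:j}$ given $h_t$, as a finite sum over $s_{t+1:j}\in\Sfull^{\,j-t}$ against this law gives \eqref{eq:value function derived}. A short induction on $j$, marginalizing out $s_{j+1:H-1}$, justifies replacing the full-horizon expectation by the length-$(j-t)$ marginal.

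For \eqref{eq:simp value function derived}, the statement is essentially the definition \eqref{eq:simp value function}. To confirm consistency, we apply the restriction map \eqref{RTmap}, which sets the transition factor to $0$ whenever $s_{k+1}\notin\Sin(h_{k+1}^-)$. Substituting $\RT[k](\PTk)$ for $\PTk$ in the full-sum formula kills every sequence that leaves the sampled support at some step. Only $s_{t+1:j}\in\Sin(h_j^-)$ survive, and on those sequences the factors agree with $\PThk$ by \eqref{eq:projected-transition-model}. We note that no renormalization occurs, so this is not an expectation.

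The only delicate point is bookkeeping. The support condition $s_{t+1:j}\in\Sin(h_j)$ is history-dependent, because $\Sin(h_{k+1}^-)$ depends on the earlier states. The sum must therefore be read as nested sums, with each $s_{k+1}$ ranging over the support at its own history node rather than over a product set. We will state this convention explicitly. Everything else is routine finite summation, and since $\Sfull$ is finite and the horizon is finite, no convergence issues arise.
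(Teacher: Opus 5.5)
Your proposal is correct and follows essentially the same route as the paper: linearity of expectation, then the chain rule with the Markov property to factor $P(s_{t+1:j}\mid h_t)$ into $\prod_{k=t}^{j-1}P^T_k(s_{k+1}\mid s_k,\pi_k(h_k))$, and then reading off the simplified value as the same sum with the unnormalized projected kernels restricted to the sampled support. Your extra bookkeeping spells out steps the paper leaves implicit: the empty-product $j=t$ term, the explicit marginalization over $s_{j+1:H-1}$, the restriction-map consistency check, and the history-dependent nested-sum reading of the support.
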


% ==================================
% Proof
% ==================================
\begin{proof}
\noindent The value function is the expected future cumulative reward: 
\begin{align} \label{eq:value_supplemantry}
    V_{t}^{\pi }( h_{t}) =
    \mathbb{E}^{\pi, \PT }_{s_{t+1:H-1}}
    \left[ \sum _{j=t}^{H-1} r(s_{j} ,\pi_j) \mid h_t\right] .
\end{align}

\noindent By the linearity of the expectation $\mathbb{E}[\sum f(\cdot)]=\sum \mathbb{E}[f(\cdot)]$, and writing the expectation explicitly:
\begin{align} \label{eq:lem1_expectation}
    \sum _{j=t}^{H-1}
    \mathbb{E}^{\pi,\PT }_{s_{t+1:H-1}} \left[r(s_{j} ,\pi_j)\mid h_t \right]
    =
    \sum _{j=t}^{H-1}
    \sum _{s_{t+1:j}}
    P\bigl(s_{t+1:j}|h_t\bigr)
    r(s_{j} ,\pi_j).
\end{align}

\noindent Applying chain rule on the states sequence probability:
\begin{align}
    P(s_{t+1:j}|h_t)
    =
    \prod_{k=t}^{j-1}
    P(s_{k+1}|h_t, s_{t+1:k}).
\end{align}

\noindent We assume markovian models, therefore the probability of transition to next state depends only on current state and action. since $h_k=(s_{t:k}, a_{t:k-1})$, the states sequence probability is:
\begin{align} \label{eq:lem1_state_seq_prob}
    P(s_{t+1:j}|h_t)
    =
    \prod_{k=t}^{j-1}
    \PTk \bigl(s_{k+1}|s_k, \pi(h_k)\bigr).
\end{align}

\noindent Where in \eqref{eq:lem1_state_seq_prob} we assume non-stationary transition model. Using \eqref{eq:lem1_expectation} and \eqref{eq:lem1_state_seq_prob}, we derived the final expression for the value function:
\begin{align}
    V_{t}^{\pi }(h_{t})
    =
    \sum _{j=t}^{H-1}
    \sum _{s_{t+1:j}}
    \prod_{k=t}^{j-1}
    \PTk \bigl(s_{k+1}|s_k, \pi(h_k)\bigr)
    r\bigl(s_j,\pi_j(h_j)\bigr).
\end{align}

\noindent Similarly, using the projected transition model defined in \eqref{eq:projected-transition-model}, the simplified value function is:
\begin{align}
    \widehat{V}_t^{\pi}(h_t)
    =
    \sum_{j=t}^{H-1}
    \sum_{s_{t+1:j} \in \Sin(h_j)}
    \prod_{k=t}^{j-1}
    \PThk\bigl(s_{k+1}\mid s_k,\pi_k(h_k)\bigr) 
    r\bigl(s_j,\pi_j(h_j)\bigr).
\end{align}

\end{proof}

\noindent For \textbf{stochastic policies}:
\begin{align}
    V_{t}^{\pi }(h_{t})
    =
    \sum _{j=t}^{H-1}
    \sum_{a_{t:j}}
    \sum _{s_{t+1:j}}
    \left(
    \prod_{k=t}^{j-1}
    \pi(a_k|h_k)
    \PTk(s_{k+1}|s_k, a_k)
    \right)
    \pi(a_j|h_j)
    r(s_j,a_j).
\end{align}

\smallskip
\noindent\rule{\linewidth}{0.4pt}
\smallskip

%% file: Appendix/lemma_2_minmax.tex
\subsection{Proof of Lemma \ref{lem:minmax}} 

\begin{lemma}\label{lem:minmax}
for any 2 function $f(\theta), \widehat{f}(\theta)$, the following holds:
\begin{align}
\inf_{\theta} f(\theta)
-
\inf_{{\theta}} \hat{f}(\theta)
\le
\sup_{\theta}
\Bigl(
f(\theta) - \hat{f}({\theta})
\Bigr)
\end{align}
\end{lemma}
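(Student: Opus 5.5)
The plan is the standard pointwise-comparison argument for infima over a common index set. Fix an arbitrary $\theta'$ in the common domain. By definition of the infimum, $\inf_{\theta} f(\theta) \le f(\theta')$. Subtracting $\hat f(\theta')$ from both sides gives
\begin{align}
\inf_{\theta} f(\theta) - \hat f(\theta') \le f(\theta') - \hat f(\theta') \le \sup_{\theta}\bigl(f(\theta)-\hat f(\theta)\bigr).
\end{align}
The point is that evaluating both functions at the \emph{same} parameter $\theta'$ turns the difference of infima into a pointwise difference, which the supremum then dominates.

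Next, since $\theta'$ was arbitrary, the left-hand side is bounded by the parameter-independent quantity $S \triangleq \sup_{\theta}(f-\hat f)$ for every $\theta'$. Rearranging gives $\inf_{\theta} f(\theta) - S \le \hat f(\theta')$ for all $\theta'$. Thus $\inf_\theta f - S$ is a lower bound on $\hat f$, and taking the infimum over $\theta'$ yields $\inf_\theta f - S \le \inf_\theta \hat f$. Rearranging once more gives exactly the claimed inequality $\inf f - \inf \hat f \le S$.

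The only real obstacle is extended-real arithmetic. I would assume, as in the application in Theorem~\ref{thm:bounds}, that all quantities are finite; this holds there because $f$ and $\hat f$ are value functions with bounded rewards over a finite horizon, so both infima and the supremum are finite. I would note this briefly for completeness. If $\inf \hat f=-\infty$ or $S=+\infty$, the statement either becomes trivial or requires the convention $\infty-\infty$ to be excluded.

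The companion Lemma~\ref{lem:minMin} follows symmetrically. For any $\theta'$, $\inf_\theta \hat f \le \hat f(\theta')$ implies $f(\theta')-\hat f(\theta') \le f(\theta') - \inf \hat f$. Taking the infimum over $\theta'$ on both sides then gives the reverse-direction bound.
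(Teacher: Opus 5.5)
Your proof is correct and follows essentially the same route as the paper's: both bound $f(\theta)$ pointwise by $\sup_\theta(f-\hat f)+\hat f(\theta)$ and then take the infimum over $\theta$. The paper starts directly from $f(\theta)\le\varepsilon+\hat f(\theta)$, whereas you first pass through $\inf_\theta f\le f(\theta')$, which is the same step, and your remark on the finiteness of the quantities is a reasonable addition the paper leaves implicit.
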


\begin{proof}
\noindent \\Let
\[
\varepsilon 
\triangleq
\sup_{\theta \in \Theta} 
\Bigl(f(\theta)-g(\theta)\Bigr)
\]

\noindent Then for every $\theta$,
\[
f(\theta) - g(\theta)
\leq
\varepsilon
\ \
\rightarrow
\ \
f(\theta)
\leq
\varepsilon
+ g(\theta)
\]

\noindent Taking minima gives
\[
\inf_{\theta} f(\theta)
\le
\inf_{\theta} g(\theta) + \varepsilon
\]

\noindent Therefore
\[
\inf_{\theta} f(\theta)
-
\inf_{\theta} g(\theta) 
\leq
\varepsilon
\triangleq
\sup_{\theta \in \Theta} 
\Bigl(f(\theta)-g(\theta)\Bigr)
\]

\end{proof}
% \end{lemma}

\smallskip
\noindent\rule{\linewidth}{0.4pt}
\smallskip

%% file: Appendix/lemma_3_Rmap.tex
\subsection{Proof of Lemma \ref{lem:Rmap}} 

\begin{lemma}\label{lem:Rmap}
Let $\mathcal{P}$ denote some uncertainty set, and $\widehat{\mathcal{P}}$ denote its projected version. Let $\mathcal{R}$ be the restriction map that takes a full probability distribution $P \in\mathcal{P}$ and retains only the entries associated with the sampled support:
\begin{align}
    \mathcal{R}(P;\Sin) &=
    \begin{cases}
    P(s), & s \in \Sin, 
    \\
    0, & \text{otherwise}.
    \end{cases}
\end{align}

% \VI{
% Can we relate this back to \eqref{eq:projected-transition-model} and \eqref{eq:projected-obervation-model}, even for a single time instant?}
\noindent Then minimizing some function $f : \widehat{P} \rightarrow \mathbb{R}$ over $\widehat{\mathcal{P}}$ is the same as minimizing $f(\mathcal{R}(P))$ over the full set $\mathcal{P}$, namely: 
\begin{align}
    \inf_{\substack{
    P \in \mathcal{P}
    }}
    {f}(\mathcal{R}(P))
    =
    \inf_{\substack{
    \widehat{P} \in \widehat{\mathcal{P}}
    }}
    {f}(\widehat{P}),
\end{align}
\end{lemma}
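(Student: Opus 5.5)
The plan is to prove the equality by two inequalities, using only the definition of $\widehat{\mathcal{P}}$ in \eqref{eq:projected_uncertainty_set_NR_transition} as the image of $\mathcal{P}$ under $\mathcal{R}$:
\begin{align}
\widehat{\mathcal{P}} = \mathcal{R}(\mathcal{P}) = \{\mathcal{R}(P;\Sin) : P\in\mathcal{P}\}.
\end{align}
No structure of $f$ (continuity, convexity) and no structure of $\mathcal{P}$ (rectangularity, compactness) should be needed. The result is then a general fact: the infimum of $f\circ\mathcal{R}$ over a set equals the infimum of $f$ over the image of that set.

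\emph{First inequality} ($\ge$). Fix any $P\in\mathcal{P}$. By definition of the image, $\widehat{P}\triangleq\mathcal{R}(P;\Sin)\in\widehat{\mathcal{P}}$. Hence
\begin{align}
f(\mathcal{R}(P)) = f(\widehat{P}) \ge \inf_{\widehat{P}'\in\widehat{\mathcal{P}}} f(\widehat{P}').
\end{align}
Taking the infimum over $P\in\mathcal{P}$ on the left-hand side gives
\begin{align}
\inf_{P\in\mathcal{P}} f(\mathcal{R}(P)) \ge \inf_{\widehat{P}\in\widehat{\mathcal{P}}} f(\widehat{P}).
\end{align}

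\emph{Second inequality} ($\le$). Fix any $\widehat{P}\in\widehat{\mathcal{P}}$. By definition of the image, there exists at least one completion $P\in\mathcal{P}$ with $\mathcal{R}(P;\Sin)=\widehat{P}$; we simply pick one, and no measurable selection is needed since we argue pointwise. Then
\begin{align}
f(\widehat{P}) = f(\mathcal{R}(P)) \ge \inf_{P'\in\mathcal{P}} f(\mathcal{R}(P')).
\end{align}
Taking the infimum over $\widehat{P}\in\widehat{\mathcal{P}}$ yields the reverse inequality, and the two together give the claimed equality.

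I would also check two edge cases. First, if $\mathcal{P}$ is empty, then $\widehat{\mathcal{P}}$ is empty and both infima equal $+\infty$. Second, if either infimum is $-\infty$, the inequalities above hold verbatim in the extended reals, because each one is a pointwise bound followed by taking an infimum.

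I would add a remark that the same argument applies to suprema, which is how the lemma is used for $\sup m_t^\pi$ in the proof of Theorem~\ref{thm:bounds}. It also applies componentwise over the horizon with $\mathcal{R}^T_{t:H-1}$, because the projected set there is defined as the image of the sequence set.

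There is no real obstacle. The only subtle point is making sure the projected set is defined as the \emph{image} of $\mathcal{P}$, rather than by re-imposing the distance constraint on the restricted vectors. Only with the image definition does every $\widehat{P}$ admit an admissible completion, and the second inequality depends on that.
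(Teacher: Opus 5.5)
Your proof is correct and matches the paper's argument. Both derive the two inequalities directly from the image definition $\widehat{\mathcal{P}}=\mathcal{R}(\mathcal{P})$: one direction uses $\mathcal{R}(P)\in\widehat{\mathcal{P}}$ for every $P\in\mathcal{P}$, and the other uses the existence of a preimage $P\in\mathcal{P}$ for every $\widehat{P}\in\widehat{\mathcal{P}}$. Your edge-case checks and your remark on the supremum version (which the paper does use in the proof of Theorem~\ref{thm:bounds}) are sound additions that the paper leaves implicit.
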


\begin{proof}
\noindent \\By the definition of the projected uncertainty sets as the images of the full uncertainty sets under the restriction maps, we have
\begin{align}
    \widehat{\mathcal P}
    =
    \mathcal R
    \left(
    \mathcal P
    \right),
\end{align}

\noindent We prove the desired equality by showing both inequalities. First, let
\begin{align}
    P \in \mathcal P,
\end{align}

\noindent be arbitrary full distribution. By definition:
\begin{align}
    \mathcal R(P)\in\widehat{\mathcal P},
\end{align}

\noindent Therefore,
\begin{align}
    \inf_{\widehat P\in\widehat{\mathcal P} }
    f(\widehat{P})
    \le
    f(\mathcal R(P)).
\end{align}

\noindent Since this holds for every
\begin{align}
    P \in\mathcal P,
\end{align}

\noindent we can take the infimum over the full uncertainty set and obtain
\begin{align}
    \inf_{\widehat P\in\widehat{\mathcal P} }
    f(\widehat{P})
    \le
    \inf_{ P\in \mathcal P }
    f(\mathcal R(P)).
\end{align}

\noindent For the opposite direction, let
\begin{align}
    \widehat P\in\widehat{\mathcal P},
\end{align}

\noindent be arbitrary projected distribution. Since the projected uncertainty set is defined as the image of the full uncertainty set, there exist a full model:
\begin{align}
    P \in \mathcal P,
\end{align}

\noindent such that
\begin{align}
    \widehat P
    =
    \mathcal R(P),
\end{align}

\noindent Therefore,
\begin{align}
    f(\widehat P)
    =
     f(\mathcal R(P))
\end{align}

\noindent Hence,
\begin{align}
    \inf_{ P\in \mathcal P }
    f(\mathcal R(P))
    \le
    f(\widehat{P}).
\end{align}

\noindent Since this holds for every
\begin{align}
    \widehat P\in\widehat{\mathcal P},
\end{align}

\noindent we can take the infimum over the projected uncertainty set and obtain

\begin{align}
    \inf_{ P\in \mathcal P }
    f(\mathcal R(P))
    \le
    \inf_{ \widehat P\in \widehat{\mathcal P} }
    f(\widehat{P}).
\end{align}

\noindent Combining the two inequalities gives equality:
\begin{align}
    \inf_{ P\in \mathcal P }
    f(\mathcal R(P))
    =
    \inf_{ \widehat P\in \widehat{\mathcal P} }
    f(\widehat{P}).
\end{align}

\noindent Which proves the Lemma.

\end{proof}
% \end{lemma}

\smallskip
\noindent\rule{\linewidth}{0.4pt}
\smallskip

%% file: Appendix/lemma_4_minMin.tex
\subsection{Proof of Lemma \ref{lem:minMin}} 

\begin{lemma}\label{lem:minMin}
for any 2 function $f(\theta), \widehat{f}(\theta)$, the following holds:
\begin{align}
\inf_{\theta}
\left(
f(\theta) - \hat{f}({\theta})
\right)
\leq
\inf_{\theta} f(\theta)
-
\inf_{{\theta}} \hat{f}(\theta)
\end{align}
\end{lemma}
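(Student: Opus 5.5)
The plan is to mirror the proof of Lemma~\ref{lem:minmax}, this time taking an infimum of the difference rather than a supremum. Let $g\equiv\hat f$ and set
\[
\eta \triangleq \inf_{\theta}\bigl(f(\theta)-g(\theta)\bigr).
\]
By definition of the infimum, every $\theta$ satisfies $f(\theta)-g(\theta)\ge \eta$, that is,
\[
f(\theta)\ \ge\ \eta+g(\theta)\ \ge\ \eta+\inf_{\theta'} g(\theta').
\]
The second inequality holds because $g(\theta)\ge\inf_{\theta'}g(\theta')$ pointwise.

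The right-hand side no longer depends on $\theta$, so it is a common lower bound for $f$. Taking the infimum over $\theta$ on the left therefore gives
\[
\inf_\theta f(\theta)\ \ge\ \eta+\inf_\theta g(\theta).
\]
Rearranging yields $\inf_\theta f-\inf_\theta g\ge \eta$, which is exactly the claim. An equivalent route is to apply Lemma~\ref{lem:minmax} with the roles of $f$ and $g$ swapped and then negate: $\inf g-\inf f\le\sup(g-f)=-\inf(f-g)$.

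The only real obstacle is bookkeeping with infinite values. The difference $\inf f-\inf g$ must be well defined, which rules out cases such as $\infty-\infty$. In the paper's setting $f$ and $g$ are the true and projected value functions over an uncertainty set. Both are bounded by $(H-t)\max(|r_{\min}|,|r_{\max}|)$, so every infimum involved is finite and the subtraction is legitimate. I will state this finiteness assumption explicitly, after which the argument above is complete.
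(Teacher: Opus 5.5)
Your proof is correct and follows essentially the same approach as the paper. Both set $\varepsilon=\inf_\theta\bigl(f(\theta)-g(\theta)\bigr)$, derive $\varepsilon+g(\theta)\le f(\theta)$ pointwise, take infima, and rearrange. Your explicit use of $g(\theta)\ge\inf_{\theta'}g(\theta')$ before taking the infimum, and your finiteness remark (left implicit in the paper), make the step slightly more careful without changing the argument.
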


\begin{proof}
\noindent \\Let
\[
\varepsilon 
\triangleq
\inf_{\theta \in \Theta} 
\Bigl(
f(\theta)-g(\theta)
\Bigr)
\]

\noindent Then for every $\theta$,
\[
\varepsilon
\leq
f(\theta) - g(\theta)
\ \
\rightarrow
\ \
\varepsilon
+ g(\theta)
\leq
f(\theta)
\]

\noindent Taking minima gives
\[
\varepsilon
+
\inf_{\theta} g(\theta)
\le
\inf_{\theta} f(\theta)
\]

\noindent Therefore
\[
\varepsilon
\triangleq
\inf_{\theta \in \Theta} 
\Bigl(f(\theta)-g(\theta)\Bigr)
\leq
\inf_{\theta} f(\theta)
-
\inf_{\theta} g(\theta) 
\]

\end{proof}
% \end{lemma}

\medskip
\noindent\rule{\linewidth}{0.4pt}
\medskip

%% file: Appendix/corollary_1_convergance_bound.tex
\subsection{Proof of Corollary \ref{cor:Bound-diminishing} (Asymptotic convergence)}\label{cor:cor_1_proof}

\begin{corollaryrestated}{cor:Bound-diminishing}[Asymptotic convergence] 
For any policy $\pi$, as the sampled state support expands to the full state space at 
every node $\Sin(h_k^-) \uparrow \Sfull, \ k=t,\ldots, H$, the simplified robust 
value \eqref{eq:simplified-robust-value-function-non-stationary} convergences to the true robust
value \eqref{eq:robust value function non-stationary}:
\begin{align}
    \widehat V_t^{\pi,\mathrm{rob}}(h_t)
    \xrightarrow{
    \substack{
    \Sin \rightarrow \Sfull
    }}
    V_t^{\pi,\mathrm{rob}}(h_t).
\end{align}
\end{corollaryrestated}

\begin{proof}
\noindent Restating the upper and lower bounds of theorem \ref{thm:bounds}:
\begin{align}
    % &\UB
    % \triangleq
    % r_{max}
    % \Biggl (
    % H-t
    % -
    % \inf_{\PThseq \in \UThseq}
    % m_t^{\pi}\left(\PThseq \right) 
    % \Biggr)
    % \\[1em]
    &\UB
    \triangleq
    \begin{cases}
        r_{max}
        \biggl (
        H-t-
        \inf\limits_{\PThseq \in \UThseq}
        m_t^{\pi}\left(\PThseq \right)
        \biggr)
        , r_{max}\geq 0, 
        \\
        r_{max}
        \biggl (
        H-t-
        \sup\limits_{\PThseq \in \UThseq}
        m_t^{\pi}\left(\PThseq \right)
        \biggr)
        , r_{max} < 0
    \end{cases}
    \\[1em]
    &\LB
    \triangleq
    \begin{cases}
        r_{min}
        \biggl (
        H-t-
        \sup\limits_{\PThseq \in \UThseq}
        m_t^{\pi}\left(\PThseq \right)
        \biggr)
        , r_{min}\geq 0, 
        \\
        r_{min}
        \biggl (
        H-t-
        \inf\limits_{\PThseq \in \UThseq}
        m_t^{\pi}\left(\PThseq \right)
        \biggr)
        , r_{min} < 0
    \end{cases}
    \\[1em]
    &m_t^{\pi}\left(\PThseq \right)
    \triangleq
    \sum_{j=t}^{H-1}
    \sum_{s_{t+1:j} \in \Sin(h_j)}
    \prod_{k=t}^{j-1}
    \PThk\bigl(s_{k+1}\mid s_k,\pi_k(h_k)\bigr)
\end{align}

\noindent The  state-trajectory likelihood is bounded by 1, and is monotonic non-decreasing for every new state sample that is added, and equals 1 over the full state space, namely:
\begin{align}
    \sum_{s_{t+1:j} \in \Sin(h_j)}
    P(s_{t+1:j}|h_t)
    =
    \sum_{s_{t+1:j} \in \Sin(h_j)}
    \prod_{k=t}^{j-1}
    \PTk\bigl(s_{k+1}\mid s_k,\pi_k(h_k)\bigr)
    \xrightarrow{
    \substack{
    \Sin \rightarrow \Sfull
    }}
    1,
\end{align}

\noindent Therefore
\begin{align}
    m_t^{\pi}\left(\PThseq \right)
    \xrightarrow{
    \substack{
    \Sin \rightarrow \Sfull
    }}
    H - t
\end{align}

\noindent Which induces:
\begin{align}
    \UB
    \xrightarrow{
    \substack{
    \Sin \rightarrow \Sfull
    }}
    0
    \\
    \LB
    \xrightarrow{
    \substack{
    \Sin \rightarrow \Sfull
    }}
    0
\end{align}

\end{proof}

\smallskip
\noindent\rule{\linewidth}{0.4pt}
\smallskip

%% file: Appendix/theorem_2.tex
\subsection{Proof of Theorem \ref{thm:optimal-bounds} (Optimal robust value bounds):} \label{thm:thm_2_proof}

% ==================================
% useful definitions
% ==================================
\textbf{Essentials lemmas and definitions for the proof: }
\\[1em] \noindent  \textbf{1.} In-support expectation operator:
\begin{align}
    \widehat{\mathbb{E}}_{s}
        \!\left[
        f(h)
        \right]
        =
        \sum_{s \in \Sin}
        \PTt(s)\cdot
        f(h)
\end{align}

\noindent  \textbf{2.} Upper robust deterministic bound: 
    \begin{align} \label{eq:urdba}
        \RUDBa
        &\triangleq
        r(s_t,a_t)
        +
        \inf_{\PTht \in \UTht}
        \Biggl(
        \widehat{\mathbb{E}}_{s_{t+1}|h_t}^{\PTht}
        \!\left[
        \widehat V_{t+1,rect}^{\pi^U,\widehat{P}_{t+1}^{T, \mathcal{URDB}}}(h_{t+1}) 
        \right]
        +
        \mathcal{U}_{t,rect}^{\pi^U,\PTht}(h_{t},a_{t})
        \Biggr)
        \\
        &=
        \widehat Q_{t,rect}^{\pi^U,\widehat{P}_{t}^{T, \mathcal{URDB}}}(h_{t},a_{t})
        +
        \mathcal{U}_{t,rect}^{\pi^U,\widehat{P}_{t}^{T, \mathcal{URDB}}}(h_{t},a_{t})
    \end{align}

\noindent \textbf{3.} Lower robust deterministic bound:
    \begin{align} \label{eq:lrdba}
        \RLDBa
        &\triangleq
        r(s_t,a_t)
        +
        \inf_{\PTht \in \UTht}
        \Biggl(
        \widehat{\mathbb{E}}_{s_{t+1}|h_t}^{\PTht}
        \!\left[
        \widehat V_{t+1,rect}^{\pi^L,\widehat{P}_{t+1}^{T, \mathcal{LRDB}}}(h_{t+1}) 
        \right]
        +
        \mathcal{L}_{t,rect}^{\pi^L,\PTht}(h_{t},a_{t})
        \Biggr)
        \\
        &=
        \widehat Q_{t,rect}^{\pi^L,\widehat{P}_{t}^{T, \mathcal{LRDB}}}(h_{t},a_{t})
        +
        \mathcal{L}_{t,rect}^{\pi^L,\widehat{P}_{t}^{T, \mathcal{LRDB}}}(h_{t},a_{t})
    \end{align}

\begin{lemmarestated}{lem:rudb}
Let $h_t$ be the history at time step $t$. For any action $a_t \in \mathcal{A}_t$, and future action $a_{t+1}$ selected by the greedy policies $\pi^L, \pi^U$, the following inequalities hold:
\begin{align}
    \RLDBa
    \leq
    r(s_t,a_t)
    +
    \inf_{\PTt \in \UTt}
    \left(
    \widehat{\mathbb{E}}_{s_{t+1}|h_t}^{\PTt}
    \!\left[
    \mathcal{LRDB}_{t+1}^{\pi^L}(h_{t+1})
    \right]
    +
    P^{out} \cdot
    r_{min}\cdot
    (H-1-t)
    \right),
\end{align}

\noindent And:
\begin{align}
    \RUDBa
    \geq
    r(s_t,a_t)
    +
    \inf_{\PTt \in \UTt}
    \left(
    \widehat{\mathbb{E}}_{s_{t+1}|h_t}^{\PTt}
    \!\left[
    \mathcal{URDB}_{t+1}^{\pi^U}(h_{t+1})
    \right]
    +
    P^{out} \cdot
    r_{max}\cdot
    (H-1-t)
    \right).
\end{align}
\end{lemmarestated}

\smallskip
\noindent\hfill\rule{0.5\linewidth}{0.4pt}\hfill
\smallskip

\textbf{Proof of Theorem~\ref{thm:optimal-bounds}.}

% ==================================
% Theorem restatement
% ==================================
\begin{theoremrestated}{thm:optimal-bounds}[Optimal robust value bounds]
Let $h_t$ be an expanded history node at time step $t$, and let $\pi^L$ and $\pi^U$ be the 
lower- and upper-bound policies defined in \eqref{eq:thm_2_pi_L},\eqref{eq:thm_2_pi_U}. Under rectangular transition
uncertainty, the optimal robust value is bounded deterministically by:
% Let $h_t$ be the history at time step $t$. The optimal robust value function at time step $t$ over rectangular uncertainty sets \eqref{eq:rect uncertainty set}, is bounded deterministically by:
% 
\begin{align}
    \RLDB
    \leq
     V_{t,rect}^{*,\mathrm{rob}}(h_t)
    \ \leq \
    \RUDB.
\end{align}

\noindent Subject to the policies: $\pi^U=\{\pi_k^U\}_{k=H-1}^{t}$ and $\pi^L=\{\pi_k^L\}_{k=H-1}^{t}$:
\begin{align}\label{eq:thm_2_pi_L}
    \pi_t^L(h_t) 
    =
    \argmaxover{a_t \in \mathcal{A}_t}
    \Bigl\{
    \RLDBa
    \Bigr\}
    =
    \argmaxover{a_t \in \mathcal{A}_t}
    \Bigl\{
    \widehat Q_{t,rect}^{\pi^L,\widehat{P}_{t}^{T, \mathcal{LRDB}}}(h_{t},a_{t})
    +
    \mathcal{L}_{t,rect}^{\pi^L,\widehat{P}_{t}^{T, \mathcal{LRDB}}}(h_{t},a_{t})
    \Bigr\},
\end{align}
\begin{align} \label{eq:thm_2_pi_U}
    \pi_t^U(h_t) 
    =
    \argmaxover{a_t \in \mathcal{A}_t}
    \Bigl\{
    \RUDBa
    \Bigr\}
    =
    \argmaxover{a_t \in \mathcal{A}_t}
    \Bigl\{
    \widehat Q_{t,rect}^{\pi^U,\widehat{P}_{t}^{T, \mathcal{URDB}}}(h_{t},a_{t})
    +
    \mathcal{U}_{t,rect}^{\pi^U,\widehat{P}_{t}^{T, \mathcal{URDB}}}(h_{t},a_{t})
    \Bigr\},
\end{align}

\end{theoremrestated}

\begin{proof}  
\noindent \\ We prove by induction, that by applying the Bellman optimality operator on upper/lower bounds to the robust value function, we get an upper/lower bounds on the \textbf{optimal} value function.
% ==================================
% Base case
% ==================================
\bigskip \noindent \\ \textbf{Base case} ($t=H-1$): 

\noindent the optimal action-value at the terminal node is:
\begin{align}
    Q_{H-1,rect}^{\mathrm{*,rob}}(h_{H-1},a_{H-1})
    =
    r(s_{H-1},a_{H-1}).
\end{align}

\noindent At the terminal time step ${H-1}$ of the MDP, there is no future transition term,  therefore:
\begin{align}
    \mathcal{U}_{H-1,rect}^{\pi^U} 
    = 
    \mathcal{L}_{H-1,rect}^{\pi^L} 
    = 
    0
\end{align}

\noindent We set the terminal action-value to be the immediate reward:
\begin{align} 
    &\widehat{Q}_{{H-1},rect}^{\mathrm{\pi^L,rob}}(h_{H-1},a_{H-1})
    =
    \widehat{Q}_{{H-1},rect}^{\mathrm{\pi^U,rob}}(h_{H-1},a_{H-1})
    =
    r(s_{H-1},a_{H-1}).
    \\[0.5em]
    &\mathcal{LRDB}_{H-1}^{\pi^L}(h_{H-1},a_{H-1})
    =
    \mathcal{URDB}_{H-1}^{\pi^U}(h_{H-1},a_{H-1})
    =
     r(s_{H-1},a_{H-1}).\label{eq:thm_2_base_case}
\end{align}

\noindent and the base-case for the induction is the best immediate reward:
\begin{align}
    \max_{\substack{a_{H-1} \in \mathcal{A}_{H-1}}}
    \mathcal{LRDB}_{H-1}^{\pi^L}(h_{H-1},a_{H-1})
    \le
    &V_{H-1,rect}^{\mathrm{*,rob}}(h_{H-1})
    \le
    \max_{\substack{a_{H-1} \in \mathcal{A}_{H-1}}}
    \mathcal{URDB}_{H-1}^{\pi^U}(h_{H-1},a_{H-1})
    \\[1em]
    V_{H-1,rect}^{\mathrm{*,rob}}(h_{H-1})
    &\equiv
    \max_{\substack{a_{H-1} \in \mathcal{A}_{H-1}}}
    r(s_{H-1},a_{H-1}).
\end{align}

% ==================================
% Induction hypothesis
% ==================================
\smallskip \noindent \textbf{Induction Hypothesis}: at time step $t+1$, for every $h_{t+1}=(h_t,a_t,s_{t+1})$ with $s_{t+1} \in \Sin(h_{t+1}^-)$ we assume the following holds:
\begin{align}
    \mathcal{LRDB}_{t+1}^{\pi^L}(h_{t+1})
    \leq
     V_{t+1,rect}^{*,\mathrm{rob}}(h_{t+1})
    \ \leq \
    \mathcal{URDB}_{t+1}^{\pi^U}(h_{t+1}).
\end{align}

\noindent This means that at the next depth of the tree $\mathcal{LRDB}_{t+1}^{\pi^L}, \ \mathcal{URDB}_{t+1}^{\pi^U}$ are already  valid lower/upper bounds on the optimal rectangular robust value.

% ==================================
% Induction step
% ==================================
\smallskip \noindent \textbf{Induction Step}: We will now show that the hypothesis holds for time step $t$,  specifically: 
\begin{align}
    \RLDBa
    \le
    Q_{t, rect}^{*,\mathrm{rob}}(h_{t},a_t)
    \le
    \RUDBa
\end{align}

\noindent Writing the the action value in the recursive Bellman form:
\begin{align}
    &Q_{t,\mathrm{rect}}^{*,\mathrm{rob}}(h_t,a_t)
    =
    r(s_t,a_t)
    +
    \inf_{\PTt \in \UTt}
    \mathbb{E}_{s_{t+1}|h_t}^{\PTt}
    \!\left[
    V_{t+1,\mathrm{rect}}^{*,\mathrm{rob}}(h_{t+1})
    \right]
    =
    \\
    &r(s_t,a_t)
    +
    \inf_{\PTt \in \UTt}
    \left(
    \sum_{s_{t+1} \in \Sin}
    \PTt(s_{t+1}|s_t,a_t)
    V_{t+1,\mathrm{rect}}^{*,\mathrm{rob}}(h_{t+1})
    +
    \sum_{s_{t+1} \notin \Sin}
    \PTt(s_{t+1}|s_t,a_t)
    V_{t+1,\mathrm{rect}}^{*,\mathrm{rob}}(h_{t+1})
    \right)
\end{align}

\noindent For simplicity we use the following abbreviation:
% \[
% P^{in}=\sum_{s_{t+1} \in \Sin}
% \PTt(s_{t+1}|s_t,a_t)
% \]
\[
P^{out}=\sum_{s_{t+1} \notin \Sin}
\PTt(s_{t+1}|s_t,a_t)
\]
\noindent by the induction hypothesis at time step $t+1$, and bound the out of support term by $r_{max}/r_{min}\cdot
    (H-1-t)$: 
\begin{align}
    &r(s_t,a_t)
    +
    \inf_{\PTt \in \UTt}
    \left(
    % \sum_{s_{t+1} \in \Sin}
    % \PTt(s_{t+1}|s_t,a_t)\cdot
    \widehat{\mathbb{E}}_{s_{t+1}|h_t}^{\PTt}
    \!\left[
    \mathcal{LRDB}_{t+1}^{\pi^L}(h_{t+1})
    \right]
    +
    % \sum_{s_{t+1} \notin \Sin}
    % \PTt(s_{t+1}|s_t,a_t)
    P^{out} \cdot
    r_{min}\cdot
    (H-1-t)
    \right)
    \leq
    \\[0.5em]
    &Q_{t,\mathrm{rect}}^{*,\mathrm{rob}}(h_t,a_t)
    \leq
    \nonumber \\[0.5em] \nonumber
    &r(s_t,a_t)
    +
    \inf_{\PTt \in \UTt}
    \left(
    % \sum_{s_{t+1} \in \Sin}
    % \PTt(s_{t+1}|s_t,a_t)\cdot
    \widehat{\mathbb{E}}_{s_{t+1}|h_t}^{\PTt}
    \!\left[
    \mathcal{URDB}_{t+1}^{\pi^U}(h_{t+1})
    \right]
    +
    P^{out} \cdot
    r_{max}\cdot
    (H-1-t)
    \right),
\end{align}

\noindent following Lemma \ref{lem:rudb}:
\begin{align}
    \RUDBa
    \geq
    r(s_t,a_t)
    +
    \inf_{\PTt \in \UTt}
    \left(
    \widehat{\mathbb{E}}_{s_{t+1}|h_t}^{\PTt}
    \!\left[
    \mathcal{URDB}_{t+1}^{\pi^U}(h_{t+1})
    \right]
    +
    P^{out} \cdot
    r_{max}\cdot
    (H-1-t)
    \right),
    \\[1em]
    \RLDBa
    \leq
    r(s_t,a_t)
    +
    \inf_{\PTt \in \UTt}
    \left(
    \widehat{\mathbb{E}}_{s_{t+1}|h_t}^{\PTt}
    \!\left[
    \mathcal{LRDB}_{t+1}^{\pi^L}(h_{t+1})
    \right]
    +
    P^{out} \cdot
    r_{min}\cdot
    (H-1-t)
    \right),
\end{align}

\noindent taking maximum over the actions gives:
\begin{align}
    V_{t,rect}^{*,rob}(h_t)
    =
    \max_{\substack{a_t \in \mathcal{A}_t}}
    Q_{t,rect}^{*,rob}(h_t,a_t)
    \le
    \max_{\substack{a_t \in \mathcal{A}_t}}
    \RUDBa
    \triangleq
    \RUDB,
    \\[1em]
    V_{t,rect}^{*,rob}(h_t)
    =
    \max_{\substack{a_t \in \mathcal{A}_t}}
    Q_{t,rect}^{*,rob}(h_t,a_t)
    \geq
    \max_{\substack{a_t \in \mathcal{A}_t}}
    \RLDBa
    \triangleq
    \RLDB,
\end{align}

\noindent which proves the induction step, and the theorem. 

\end{proof}

\smallskip
\noindent\rule{\linewidth}{0.4pt}
\smallskip

%% file: Appendix/Lemma_5_RUDB.tex
\subsection{Proof of Lemma \ref{lem:rudb}}

\textbf{Restating essential definition for the lemma:}

\noindent \\ \textbf{1)} $\mathcal{URDB}/\mathcal{LRDB}$ transition model:
\begin{align}
    \widehat{P}_{t}^{T, \mathcal{URDB}}(\cdot|s_t,a_t)
    =
    \arginfover{\PTht \in \UTht}
    \left \{
    \widehat{\E}_{s_{t+1}|h_t}^{\PTht}
    \Bigl [
    \widehat V_{t+1,rect}^{\pi^U,\widehat{P}_{t+1}^{T, \mathcal{URDB}}}(h_{t+1})
    \Bigr ]
    +
    \mathcal{U}_{t,rect}^{\pi^U, \PTht}(h_{t},a_{t})
    \right \}, \label{eq:lemma_5_PT_U}
    \\[1em]
    \widehat{P}_{t}^{T, \mathcal{LRDB}}(\cdot|s_t,a_t)
    =
    \arginfover{\PTht \in \UTht}
    \left \{
    \widehat{\E}_{s_{t+1}|h_t}^{\PTht}
    \Bigl [
    \widehat V_{t+1,rect}^{\pi^L,\widehat{P}_{t+1}^{T, \mathcal{LRDB}}}(h_{t+1})
    \Bigr ]
    +
    \mathcal{L}_{t,rect}^{\pi^L, \PTht}(h_{t},a_{t})
    \right \},
\end{align}

\noindent \\ \textbf{2)} For simplicity, we denote the probabily mass for the off-support by:
\[
P^{out}=\sum_{s_{t+1} \notin \Sin}
\PTt(s_{t+1}|s_t,a_t)
\]

\small
\noindent\hfill\rule{0.5\linewidth}{0.4pt}\hfill
\smallskip

\textbf{Proof of Theorem~\ref{lem:rudb}.}

% ==================================
% Lemma restated
% ==================================
\begin{lemma}\label{lem:rudb}
Let $h_t$ be the history at time step $t$. For any action $a_t \in \mathcal{A}_t$, and future action $a_{t+1}$ selected by the greedy policies $\pi^L, \pi^U$, the following inequalities hold:
\begin{align}
    \RLDBa
    \leq
    r(s_t,a_t)
    +
    \inf_{\PTt \in \UTt}
    \left(
    \widehat{\mathbb{E}}_{s_{t+1}|h_t}^{\PTt}
    \!\left[
    \mathcal{LRDB}_{t+1}^{\pi^L}(h_{t+1})
    \right]
    +
    P^{out} \cdot
    r_{min}\cdot
    (H-1-t)
    \right),
\end{align}

\noindent And:
\begin{align}
    \RUDBa
    \geq
    r(s_t,a_t)
    +
    \inf_{\PTt \in \UTt}
    \left(
    \widehat{\mathbb{E}}_{s_{t+1}|h_t}^{\PTt}
    \!\left[
    \mathcal{URDB}_{t+1}^{\pi^U}(h_{t+1})
    \right]
    +
    P^{out} \cdot
    r_{max}\cdot
    (H-1-t)
    \right).
\end{align}
\end{lemma}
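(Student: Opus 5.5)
The plan is to unfold the definition \eqref{eq:urdba} of $\RUDBa$ by one step, so that it takes the same ``in-support continuation plus out-of-support remainder'' form as the right-hand side. Then I will compare the two expressions pointwise inside the infimum. I treat the standing case $r_{\min}\le 0\le r_{\max}$ and prove the $\mathcal{URDB}$ inequality; the $\mathcal{LRDB}$ inequality follows by the mirrored argument with $r_{\min}$ in place of $r_{\max}$ and all inequalities reversed.

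\textbf{Step 1 (redistribute the correction term).} By \eqref{eq:action-condition-U} and the recursion for $\delta$,
\begin{align*}
\mathcal{U}_{t,rect}^{\pi^U,\PTht}(h_t,a_t)
= r_{\max}(H-1-t) - r_{\max}\sum_{s_{t+1}\in\Sin}\PTht(s_{t+1})\bigl[1+\delta_{t+1}^{\pi^U}(s_{t+1}\mid h_{t+1})\bigr].
\end{align*}
By Lemma~\ref{lem:Rmap}, the infimum over $\UTht$ in \eqref{eq:urdba} equals an infimum over full rows $\PTt\in\UTt$ with $\PTht=\RT[t](\PTt)$. For a full row, $\sum_{\Sin}\PTt+P^{out}=1$, so I can write $r_{\max}(H-1-t)=\sum_{\Sin}\PTt\, r_{\max}(H-1-t)+P^{out}r_{\max}(H-1-t)$. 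Regrouping gives
\begin{align*}
\RUDBa = r(s_t,a_t)+\inf_{\PTt\in\UTt}\Bigl(\widehat{\mathbb{E}}^{\PTt}_{s_{t+1}|h_t}\bigl[\widehat V_{t+1,rect}^{\pi^U,\widehat P^{T,\mathcal{URDB}}_{t+1}}(h_{t+1})+r_{\max}\bigl(H-2-t-\delta_{t+1}^{\pi^U}(s_{t+1}\mid h_{t+1})\bigr)\bigr]+P^{out}r_{\max}(H-1-t)\Bigr).
\end{align*}
The out-of-support term now matches the target exactly.

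\textbf{Step 2 (pointwise comparison with the next-level bound).} It remains to show, for each sampled $s_{t+1}$,
\begin{align*}
\widehat V_{t+1,rect}^{\pi^U,\widehat P^{T,\mathcal{URDB}}_{t+1}}(h_{t+1})+r_{\max}\bigl(H-2-t-\delta_{t+1}^{\pi^U}\bigr)\ \ge\ \mathcal{URDB}_{t+1}^{\pi^U}(h_{t+1}).
\end{align*}
The right-hand side equals $\mathcal{URDB}_{t+1}^{\pi^U}(h_{t+1},\pi^U_{t+1}(h_{t+1}))$. By \eqref{eq:urdba} at level $t+1$, this is $\widehat V$ plus the correction $r_{\max}(H-2-t-\delta_{t+1}^{\pi^U,\widehat P^{T,\mathcal{URDB}}_{t+1}})$, both evaluated at the joint minimizer $\widehat P^{T,\mathcal{URDB}}_{t+1}$. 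By contrast, $\delta_{t+1}^{\pi^U}$ is the infimum of that in-support mass over all admissible models, so $\delta_{t+1}^{\pi^U}\le\delta_{t+1}^{\pi^U,\widehat P^{T,\mathcal{URDB}}_{t+1}}$. Since $r_{\max}\ge 0$, the displayed inequality follows, and it holds with equality in the projected-value part.

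\textbf{Step 3 (conclude).} The claim then follows by monotonicity of $\inf_{\PTt}$ and because the weights $\PTt(s_{t+1})$ are nonnegative. For $\mathcal{LRDB}$, the same redistribution applies with $r_{\min}$. Step 2 reverses because $r_{\min}\le 0$: multiplying $\delta_{t+1}^{\pi^L}\le\delta_{t+1}^{\pi^L,\widehat P^{T,\mathcal{LRDB}}_{t+1}}$ by $-r_{\min}\ge 0$ makes the correction in $\RLDBa$ no larger than the one inside $\mathcal{LRDB}_{t+1}^{\pi^L}$.

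\textbf{Main obstacle.} I expect Step 2 to be the hardest part, namely reconciling the two different model choices. The $\delta$ recursion uses the mass-minimizing model, whereas the backup uses the joint minimizer of value plus correction. The argument works only because the sign of $r_{\max}$ (respectively $r_{\min}$) makes the separately minimized $\delta$ conservative in the right direction. I would double-check this carefully, together with the Lemma~\ref{lem:Rmap} step that turns the projected infimum into a full-model infimum so that $P^{out}=1-\sum_{\Sin}\PTt$ is well defined.
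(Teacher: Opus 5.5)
Your proposal is correct and follows essentially the same argument as the paper. The paper runs your chain in the opposite direction: it starts from the right-hand side, expands $\mathcal{URDB}_{t+1}^{\pi^U}(h_{t+1})$ as $\widehat V_{t+1}$ plus the correction at the joint minimizer $\widehat P^{T,\mathcal{URDB}}_{t+1}$, and rewrites $P^{out}=1-\sum_{\Sin}P^T_t$; it then uses $\delta_{t+1}^{\pi^U}\le\delta_{t+1}^{\pi^U,\widehat P^{T,\mathcal{URDB}}_{t+1}}$ with $r_{\max}\ge 0$ and Lemma~\ref{lem:Rmap} to reach $\mathcal{URDB}_t^{\pi^U}(h_t,a_t)$, also treating $r_{\max}<0$ (resp.\ $r_{\min}>0$) by switching the infimum in $\delta$ to a supremum, where the two sign reversals cancel and your pointwise comparison still holds.
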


\begin{proof}
\noindent \\We first focus on the \textbf{upper} bound:
\noindent \\By the definition of the $\RUDBa$ \eqref{eq:urdba}, at time step $t+1$:
\begin{align}
    r(s_t,a_t)
    +
    \inf_{\PTt \in \UTt}
    \Bigl(
    \widehat{\mathbb{E}}_{s_{t+1}|h_t}^{\PTt}
    &\!\left[
    \mathcal{URDB}_{t+1}^{\pi^U}(h_{t+1})
    \right]
    +
    P^{out} \cdot
    r_{max}\cdot
    (H-1-t)
    \Bigr)
    =
    \\
     r(s_t,a_t)
    +
    \inf_{\PTt \in \UTt}
    \Bigl(
    \widehat{\mathbb{E}}_{s_{t+1}|h_t}^{\PTt}
    &\!\left[
     \max_{\substack{a_{t+1} \in \mathcal{A}_{t+1}}}
    \Bigl\{
    \mathcal{URDB}_{t+1}^{\pi^U}(h_{t+1},a_{t+1})
    \Bigr\}
    \right]
    +
    P^{out} \cdot
    r_{max}\cdot
    (H\!-\!1\!-\!t)    
    \Bigr)
    =
    \\
    r(s_t,a_t)
    +
    \inf_{\PTt \in \UTt}
    \Bigl(
    \widehat{\mathbb{E}}_{s_{t+1}|h_t}^{\PTt}
    &\!\left[
    \max_{\substack{a_{t+1} \in \mathcal{A}_{t+1}}}
    \Bigl\{
    \widehat Q_{t+1,rect}^{\pi^U,\widehat{P}_{t+1}^{T, \mathcal{URDB}}}(h_{t+1},a_{t+1})
    +
    \mathcal{U}_{t+1,rect}^{\pi^U, \widehat{P}_{t+1}^{T, \mathcal{URDB}}}(h_{t+1},a_{t+1})
    \Bigr\}
    \right]
    \nonumber \\ 
    +
    P^{out} \cdot
    r_{max}\cdot
    (H-1-t)
    \Bigr)
\end{align}

\noindent The action $a_{t+1}$ that maximizes the expression is the same action that comes from the policy $\pi^U$, namely:
\begin{align}
    a_{t+1}^U
    =
    \pi^U_{t+1}(h_{t+1})
    = 
    \argmaxover{a_{t+1} \in \mathcal{A}_t} 
    \Bigl\{
    \widehat Q_{t+1,rect}^{\pi^U,\widehat{P}_{t+1}^{T, \mathcal{URDB}}}(h_{t+1},a_{t+1})
    +
    \mathcal{U}_{t+1,rect}^{\pi^U,\widehat{P}_{t+1}^{T, \mathcal{URDB}}}(h_{t+1},a_{t+1})
    \Bigr\}
\end{align}

\noindent Therefore we replace $a_{t+1}$ with ${\pi_{t+1}^U}$ $\left( \widehat Q(h_{t+1},\pi^U_{t+1}) \rightarrow \widehat{V}(h_{t+1})\right)$:
% 

% \noindent Therefore we replace $a_{t+1}$ with ${\pi_{t+1}^U}$ $\left( \widehat Q(h_{t+1},\pi^U_{t+1}) \rightarrow \widehat{V}(h_{t+1})\right)$, and use the linearity of the expectation:
% % 
% \begin{align} \label{eq:lemma_5.1}
%     r(s_t,a_t)
%     +
%     \inf_{\PTt \in \UTt}
%     \left(
%     \mathbb{E}_{s_{t+1}|h_t}^{\PTt}
%     \!\left[
%     \widehat V_{t+1,rect}^{\pi^U,\widehat{P}_{t+1}^{T, \mathcal{URDB}}}(h_{t+1})
%     \right]
%     +
%     \mathbb{E}_{s_{t+1}|h_t}^{\PTt}
%     \!\left[
%     \mathcal{U}_{t+1,rect}^{\pi^U, \widehat{P}_{t+1}^{T,\mathcal{URDB}}}(h_{t+1},\pi_{t+1}^U)
%     \right]
%     \right).
% \end{align}

\begin{align} \label{eq:lemma_5.1_main}
    r(s_t,a_t)
    +
    \inf_{\PTt \in \UTt}
    \left(
    \widehat{\mathbb{E}}_{s_{t+1}|h_t}^{\PTt}
    \!\left[
    \widehat V_{t+1,rect}^{\pi^U,\widehat{P}_{t+1}^{T, \mathcal{URDB}}}(h_{t+1})
    +
    \mathcal{U}_{t+1,rect}^{\pi^U, \widehat{P}_{t+1}^{T,\mathcal{URDB}}}(h_{t+1},\pi_{t+1}^U)
    \right]
    \right)
    \nonumber\\
    +
    P^{out} \cdot
    r_{max}\cdot
    (H-1-t)
    \Bigr)
    =
\end{align}

\noindent we split the sum to in/out sampled support:
\begin{align}
    r(s_t,a_t)
    &+
    \inf_{\PTt \in \UTt} \Biggl(
    \sum_{s_{t+1} \in \Sin}
    \PTt(s_{t+1}|s_t,a_t)
    \!\left[
    \widehat V_{t+1,rect}^{\pi^U,\widehat{P}_{t+1}^{T, \mathcal{URDB}}}(h_{t+1})
    +
    \mathcal{U}_{t+1,rect}^{\pi^U, \widehat{P}_{t+1}^{T,\mathcal{URDB}}}(h_{t+1},\pi_{t+1}^U)
    \right]
    \nonumber \\
    &+
    \sum_{s_{t+1} \notin \Sin}
    \PTt(s_{t+1}|s_t,a_t)
    \Bigl[
    r_{max}\cdot
    (H-1-t)
    \Bigr]
    \Biggr)
    =
    \\[1em]
    r(s_t,a_t)
    &+
    \inf_{\PTt \in \UTt} \Biggl(
    \sum_{s_{t+1} \in \Sin}
    \PTt(s_{t+1}|s_t,a_t)
    \!\left[
    \widehat V_{t+1,rect}^{\pi^U,\widehat{P}_{t+1}^{T, \mathcal{URDB}}}(h_{t+1})
    +
    r_{max} \cdot
    \left(H-2-t - 
    \delta_{t+1}^{\pi^U, \widehat{P}_{t+1}^{T,\mathcal{URDB}}}\!\!\!\!\!\!\!\!(s_{t+1}|h_{t+1})
    \right)
    \right]
    \nonumber \\ 
    &+
    \Biggl(1-
    \sum_{s_{t+1} \in \Sin}
    \PTt(s_{t+1}|s_t,a_t) \Biggr)\cdot
    \Bigl[
    r_{max}\cdot
    (H-1-t)
    \Bigr]
    \Biggr)
    =
    \\[1em]
    r(s_t,a_t)
    &+
    \inf_{\PTt \in \UTt} \Biggl(
    \sum_{s_{t+1} \in \Sin}
    \PTt(s_{t+1}|s_t,a_t) \cdot
    \widehat V_{t+1,rect}^{\pi^U,\widehat{P}_{t+1}^{T, \mathcal{URDB}}}(h_{t+1})
    \nonumber \\
    &+
    r_{max}
    \Biggl(
    H-1-t
    -\!\!\!
    \sum_{s_{t+1} \in \Sin}\!\!
    \PTt(s_{t+1}|s_t,a_t) \cdot
    \left[
    1+
    \delta_{t+1}^{\pi^U, \widehat{P}_{t+1}^{T,\mathcal{URDB}}}\!\!\!\!\!\!\!\!\!(s_{t+1}|h_{t+1})
    \right]
    \!\Biggr)
    \Biggr)\label{eq:lemma_5.2_deltas} 
    \leq
\end{align}

\noindent as defined in \eqref{eq:thm_1_delta}, $\delta$ uses infimum/suprimum based on the sign of $r_{max}$ :
\begin{align} 
\begin{cases}
    \delta_{t+1}^{\pi^U} \le \delta_{t+1}^{\pi^U, \widehat{P}_{t+1}^{T,\mathcal{URDB}}}, & r_{max}\geq 0, \\[1em]
    \delta_{t+1}^{\pi^U} \geq \delta_{t+1}^{\pi^U, \widehat{P}_{t+1}^{T,\mathcal{URDB}}}, & r_{max}< 0.
    \end{cases} 
\end{align}

\noindent Therefore:
\begin{align}
    r(s_t,a_t)
    &+
    \inf_{\PTt \in \UTt} \Biggl(
    \sum_{s_{t+1} \in \Sin}
    \PTt(s_{t+1}|s_t,a_t) \cdot
    \widehat V_{t+1,rect}^{\pi^U,\widehat{P}_{t+1}^{T, \mathcal{URDB}}}(h_{t+1})
    \nonumber \\
    &+
    r_{max}
    \Bigl(
    H-1-t
    -\!\!\!
    \sum_{s_{t+1} \in \Sin}\!\!
    \PTt(s_{t+1}|s_t,a_t) \cdot
    \left[
    1+
    \delta_{t+1}^{\pi^U}(s_{t+1}|h_{t+1})
    \right]
    \!\Bigr)
    \Biggr)
    =
    \\[1em]
    r(s_t,a_t)
    &+
    \inf_{\PTt \in \UTt} \Biggl(
    \sum_{s_{t+1} \in \Sin}
    \PTt(s_{t+1}|s_t,a_t)\cdot
    \widehat V_{t+1,rect}^{\pi^U,\widehat{P}_{t+1}^{T, \mathcal{URDB}}}(h_{t+1})
    +
    \mathcal{U}_{t,rect}^{\pi^U, \PTt}(h_{t},a_{t})
    \Biggr)
    =
\end{align}

\noindent \\ Using Lemma \ref{lem:Rmap} together with the restriction map $\RTt$ from \ref{RTmap}, that project the full uncertainty set onto the projected uncertainty set, we get the following equalities:
\begin{align}
    &r(s_t,a_t)
    +
    \inf_{\PTt \in \UTt} \left(
    \widehat{\mathbb{E}}_{s_{t+1}|h_t}^{\RT\left(\PTt\right)}
    \left[
    \widehat V_{t+1,rect}^{\pi^U,\widehat{P}_{t+1}^{T, \mathcal{URDB}}}(h_{t+1})
    \right]
    +
    \mathcal{U}_{t,rect}^{\pi^U, \RT\left(\PTt\right)}(h_{t},a_{t})
    \right)
    =
    \\[1em]
    &r(s_t,a_t)
    +
    \inf_{\PTht \in \UTht} \left(
    \widehat{\mathbb{E}}_{s_{t+1}|h_t}^{\PTht}
    \left[
    \widehat V_{t+1,rect}^{\pi^U,\widehat{P}_{t+1}^{T, \mathcal{URDB}}}(h_{t+1})
    \right]
    +
    \mathcal{U}_{t,rect}^{\pi^U, \PTht}(h_{t},a_{t})
    \right)
    \triangleq
    \\[1em]
    &\RUDBa,
\end{align}

\noindent which proves the upper bound.

\noindent The \textbf{lower} bound follows the same proof structure, with the following modifications:\\[1em]
\begin{enumerate}
    \item Replace $\mathcal{URDB}$ with $\mathcal{LRDB}$.
    \item Replace $\mathcal{U}_{t+1}$ with $\mathcal{L}_{t+1}$.
    \item Replace $\pi^U$ with $\pi^L$.
    \item Replace $r_{max}$ with $r_{min}$.
    \item Change the inequality directions.
    \item Analogue justification to the transition after \eqref{eq:lemma_5.2_deltas}:
    \begin{align} 
    \begin{cases}
        \delta_{t+1}^{\pi^L} \geq \delta_{t+1}^{\pi^L, \widehat{P}_{t+1}^{T,\mathcal{LRDB}}}, & r_{min}\geq 0, \\[1em]
        \delta_{t+1}^{\pi^L} \leq \delta_{t+1}^{\pi^L, \widehat{P}_{t+1}^{T,\mathcal{LRDB}}}, & r_{min}< 0.
        \end{cases} 
    \end{align}
\end{enumerate}

\end{proof}

\smallskip
\noindent\rule{\linewidth}{0.4pt}
\smallskip

%% file: Appendix/corollary_2_convergance_optimal.tex
\subsection{Proof of Corollary \ref{cor:optimal-convergence} (Asymptotic optimality)}\label{cor:cor_2_proof} 

\begin{corollaryrestated}{cor:optimal-convergence}[Asymptotic optimality]
As the sampled state support expands to the full state space at every node,
$
\Sin(h_k^-)\uparrow\Sfull,
\ k=t,\ldots,H,
$
both deterministic bounds converge to the optimal rectangular robust value:
\begin{align}
\mathcal{LRDB}_{t}^{\pi^L}
\left(h_t,\pi_t^L(h_t)\right)
&\longrightarrow
V_{t,\mathrm{rect}}^{*,\mathrm{rob}}(h_t),
\\
\mathcal{URDB}_{t}^{\pi^U}
\left(h_t,\pi_t^U(h_t)\right)
&\longrightarrow
V_{t,\mathrm{rect}}^{*,\mathrm{rob}}(h_t).
\end{align}
\end{corollaryrestated}

% ==================================
% Proof
% ==================================

\begin{proof}
\noindent \\ We define the following expressions, essential for the proof:
\begin{align} \label{eq:cor_2_epsilon}
    \epsilon
    \triangleq
    \max_{k\in[t,H-1]}
    \sup_{\substack{\pi_k,a_k,h_k,\\\PThk \in \UTh[k]}}
    \max \left\{
   \left| \mathcal{U}_{k,rect}^{\pi,\PThk}(h_{k},a_{k}) \right |,
   \left| \mathcal{L}_{k,rect}^{\pi,\PThk}(h_{k},a_{k}) \right |
    \right\}
\end{align}

% \noindent For every $k\in[t,H-1]$:
% 
\begin{align}\label{eq:cor_2_G}
    G_k
    \triangleq
    \sup_{h_k}
    \left|
    \mathcal{URDB}_{k}^{\pi^U}(h_k)
    -
    \mathcal{LRDB}_{k}^{\pi^L}(h_k)
    \right|,
    \quad
    k\in[t,H-1]
\end{align}

% \noindent and for $k=H-1$:
% % 
% \begin{align}
%     G_{H-1}
%     =
%     0
% \end{align}

\noindent We will show the following for the proof:
\begin{enumerate}
    \item $G_t \leq 4\cdot(H-1-t)\cdot\epsilon$
    \item $\epsilon \xrightarrow{\substack{\Sin \rightarrow \Sfull}} 0$
    \item $ V_{t,rect}^{*,\mathrm{rob}}(h_t) \in [\RLDB,\RUDB]$, and by claim 1 and 2 $G_t \xrightarrow{\substack{\Sin \rightarrow \Sfull}} 0$, which proves the corollary.
\end{enumerate}

\noindent \textbf{Step 1:}
\noindent \\ base case $ G_{H-1}=0$ by equations \ref{eq:thm_2_base_case}.

\noindent For some  $k\in[t,H-2]$:
\begin{align}
    \left|
    \mathcal{URDB}_{k}^{\pi^U}(h_k)
    -
    \mathcal{LRDB}_{k}^{\pi^L}(h_k)
    \right|
    \leq
    \max_{a_k}
    \left|
    \mathcal{URDB}_{k}^{\pi^U}(h_k,a_k)
    -
    \mathcal{LRDB}_{k}^{\pi^L}(h_k,a_k)
    \right|
\end{align}

\noindent we continue with arbitrary $a_k$ for now:
\begin{align}
    \max_{a_k}
    \Biggl|
    \inf_{\PThk \in \UTh[k]}
    \Biggl(
    \widehat{\mathbb{E}}_{s_{k}|h_k}^{\PThk}
    \!\left[
    \widehat V_{k+1,rect}^{\pi^U,\widehat{P}_{k+1}^{T, \mathcal{URDB}}}(h_{k+1}) 
    \right]
    +
    \mathcal{U}_{k,rect}^{\pi^U,\PThk}(h_{k},a_{k})
    \Biggr)
    \nonumber \\
    -
    \inf_{\PThk \in \UTh[k]}
    \Biggl(
    \widehat{\mathbb{E}}_{s_{k}|h_k}^{\PThk}
    \!\left[
    \widehat V_{k+1,rect}^{\pi^L,\widehat{P}_{k+1}^{T, \mathcal{LRDB}}}(h_{k+1}) 
    \right]
    +
    \mathcal{L}_{k,rect}^{\pi^L,\PThk}(h_{k},a_{k})
    \Biggr)
    \Biggr|
    \leq
\end{align}

\begin{align}
    \max_{a_k}
    \sup_{\PThk \in \UTh[k]}
     \Biggl|
    \Biggl(
    \widehat{\mathbb{E}}_{s_{k}|h_k}^{\PThk}
    \!\left[
    \widehat V_{k+1,rect}^{\pi^U,\widehat{P}_{k+1}^{T, \mathcal{URDB}}}(h_{k+1}) 
    \right]
    +
    \mathcal{U}_{k,rect}^{\pi^U,\PThk}(h_{k},a_{k})
    \nonumber \\
    -
    \widehat{\mathbb{E}}_{s_{k}|h_k}^{\PThk}
    \!\left[
    \widehat V_{k+1,rect}^{\pi^L,\widehat{P}_{k+1}^{T, \mathcal{LRDB}}}(h_{k+1}) 
    \right]
    +
    \mathcal{L}_{k,rect}^{\pi^L,\PThk}(h_{k},a_{k})
    \Biggr)
    \Biggr|
    \leq
\end{align}

\noindent using linearity of expectation, and definition of \eqref{eq:cor_2_epsilon}:

\begin{align}
    \sup_{\PThk \in \UTh[k]}
     \Biggl|
    \Biggl(
    \widehat{\mathbb{E}}_{s_{k}|h_k}^{\PThk}
    \!\left[
    \widehat V_{k+1,rect}^{\pi^U,\widehat{P}_{k+1}^{T, \mathcal{URDB}}}(h_{k+1}) 
    -
    \widehat V_{k+1,rect}^{\pi^L,\widehat{P}_{k+1}^{T, \mathcal{LRDB}}}(h_{k+1}) 
    \right]
    \Biggr)
    \Biggr|
    +
    2\epsilon
    \leq
\end{align}

\noindent Using equations \ref{eq:urdba}, \ref{eq:lrdba}, and we also get rid of the $sup$:
\begin{align}
    % \sup_{\PThk \in \UTh[k]}
     \Biggl|
    \Bigl(
    \mathcal{URDB}_{k+1}^{\pi^U}(h_{k+1})
    -
    \mathcal{U}_{k+1,rect}^{\pi^U, \widehat{P}_{k+1}^{T, \mathcal{URDB}}}(h_{k+1}, \pi_{k+1})
    \bigr)
    -
    \Bigl(
    \mathcal{LRDB}_{k+1}^{\pi^L}(h_{k+1})
    \nonumber \\
    -
    \mathcal{L}_{k+1,rect}^{\pi^L, \widehat{P}_{k+1}^{T, \mathcal{LRDB}}}(h_{k+1}, \pi_{k+1})
    \Bigr)
    \Biggr|
    +
    2\epsilon
    \leq
\end{align}

\begin{align}
    % \sup_{\PThk \in \UTh[k]}
    &\left|
    \mathcal{URDB}_{k+1}^{\pi^U}(h_{k+1})
    -
    \mathcal{LRDB}_{k+1}^{\pi^L}(h_{k+1})
    \right|
    +
    4\epsilon
    \leq
    \\[0.5em]
    \sup_{h_{k+1}}
    &\left|
    \mathcal{URDB}_{k+1}^{\pi^U}(h_{k+1})
    -
    \mathcal{LRDB}_{k+1}^{\pi^L}(h_{k+1})
    \right|
    +
    4\epsilon
    =
    \\[0.5em]
    &G_{k+1}
    +
    4\epsilon.
\end{align}

\noindent Unrolling $H-1$ times from $G_{H-1}$ gives:
\begin{align}
    G_t
    \leq
    4\cdot(H-1-t)\cdot\epsilon
\end{align}

\noindent \textbf{Step 2:}

\noindent Restating the rectangular upper and lower bounds \eqref{eq:thm_1_rect_bounds} in theorem \ref{thm:bounds}, with the fixed model version:
\begin{align}
    &\mathcal{U}_{k,rect}^{\pi, \PThk}(h_k)
    \triangleq
    r_{max}
    \Bigl (
    H -1- k
    -
    \delta_{k}^{\pi, \PThk}(s_k|h_k)
    \Bigr )
    \\[1em]
    &\mathcal{L}_{k,rect}^{\pi, \PThk}(h_k)
    \triangleq
    r_{min}
    \Bigl (
    H -1- k
    -
    \delta_{k}^{\pi, \PThk}(s_k|h_k)
    \Bigr )
    \\[1em]
    &\delta_{k}^{\pi, \PThk}(s_k|h_k) 
    =
    % \inf_{\PThk \in \UTh[k]}
    % \left(
    \sum _{s_{k+1} \in \Sin(h_{k+1}^-)}
    \PThk\bigl(s_{k+1}\mid s_{k},\pi_{k}(h_k)\bigr)\,
    \left [
    1 
    +
    \delta_{k+1}^{\pi}(s_{k+1})
    \right ]
    % \right)
    \\[1em]
    &\delta_{H-1}^{\pi, \PThk}(s_{H-1})
    \equiv
    0
\end{align}

\noindent As the sampled state support increases:
\begin{align}
    \sum_{s_{k:H-1} \in \Sin(h_{H-1}^-)}
    \prod_{i=k}^{H-1}
    \PTk[i]\bigl(s_{i+1}\mid s_i,\pi_i(h_i)\bigr)
    \xrightarrow{
    \substack{
    \Sin \rightarrow \Sfull
    }}
    1,
\end{align}

\noindent therefore:
\begin{align}
    \delta_{k}^{\pi, \PThk}(s_k|h_k)
    \xrightarrow{
    \substack{
    \Sin \rightarrow \Sfull
    }}
    H - 1- k,
\end{align}

\noindent which induces:
\begin{align}
    \mathcal{U}_{k,rect}^{\pi, \PThk}(h_k)
    \xrightarrow{
    \substack{
    \Sin \rightarrow \Sfull
    }}
    0,
    \\
    \mathcal{L}_{k,rect}^{\pi, \PThk}(h_k)
    \xrightarrow{
    \substack{
    \Sin \rightarrow \Sfull
    }}
    0.
\end{align}

\noindent Therefore:
\[
\epsilon \xrightarrow{\substack{\Sin \rightarrow \Sfull}} 0
\]

\noindent \textbf{Step 3:}

\begin{align}
    0
    \leq
    \RUDB
    -
    V_{t,rect}^{*,\mathrm{rob}}(h_t)
    \leq
    G_t,
    \\
    0
    \leq
    V_{t,rect}^{*,\mathrm{rob}}(h_t)
    -
    \RLDB
    \leq
    G_t,
\end{align}

\noindent we have shown that $G_t \xrightarrow{\substack{\Sin \rightarrow \Sfull}} 0$, which then concludes the proof.
\end{proof}

%% file: Appendix/complexity.tex
\section{Robust Backup under TV Uncertainty}
\label{app:tv-robust-backup}

For the TV uncertainty sets used in our experiments, the inner optimization
in \eqref{eq:RLDBa}, \eqref{eq:RUDBa}, can be solved directly on the sampled successor support.
Let
\[
\mathring p_{\mathrm{out}}
=
1-\sum_{s'\in\Sin(h,a)}
\mathring P^T(s'\mid s,a)
\]
denote the nominal probability mass outside the currently sampled support.
Since projection restricts the transition row without renormalization, all
unsampled successors can be represented by a single aggregate outside state.
Consequently, the projected uncertainty set is equivalently represented by
a TV ball over the $|\Sin(h,a)|+1$ dimensional vector consisting of the
sampled successor probabilities and the aggregate outside mass. The aggregate outside state represents the omitted probability mass, with its
contribution determined by the corresponding projected backup.

The resulting inner problem is a linear optimization over a TV ball. We solve
it by the standard mass-transfer procedure: probability mass of at most
$\rho$ is shifted from successors with the largest continuation values toward
those with the smallest values, subject to the simplex constraints. Sorting
the continuation values gives a per-backup complexity of
$O(|\Sin(h,a)|\log |\Sin(h,a)|)$ and $O(|\Sin(h,a)|)$ memory. Thus, the
robust backup depends on the currently sampled successor support rather than
on the cardinality of the full state space.

%% file: Appendix/experiments.tex
\section{Experiments} \label{exp:experiments}

% ==================================
% Frozen lake fomain
% ==================================
\subsection{Frozen Lake Domain}
\label{app:frozen-lake}

We evaluate the planners on a family of Frozen Lake domains represented by
two-dimensional grids containing traversable frozen cells, terminal holes, a
start cell, and a goal cell. Each grid cell corresponds to an MDP state, and
we additionally introduce an absorbing terminal state $s_{\mathrm{done}}$.
The action space is
$\mathcal A=\{\textsc{left},\textsc{down},\textsc{right},\textsc{up}\}$,
corresponding to the four cardinal movement directions. Reaching either a
hole or the goal terminates the episode, after which the process transitions
to $s_{\mathrm{done}}$. The objective is to reach the goal while avoiding
holes. Different experiments vary the grid size, obstacle layout, start and
goal locations, and planning horizon; their specific configurations are
summarized in Section~\ref{app:experiment-configurations}.

\begin{figure}[h]
    \centering
    \includegraphics[width=0.3\linewidth]{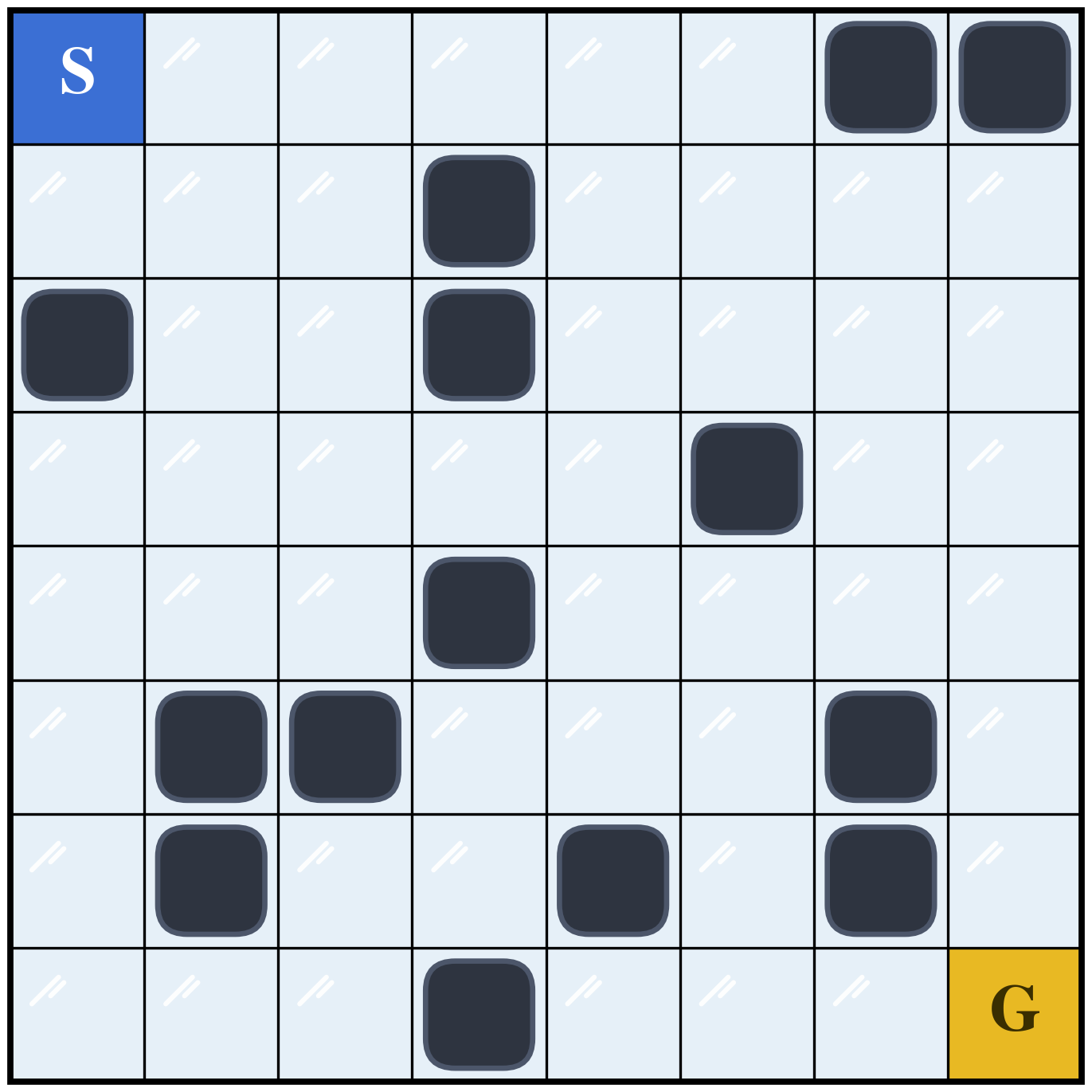}
    \caption{Frozen Lake Map Example}
    \label{fig:Frozen Lake Map Example}
\end{figure}

% ==================================
% Ring Kernel
% ==================================
\subsection{Ring Transition Model}
\label{app:ring-kernel}

The standard Frozen Lake dynamics have a small successor support. To evaluate
planning with substantially larger transition supports, we use a stochastic
\emph{ring transition model}. For a state $s$, let
\[
\mathcal R_d(s)
=
\left\{
s' : \|s'-s\|_\infty=d
\right\},
\qquad d=1,\ldots,k,
\]
denote the Chebyshev ring of radius $d$ around $s$. For an interior state,
$\mathcal R_d(s)$ contains $8d$ cells.

Under action $a$, the intended neighboring cell receives probability
$1-p_{\mathrm{slip}}$. The remaining probability $p_{\mathrm{slip}}$ is
distributed over the $k$ rings. The fraction assigned to ring $d$ is
\[
q_d
=
\frac{\beta^{d-1}}
{\sum_{\ell=1}^{k}\beta^{\ell-1}},
\]
and this mass is distributed uniformly among the $8d$ cells in that ring.
Thus, for an interior state, each cell in ring $d$ receives slip probability
\[
p_{\mathrm{slip}}\frac{q_d}{8d}.
\]
The intended successor is itself contained in the first ring and therefore
receives both its ring contribution and the additional
$1-p_{\mathrm{slip}}$ intended-transition probability.

Near the grid boundary, ring locations falling outside the grid are clipped
to the nearest valid cell, and probability masses mapping to the same cell
are accumulated. Holes block transitions: if a transition crosses an adjacent
hole, the transition is redirected to that hole, which terminates the episode.

\begin{figure}[h]
    \centering
    \includegraphics[width=\linewidth]{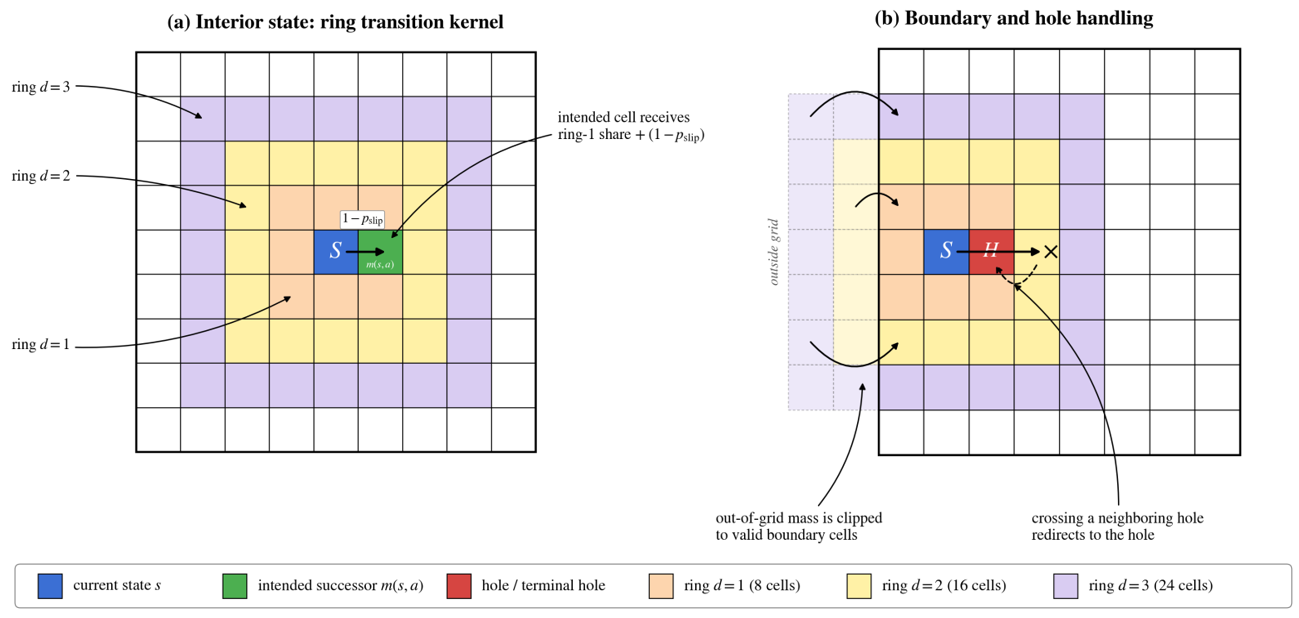}
    \caption{Illustration of the ring transition model used in the Frozen Lake experiments.}
    \label{fig:ring-transition-model}
\end{figure}

% ==================================
% Transition uncertainty model
% ==================================
\subsection{Transition Uncertainty Model}
\label{app:transition-uncertainty}

To account for model uncertainty, we construct a rectangular uncertainty set
independently around each nominal transition row. Specifically, for every
state-action pair $(s,a)$,
\begin{equation}
\mathcal P^T(s,a)
=
\left\{
P^T(\cdot\mid s,a):
\operatorname{TV}
\!\left(
P^T(\cdot\mid s,a),
\mathring{P}^{T}(\cdot\mid s,a)
\right)
\le \rho
\right\},
\label{eq:exp-tv-set}
\end{equation}
where $\rho$ is the uncertainty radius and
\[
\operatorname{TV}(p,q)
=
\frac{1}{2}\|p-q\|_1 .
\]

The uncertainty set preserves the support of the nominal transition model:
states assigned zero probability by
$\mathring{P}^{T}(\cdot\mid s,a)$ cannot receive probability mass under an
admissible model. Thus, robustness accounts for uncertainty in the transition
probabilities over the ring-kernel support without introducing additional
successor states.

In experiments involving model mismatch, the environment is executed using a
slip probability $p_{\mathrm{true}}$, whereas the planner constructs its
nominal model using $p_{\mathrm{slip}}$. The uncertainty radius $\rho$ is
chosen to contain the true model within the corresponding TV uncertainty
set. The exact values used in each experiment are reported in
Section~\ref{app:experiment-configurations}.

% ==================================
% Reward model
% ==================================
% \subsection{Reward Model}
% \label{app:reward-model}

% We use the distance-shaped reward of ~\cite{shazman2026online}.
% For a nonterminal free cell $s$, let $m(s)$ denote its Manhattan distance to
% the goal, computed directly on the grid without treating holes as obstacles.
% The reward is
% \begin{equation}
% r(s)
% =
% w
% \left(
% \frac{1}{1+m(s)}
% \right)^{p},
% \label{eq:frozen-lake-reward}
% \end{equation}
% with $w=0.1$ and $p=3$. Terminal rewards override this shaping: the goal
% receives reward $3$, while holes and the absorbing state
% $s_{\mathrm{done}}$ receive reward $0$. Hence, $r\in[0,3]$.

% The cubic inverse-distance shaping decays rapidly away from the goal, so most
% free cells receive only a small intermediate reward and the task remains
% predominantly driven by reaching the terminal goal.

\subsection{Reward Model}
\label{app:reward-model}

We use two distance-based reward models across the experiments.

\paragraph{Inverse-Manhattan reward.}
For the main planner comparison and Ablations~1--2, we use the
distance-shaped reward of Shazman et al.~\cite{shazman2026online}. For a
nonterminal free cell $s$, let $m(s)$ denote its Manhattan distance to the
goal, computed directly on the grid without treating holes as obstacles:
\begin{equation}
r_{\mathrm{IM}}(s)
=
w
\left(
\frac{1}{1+m(s)}
\right)^p .
\label{eq:frozen-lake-reward}
\end{equation}
We use $w=0.1$ and $p=3$, with goal reward $3$ and rewards $0$ for holes
and $s_{\mathrm{done}}$. The cubic inverse-distance shaping decays rapidly
away from the goal, so most free cells receive only a small intermediate
reward and the task remains predominantly driven by reaching the terminal
goal.

\paragraph{Positive-distance reward.}
For several experiments we instead use the shortest-path distance $d(s)$
from $s$ to the goal, computed while routing around holes. Let
$d_{\max}$ be the largest such distance on the map and define
\[
c(s)=\frac{d_{\max}-d(s)}{d_{\max}}\in[0,1].
\]
The positive-distance reward is
\[
r_{+}(s)=w d_{\max} c(s)^p.
\]
Thus, free-cell rewards increase as the agent approaches the goal, with the
distance computed according to the traversable structure of the map rather
than directly through holes. Goal and hole rewards are specified separately
and override the shaping value, while $r(s_{\mathrm{done}})=0$.

% \paragraph{Negative-distance reward.}
% We also consider the corresponding negative shaping
% \[
% r_{-}(s)
% =
% -w d_{\max}\left(1-c(s)^p\right).
% \]
% Here, states farther from the goal receive increasingly negative reward,
% while states near the goal approach zero before the terminal reward override.
% This preserves the same distance ordering while providing experiments with
% negative rewards, allowing the two-sided certificates to be evaluated when
% $r_{\min}<0$. As above, goal and hole rewards override the shaping value and
% $r(s_{\mathrm{done}})=0$.

When a fixed reward span is used, we set
$w=\mathrm{span}/d_{\max}$, making the free-cell reward range independent of
the map diameter. The specific reward model and parameters used by each
experiment are summarized in Section~\ref{app:experiment-configurations}.

% ==================================
% Planning protocol
% ==================================
\subsection{Planning Protocol}
\label{app:planning-protocol}

All online planning experiments use a receding-horizon protocol. At each
environment step, the planner constructs a finite search tree of depth $H$
rooted at the current state and performs planning using its available
computation budget. The selected root action is then executed in the
environment, the resulting next state is observed, and a new depth-$H$
planning problem is initialized from that state. Thus, $H$ denotes the
planner's lookahead horizon rather than the episode length.

For model-mismatch experiments, planning is performed using the nominal
transition model $\mathring{P}^{T}$ parameterized by $p_{\mathrm{slip}}$,
whereas executed transitions are sampled from the environment model
parameterized by $p_{\mathrm{true}}$. Episodes terminate upon reaching either
the goal or a hole.

% ==================================
% Experiment configuration
% ==================================
% \subsection{Experiment Configurations}
% \label{app:experiment-configurations}

% Table~\ref{tab:experiment-configurations} summarizes the parameters used in
% the main experiments. Unless stated otherwise, all experiments use the domain,
% transition kernel, uncertainty model, reward function, and planning protocol
% described above.

% \begin{table}[h]
% \centering
% \small
% \renewcommand{\arraystretch}{1.2}
% \setlength{\tabcolsep}{4pt}
% \caption{Configuration of the experiments reported in the main paper.}
% \label{tab:experiment-configurations}
% \begin{tabular}{|l|c|c|c|c|c|c|c|c|c|}
% \hline
% \textbf{Experiment}
% & \textbf{Grid}
% & \textbf{$H$}
% & \textbf{$k$}
% & \textbf{$\beta$}
% & \textbf{$p_{\mathrm{slip}}$}
% & \textbf{$p_{\mathrm{true}}$}
% & \textbf{$\rho$}
% & \textbf{Budget}
% & \textbf{Runs} \\
% \hline

% Fig.~1a
% & $8\times8$
% & $6$
% & $1$
% & $1$
% & $0.3$
% & --
% & $0.1$
% & --
% & $50$ \\
% \hline

% Fig.~1b
% & $8\times8$
% & $3$
% & $2$
% & $1$
% & $0.2$
% & --
% & $0.1$
% & varied
% & $20$ \\
% \hline

% Fig.~2
% & $16\times16$
% & $\{2,5,11\}$
% & $8$
% & $1$
% & $0.01$--$0.6$
% & $0.05$
% & $0.04$--$0.54$
% & $20 - 200$
% & $20$ \\
% \hline

% Fig.~3
% & $\!\!(10 \!\times\! 10)\!\! \rightarrow \!\!(10^3 \!\times\! 10^3)\!\!$
% & $5$
% & $\frac{N-1}{2}$
% & $1$
% & $0.3$
% & --
% & $0.1$
% & $80$
% & $5$ \\
% \hline

% \end{tabular}
% \end{table}

\subsection{Experiment Configurations}
\label{app:experiment-configurations}

Table~\ref{tab:experiment-configurations} summarizes the parameters used in
the main experiments. Unless stated otherwise, all experiments use the domain,
transition kernel, uncertainty model, and planning protocol described above.

\begin{table}[h]
\centering
\small
\renewcommand{\arraystretch}{1.2}
\setlength{\tabcolsep}{6pt}
\caption{Configuration of the experiments reported in the main paper.}
\label{tab:experiment-configurations}

\begin{tabular}{|l|c|c|c|c|}
\hline
\textbf{Parameter}
& \textbf{Fig.~1a}
& \textbf{Fig.~1b}
& \textbf{Fig.~2}
& \textbf{Fig.~3} \\
\hline

Grid
& $8\times8$
& $8\times8$
& $16\times16$
& $(10\times10)$--$(10^3\times10^3)$ \\
\hline

$H$
& $6$
& $3$
& $\{2,5,11\}$
& $5$ \\
\hline

$k$
& $1$
& $2$
& $8$
& $\frac{n-1}{2}$ \\
\hline

$\beta$
& $1$
& $1$
& $1$
& $0.39$--$0.99$ \\
\hline

$p_{\mathrm{slip}}$
& $0.3$
& $0.2$
& $0.01$--$0.6$
& $0.3$ \\
\hline

$p_{\mathrm{true}}$
& --
& --
& $0.05$
& -- \\
\hline

$\rho$
& $0.1$
& $0.1$
& $0.04$--$0.54$
& $0.1$ \\
\hline

Budget
& --
& varied
& $20$--$200$
& $80$ \\
\hline

Runs
& $50$
& $20$
& $20$
& $5$ \\
\hline

Reward
& $r_{+}$
& $r_{+}$
& $r_{\mathrm{IM}}$
& $r_{+}$ \\
\hline

\end{tabular}
\end{table}

Figure~1a uses $r_{+}$ with goal reward $1$, hole reward $-1$,
$\mathrm{span}=1$, and $p=0.5$; Figure~1b uses the same specification with
$p=1$. Figure~2 uses $r_{\mathrm{IM}}$ with goal reward $3$, hole reward $0$,
$w=0.1$, and $p=3$. Figure~3 uses $r_{+}$ with goal reward $1$, hole reward
$0$, and $p=1$.

% ==================================
% Ablation 1
% ==================================
\subsubsection{Ablation 1: Robustness to Environment Slipperiness and Model Mismatch}
\label{app:ablation-slipperiness}

We first examine how planner performance changes as the true environment
becomes increasingly slippery, and how this interacts with model mismatch.
We fix the planning depth to $3$ and use the $16\times16$ Frozen Lake domain.
For the main experiment, the ring radius is $k=8$ with $\beta=1$.
We vary the true environment slip probability as
$p_{\mathrm{true}}\in\{0.01,0.1,0.2,0.3\}$, corresponding to the four
columns of Figure~\ref{fig:ablation-slipperiness-k8}. Within each column,
$p_{\mathrm{true}}$ is fixed while the planner-assumed slip probability
$p_{\mathrm{slip}}$ is varied from $0.01$ to $0.6$. Hence, moving across
columns makes the true environment increasingly difficult, whereas moving
along the $x$-axis changes the model assumed by the planner while leaving the
true environment unchanged. For each pair, the TV uncertainty radius is set
to the smallest value that contains the true transition model. Each configuration is evaluated over $500$ independent seeds.

\begin{figure}[h]
    \centering
    \includegraphics[width=\linewidth]{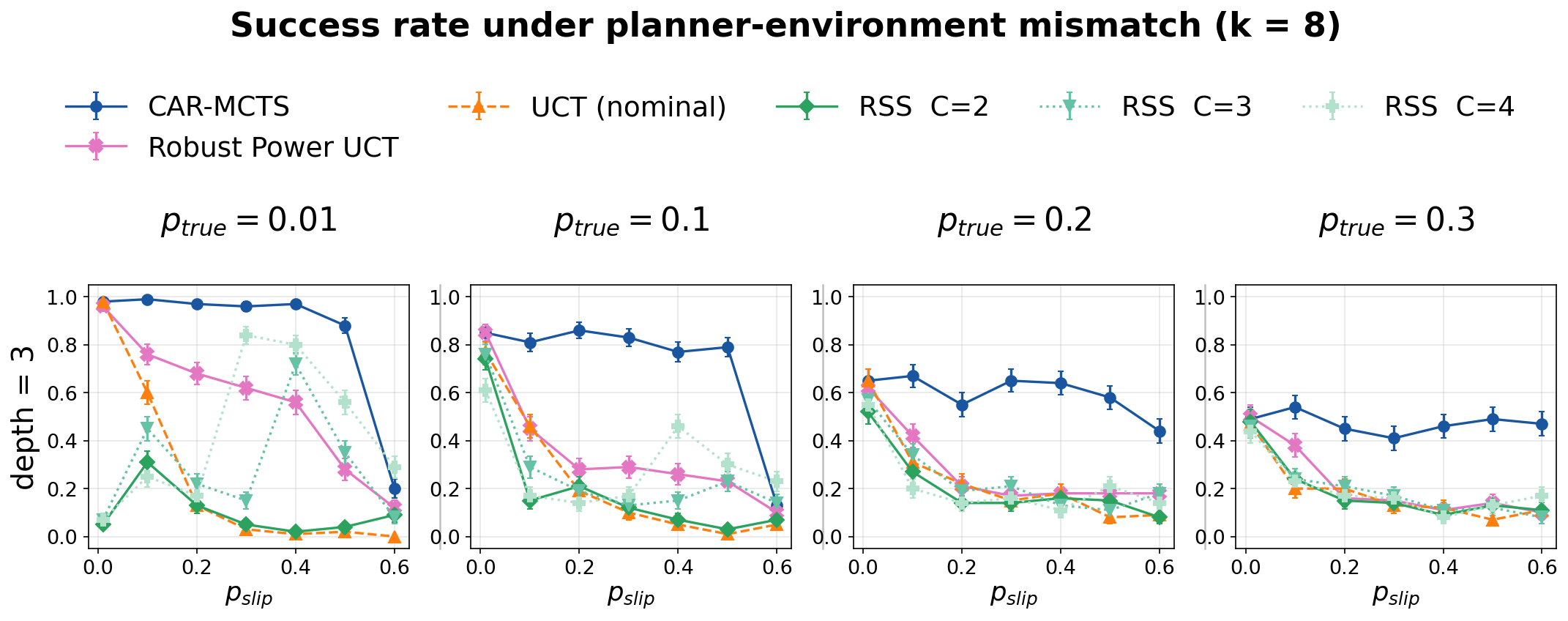}
    \caption{
    Success rate at planning depth $3$ as the true environment becomes more
    slippery and the planner-assumed transition model varies. Columns
    correspond to $p_{\mathrm{true}}\in\{0.01,0.1,0.2,0.3\}$, while the
    $x$-axis shows the planner-assumed $p_{\mathrm{slip}}$. The ring radius is
    $k=8$. $500$ seeds.
    }
    \label{fig:ablation-slipperiness-k8}
\end{figure}

As $p_{\mathrm{true}}$ increases, success rates decrease for all methods,
reflecting the increasing difficulty of controlling the agent when the true
environment becomes more slippery. CAR-MCTS degrades substantially more
gracefully than the baselines and achieves the highest or near-highest success
rate across the tested environment difficulties. Importantly, this behavior
persists as $p_{\mathrm{slip}}$ moves away from $p_{\mathrm{true}}$: within
each column the true environment is unchanged, yet CAR-MCTS remains
comparatively stable over increasing mismatch between the planning and
true environment models.

The RSS variants are not restricted to the same sampling budget as the MCTS
methods. CAR-MCTS, nominal UCT, and Robust Power UCT use $20$ simulations per
decision, corresponding to $60$ transition samples at depth $3$. In contrast,
the recursive sparse-sampling expansion requires substantially more transition
samples as $C$ increases:

\begin{table}[h]
\centering
\small
\renewcommand{\arraystretch}{1.15}
\begin{tabular}{|l|c|c|}
\hline
\textbf{Method} & \textbf{Samples per decision} & \textbf{Relative to MCTS} \\
\hline
MCTS methods & $60$ & $1\times$ \\
\hline
RSS $C=2$ & $584$ & $9.7\times$ \\
\hline
RSS $C=3$ & $1884$ & $31.4\times$ \\
\hline
RSS $C=4$ & $4368$ & $72.8\times$ \\
\hline
\end{tabular}
\caption{Transition samples per decision at planning depth $3$.}
\label{tab:ablation-rss-budget}
\end{table}

Despite this substantially larger sampling budget, RSS remains consistently below CAR-MCTS across the tested configurations. With $C=2$, RSS is often comparable to nominal UCT and in
several configurations performs worse. Thus, although RSS explicitly performs
robust planning, this does not translate into a consistent empirical robustness
advantage over the nominal planner under the tested model mismatch. Increasing
the branching factor to $C=3$ or $C=4$ generally improves RSS, but the gains
are inconsistent; even with tens of times more transition samples, its success
rate frequently remains close to that of nominal UCT.

\paragraph{Effect of transition-support width.}
To test whether the behavior of RSS is primarily caused by the broad
successor support of the $k=8$ kernel, we repeat the same experiment with
$k=1$, while keeping the remaining settings unchanged, shown in Figure \ref{fig:ablation-slipperiness-k1}.

\begin{figure}[h]
    \centering
    \includegraphics[width=0.7\linewidth]{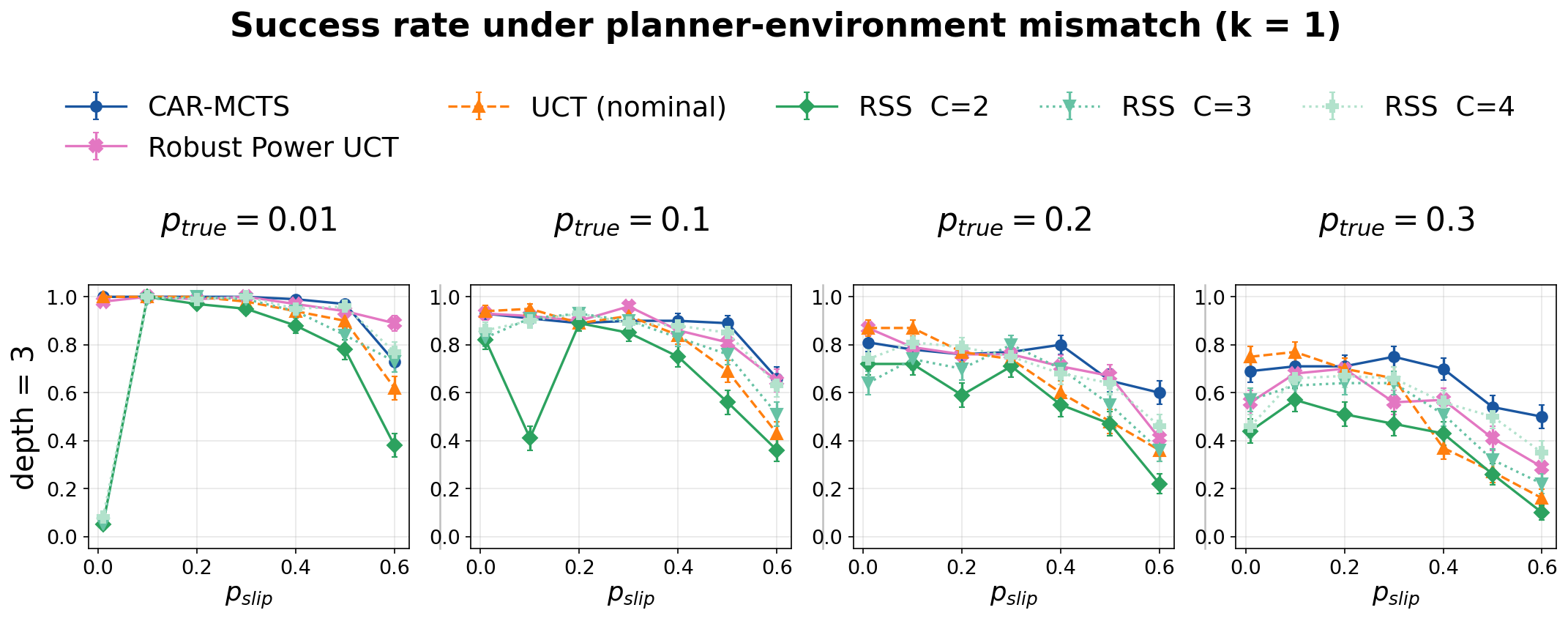}
    \caption{
    The same experiment as Figure~\ref{fig:ablation-slipperiness-k8}, with
    the ring radius reduced from $k=8$ to $k=1$.
    }
    \label{fig:ablation-slipperiness-k1}
\end{figure}

Reducing the ring radius substantially improves several competing methods,
showing that their performance is sensitive to the number of feasible next
states. In contrast, CAR-MCTS changes remarkably little between $k=8$ and
$k=1$. This is an important strength of the proposed approach: CAR-MCTS
maintains similar performance even when the transition support grows
dramatically, whereas the competing sampling-based planners benefit
substantially from the easier, narrower-support setting. RSS also improves in
several configurations, but $C=2$ remains frequently comparable to or worse
than nominal UCT, while the larger-$C$ variants still do not consistently
outperform it. Hence, the behavior observed for RSS at $k=8$ cannot be
explained solely by insufficient sampling of a broad successor distribution.

Overall, the ablation highlights three complementary strengths of CAR-MCTS.
First, its performance degrades more gracefully as the true environment becomes
increasingly slippery. Second, for a fixed true environment, it remains robust
as the planner's assumed transition model moves away from the true dynamics.
Third, its performance is far less sensitive to the width of the transition
support: increasing the number of feasible next states has only a limited
effect on CAR-MCTS, while substantially degrading the competing methods.
Together, these results show that CAR-MCTS remains effective under both model
uncertainty and broad transition dynamics, without requiring the large increase
in successor samples used by RSS.

% ==================================
% Ablation 2
% ==================================
\subsubsection{Ablation 2: Scalability with Planning Horizon}
\label{app:ablation-horizon}

We next isolate the effect of the planning horizon on online computation.
All environment and planner parameters are held fixed while varying
$H\in\{1,\ldots,10\}$. Experiments use a $100\times100$ Frozen Lake domain
with hole density $0.15$, ring radius $k=8$, and $\beta=1$. The planner uses
$p_{\mathrm{slip}}=0.3$, while the true execution dynamics use
$p_{\mathrm{true}}=0.1$, giving $\rho=0.197$. CAR-MCTS and Robust Power UCT
use $20$ MCTS simulations per decision, and results are averaged over
$20$ independent seeds.

\begin{figure}[h]
    \centering
    \includegraphics[width=0.6\linewidth]{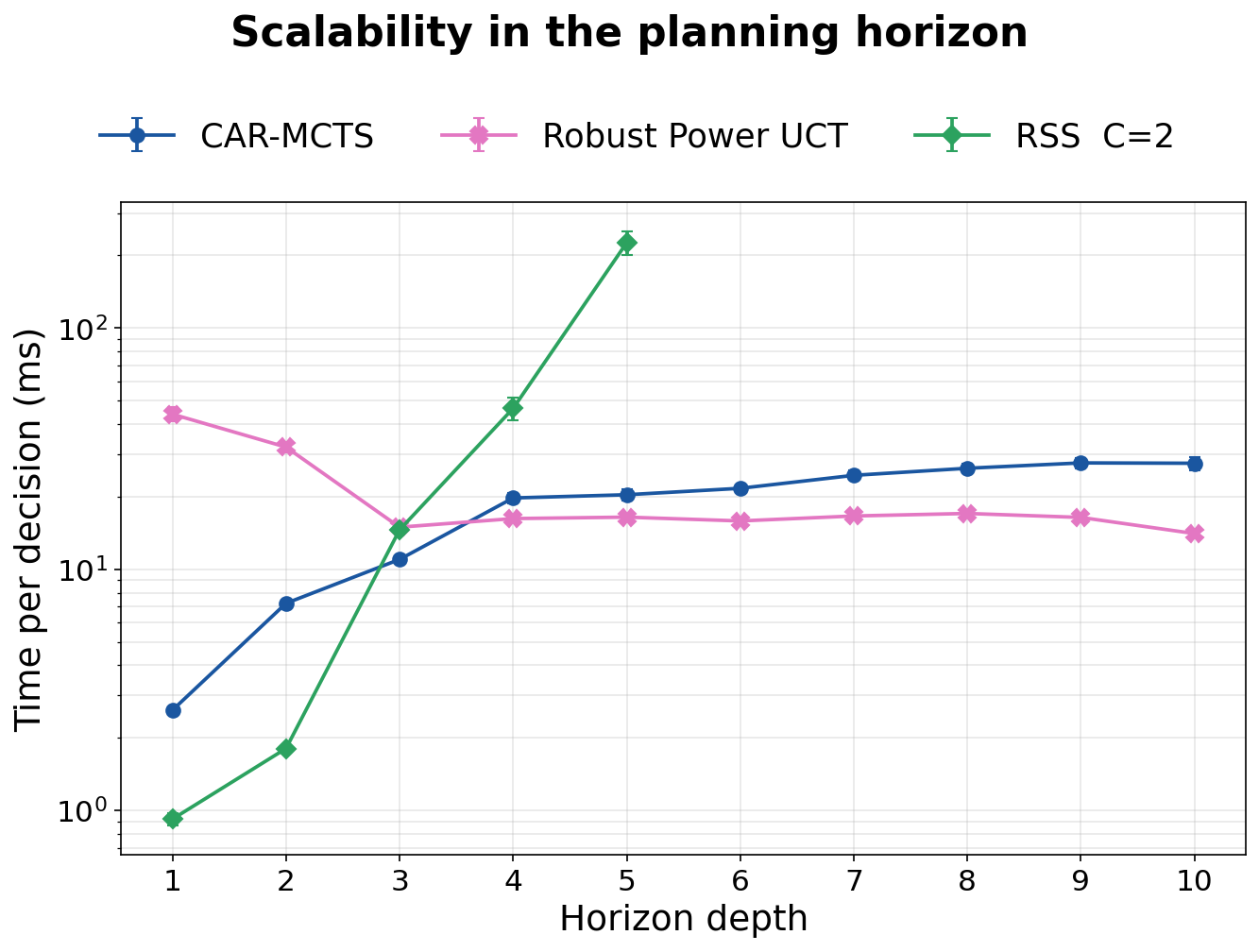}
    \caption{
    Decision time as a function of planning depth. CAR-MCTS and Robust Power
    UCT use $20$ simulations per decision, while RSS uses branching factor
    $C=2$. A $20$-second computation cap is imposed per decision; RSS reaches
    this limit at depth $6$, and deeper configurations are therefore not evaluated.
    }
    \label{fig:ablation-horizon}
\end{figure}

Figure~\ref{fig:ablation-horizon} shows a clear difference in scalability
with planning depth. CAR-MCTS remains within the tens-of-milliseconds regime
throughout the entire sweep, with decision time increasing only gradually as
the depth grows. Robust Power UCT similarly remains approximately constant
over the tested range. In contrast, RSS is initially the fastest method at
small depths, but its computational cost grows rapidly with each additional
planning level, increasing from below $1$ ms at depth $1$ to more than
$200$ ms at depth $5$. We impose a $20$-second computation cap per decision;
at depth $6$, RSS reaches this limit before completing the planning step.
Consequently, deeper RSS configurations are not evaluated.

The difference stems from the underlying search structure. RSS recursively
samples $C$ successor states for every action at each level of the tree, so
its computational cost grows rapidly with depth. In contrast, CAR-MCTS uses
a fixed simulation budget: increasing the planning depth lengthens the sampled
trajectories and their backups, but does not require recursively expanding all
sampled successor combinations. Consequently, CAR-MCTS supports substantially
deeper lookahead while retaining practical online decision times.

% ==================================
% Ablation 3
% ==================================
\subsubsection{Ablation 3: True Optimal-Action Selection}
\label{app:ablation-optimal-action}

Finally, we evaluate the quality of the action selected by each planner
directly, rather than through the stochastic outcome of an episode. For each
planning problem, we compute the optimal action offline using the true
transition model and measure the fraction of decisions in which each planner
selects this action. Importantly, the true model is used only as an evaluation
oracle and is not available to any of the planners.

We use a $4\times4$ Frozen Lake domain with ring radius $k=1$, search depth
$3$, and $20$ simulations for the MCTS-based methods. RSS uses
$C=2$, and each configuration is evaluated over $100$ independent seeds.
As in Ablation $1$ ~\ref{app:ablation-slipperiness}, the columns vary the true environment slip probability
$p_{\mathrm{true}}\in\{0.01,0.1,0.2,0.3\}$, while the $x$-axis varies the
planner-assumed $p_{\mathrm{slip}}$.

\begin{figure}[h]
    \centering
    \includegraphics[width=\linewidth]{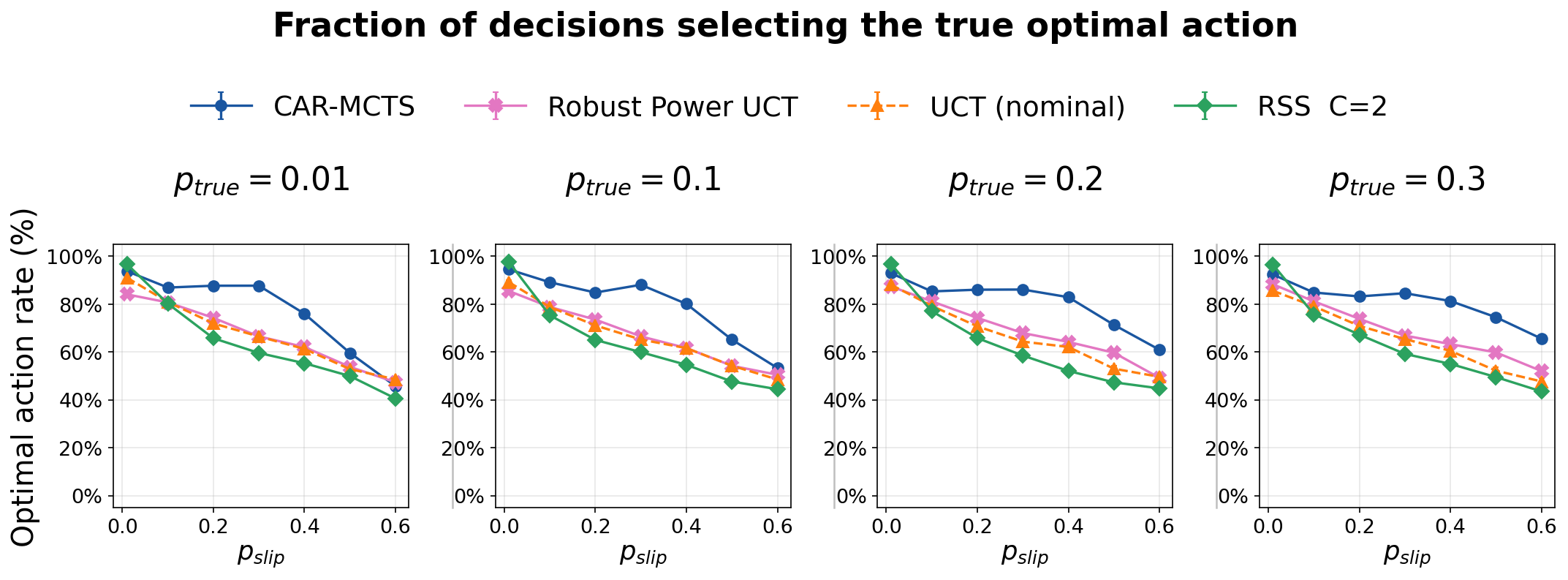}
    \caption{
    Fraction of decisions selecting the action that is optimal under the true
    transition model. Columns correspond to different values of
    $p_{\mathrm{true}}$, while the $x$-axis varies the planner-assumed
    $p_{\mathrm{slip}}$. The true model is used only for evaluation.
    }
    \label{fig:ablation-optimal-action}
\end{figure}

Figure~\ref{fig:ablation-optimal-action} shows that CAR-MCTS selects the true
optimal action more reliably than the competing planners over nearly the
entire sweep. When the true environment is only mildly slippery, all methods
perform well when the planning model is accurate, but their behavior diverges
as the assumed model moves away from the true dynamics. CAR-MCTS degrades more
gradually and maintains a substantially higher optimal-action selection rate
under increasing model mismatch.

The difference becomes particularly pronounced as the true environment itself
becomes more slippery. For $p_{\mathrm{true}}=0.2$ and $0.3$, CAR-MCTS
maintains a clear advantage over a broad range of $p_{\mathrm{slip}}$ values,
showing that it continues to identify the correct decision even when both the
environment is more difficult and the planner model is inaccurate. Robust
Power UCT generally improves over nominal UCT in parts of the sweep, but does
not approach the consistency of CAR-MCTS. RSS with $C=2$ is typically
comparable to or below nominal UCT, indicating that its robust formulation
does not translate into a consistent advantage in identifying the true
optimal action under the tested planning budget.

This experiment complements the episode-level results of Ablation~1 \ref{app:ablation-slipperiness}. Higher success rates can in principle result partly from favorable stochastic
transitions after an action is selected; here, the evaluation isolates the
planning decision itself. The results therefore show that CAR-MCTS's advantage
is not limited to better realized outcomes: it more frequently selects the
action that is optimal under the true environment dynamics. Together with the
previous ablations, this demonstrates stronger decision quality under model
mismatch, reduced sensitivity to broad transition support, and practical
scalability with planning depth.

% ==================================
% Ablation 4
% ==================================
\subsubsection{Ablation 4: Certification and Pruning in Practice}
\label{app:ablation-certification}

Finally, we examine how frequently the deterministic bounds used by CAR-MCTS
lead to root-action certification and branch pruning in practice. We evaluate
three grid sizes, $4\times4$, $8\times8$, and $16\times16$, while varying
$p_{\mathrm{slip}}$ and the planning budget over
$\{100,200,500,1000\}$ simulations. We report the fraction of decisions for
which the root action is certified and the fraction of branches pruned during
search.

\begin{figure}[h]
    \centering
    \includegraphics[width=0.8\linewidth]{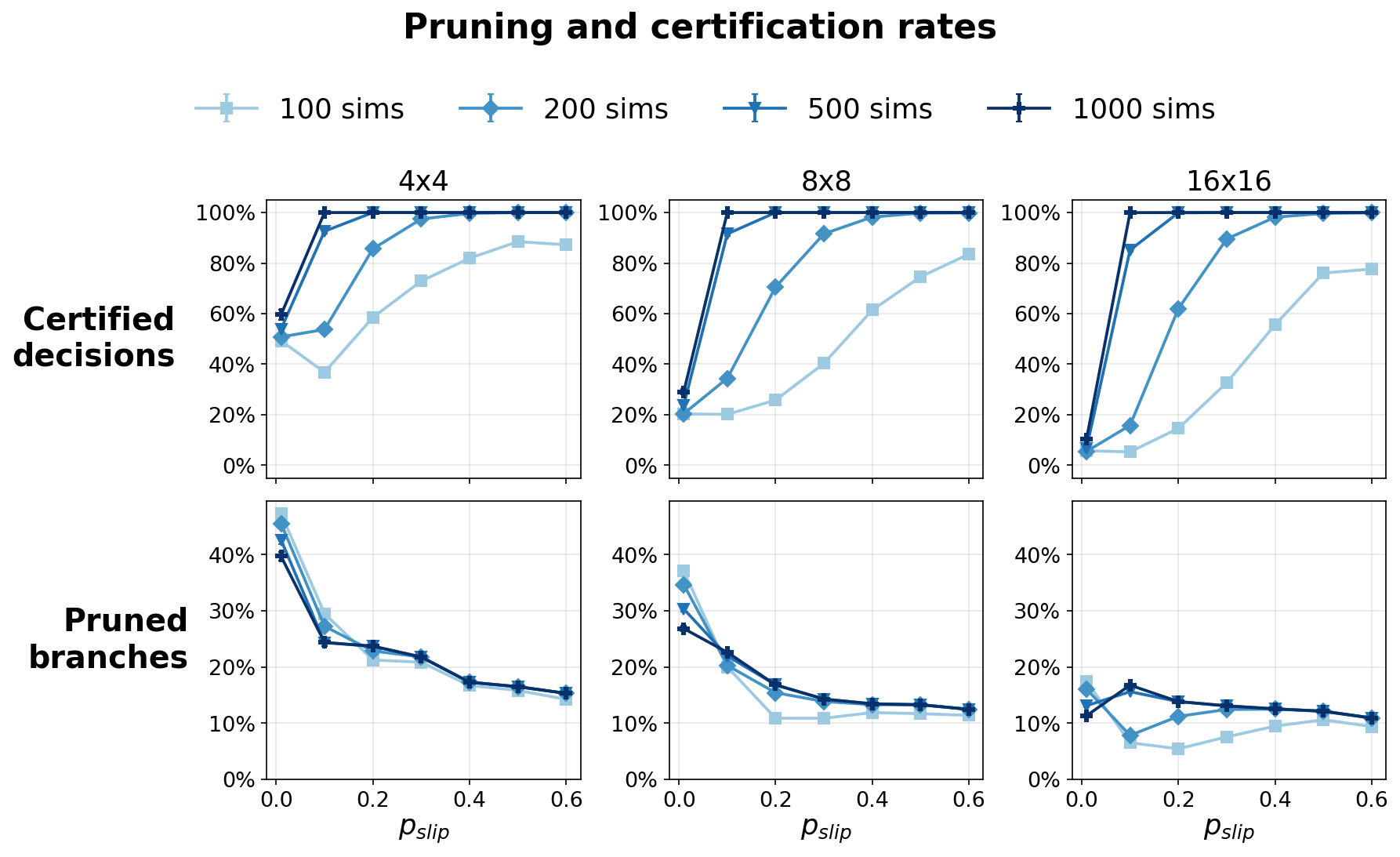}
    \caption{
    Root-action certification rate (top) and branch-pruning rate (bottom)
    for different grid sizes, planning budgets, and values of
    $p_{\mathrm{slip}}$.
    }
    \label{fig:ablation-certification}
\end{figure}

Figure~\ref{fig:ablation-certification} shows that root certification occurs
frequently in practice and increases strongly with the available planning
budget. Even with relatively small budgets of $100$--$200$ simulations, CAR-MCTS
certifies a substantial fraction of root decisions, with certification rates
often around $50\%$ or higher depending on the configuration. As the budget
increases, the deterministic intervals tighten further and the certification
rate rises sharply. At $500$ and $1000$ simulations, CAR-MCTS certifies the root decision in
nearly all runs over a large portion of the tested configurations.
This demonstrates that the certificates are not only
theoretical guarantees, but can become sufficiently tight during online
planning to provide an effective early-stopping criterion.

The lower row shows that the same bounds also eliminate a meaningful fraction of suboptimal branches during search. Interestingly, the measured pruning rate tends to decrease when certification becomes frequent. This is consistent with the two mechanisms playing complementary roles: while the root decision remains unresolved, continued sampling contracts the action-value bounds and enables dominated branches to be pruned; once the root action is certified, planning terminates and no further pruning is needed.

Overall, the ablation shows that CAR-MCTS uses its deterministic certificates
in two complementary ways to reduce online computation: locally, by pruning
branches that can no longer contain the optimal action, and globally, by
terminating planning once the root action is certified. The high certification
rates achieved with larger simulation budgets show that both mechanisms are
actively useful in practice rather than serving only as theoretical
guarantees.